\documentclass{article}
\usepackage{etoolbox}

\usepackage{graphicx}

\usepackage[parfill]{parskip}
\usepackage{amssymb,amsmath,amsthm,bm}
\usepackage{mathtools}
\usepackage[dvipsnames,table]{xcolor}
\usepackage{thmtools}
\usepackage{adjustbox}
\usepackage{arydshln}
\usepackage{colortbl}
\usepackage{booktabs}
\usepackage{multirow}
\usepackage{natbib}
\usepackage{authblk}
\PassOptionsToPackage{round, compress}{natbib}
            
\usepackage[parfill]{parskip}

\newlength{\defbaselineskip}
\RequirePackage[T1]{fontenc}

\renewcommand{\sfdefault}{phv}

\usepackage{bm}
\usepackage[inline]{enumitem}
\usepackage{booktabs}
\usepackage{subcaption}
\usepackage{float}
\usepackage{multirow}
\usepackage{wrapfig,lipsum}
\usepackage{algorithm,algorithmicx,algpseudocode}
\usepackage{nicematrix}
\usepackage{nicefrac}
\usepackage{tikz}
\usetikzlibrary{positioning, arrows.meta, shapes.geometric, calc, backgrounds, fit}

\usepackage[colorlinks=true,linkcolor=blue,citecolor=blue,urlcolor=black]{hyperref}
\usepackage[capitalise,noabbrev]{cleveref}

\usepackage{import}

\usepackage[utf8]{inputenc}
\usepackage[T1]{fontenc}
\usepackage{csquotes}
\usepackage[osf,sc]{mathpazo}
\RequirePackage[scaled=0.90]{helvet}
\RequirePackage[scaled=0.85]{beramono}
\RequirePackage{textcomp}

\usepackage{siunitx}
\usepackage[normalem]{ulem}
\usepackage{microtype}
\robustify\bfseries
\robustify\uline
\newrobustcmd\Bf{\DeclareFontSeriesDefault[rm]{bf}{b}\bfseries}

\definecolor{addgreen}{rgb}{0.9,0.0,0.0}

\newcommand{\1}{\bm{1}}

\newcommand{\defeq}{\coloneqq}

\newcommand{\norm}[1]{\lVert {#1} \rVert}

\newcommand{\diag}{\mathop{\mathrm{diag}}}

\newcommand{\lrangle}[1]{\langle#1\rangle}

\newcommand{\lrbp}[1]{\!\big(#1\big)\!}

\newcommand{\normf}[1]{\left\| #1 \right\|_{\mathrm{F}}}

\newcommand{\sphere}[1]{\gS^{#1}}

\def\eqref#1{equation~\ref{#1}}

\def\1#1{\bm{1}_{#1}}

\def\eps{{\epsilon}}

\def\vone{{\bm{1}}}

\def\va{{\bm{a}}}
\def\vb{{\bm{b}}}
\def\vc{{\bm{c}}}

\def\ve{{\bm{e}}}

\def\vk{{\bm{k}}}

\def\vo{{\bm{o}}}

\def\vq{{\bm{q}}}

\def\vt{{\bm{t}}}
\def\vu{{\bm{u}}}
\def\vv{{\bm{v}}}
\def\vw{{\bm{w}}}
\def\vx{{\bm{x}}}

\def\vy{{\bm{y}}}
\def\vz{{\bm{z}}}

\def\vpsi{{\bm{\psi}}}
\def\vgamma{{\bm{\gamma}}}
\def\hvq{\hat{\bm{\vq}}}
\def\hvk{\hat{\bm{\vk}}}

\def\mA{{\bm{A}}}
\def\mB{{\bm{B}}}

\def\mD{{\bm{D}}}

\def\mI{{\bm{I}}}

\def\mK{{\bm{K}}}
\def\mL{{\bm{L}}}
\def\mM{{\bm{M}}}
\def\mN{{\bm{N}}}
\def\mO{{\bm{O}}}
\def\mP{{\bm{P}}}
\def\mQ{{\bm{Q}}}

\def\mS{{\bm{S}}}
\def\mT{{\bm{T}}}
\def\mU{{\bm{U}}}
\def\mV{{\bm{V}}}

\def\mZ{{\bm{Z}}}

\DeclareMathAlphabet{\mathsfit}{\encodingdefault}{\sfdefault}{m}{sl}
\SetMathAlphabet{\mathsfit}{bold}{\encodingdefault}{\sfdefault}{bx}{n}

\def\gH{{\mathcal{H}}}

\def\gK{{\mathcal{K}}}
\def\gL{{\mathcal{L}}}

\def\gN{{\mathcal{N}}}
\def\gO{{\mathcal{O}}}

\def\gS{{\mathcal{S}}}
\def\gT{{\mathcal{T}}}

\def\sS{{\mathbb{S}}}

\newcommand{\E}{\mathbb{E}}

\newcommand{\R}{\mathbb{R}}

\DeclareMathOperator{\Tr}{Tr}

\newcommand{\dd}{\mathrm{d}}

\newcommand{\seql}{T}

\newtheorem{theorem}{Theorem}
\newtheorem{proposition}[theorem]{Proposition}
\newtheorem{lemma}[theorem]{Lemma}

\theoremstyle{definition}

\newtheorem{assumption}{Assumption}
\crefname{assumption}{Assumption}{Assumptions}
\Crefname{assumption}{Assumption}{Assumptions}

\usetikzlibrary{positioning, arrows.meta, shapes.geometric, calc, backgrounds, fit}

\title{Kernelized Linear Attention:\\Breaking the Capacity Wall with Symmetric Cones}
\author[ ]{Ayoub Ghriss\,\thanks{Corresponding author.}}
\author[ ]{Sourav Chakraborty}

\affil[ ]{\normalsize Department of Computer Science, University of Colorado Boulder}
\affil[ ]{\texttt{\{ayoub.ghriss, sourav.chakraborty\}@colorado.edu}}

\date{}

\hypersetup{
  pdftitle={Kernelized Linear Attention: Breaking the Capacity Wall with Symmetric Cones},
  pdfauthor={Ayoub Ghriss and Sourav Chakraborty},
  pdfsubject={Machine learning, linear attention, and associative recall},
  pdfkeywords={linear attention, associative recall, spherical packing, symmetric cones, positive semidefinite features, GPU kernels}
}

\begin{document}
\maketitle


\begin{abstract}
Linear attention promises constant-time recurrent inference but degrades sharply on associative recall.
We formulate attention recall as a spherical-packing problem and introduce Kernelized Linear Attention Activations (KATA),
a framework whose feature maps are derived from first principles by certifying nonnegative attention weights through a self-dual homogeneous cone.
Building on this observation, we show that rank-one positive semi-definite (PSD) features offer a favorable capacity--interference tradeoff.
KATA recovers a parameter-free convex output gate and characterizes associative capacity through the Welch interference floor.
For tolerances above this floor, KATA enlarges the state without adding parameters and admits spherical codes with exponentially many keys in the projection dimension.
We implement KATA as fused Triton kernels\footnote{\href{https://github.com/ayghri/kata}{\color{blue}\texttt{https://github.com/ayghri/kata}}} at two operating points: a flash-attention-style forward up to ${\sim}1.6\times$ FlashAttention-2 throughput, and an exact $O(T)$ chunked-state form that reaches ${\sim}11\times$ FlashAttention-2 forward throughput at $131$k tokens.
An associative scan of the first-order feature lowers the inter-chunk recurrence depth to $O(\log(T/C))$ for chunk size $C$ and averages ${\sim}2.4\times$ the throughput of a matched sequential linear-attention baseline.
On long-range MQAR and repeated-key overwrite, several KATA variants outperform Gated DeltaNet, with parameter counts and state sizes reported alongside accuracy.
Induction preserves near-perfect recall, while kernel benchmarks show that the maps can be implemented efficiently.
KATA retains $0.985$ MQAR at a $16\times$ out-of-distribution length, approaching the softmax with roughly one quarter of the KV-cache entries.
Experiments on 340M-parameter LLMs reveal a feature-dependent fluency trade-off and clarify how positional embeddings, delta rules, and decay gates interact with feature geometry.
\end{abstract}

\section{Introduction} \label{sec:intro}
Transformer architectures~\citep{vaswani17attention} owe their dominance to softmax attention, whose exponential kernel
supports near-perfect associative recall. The price is the $\gO(\seql^2)$ training cost and the
$\gO(\seql)$ KV cache at sequence length $\seql$, which has motivated a sustained body of
linear-time alternatives and fixed-size recurrent
states~\citep{katharopoulos20transformers,choromanski21performers,peng21rfa,sun23retentive,gu24mamba,dao24mamba2}.
Linear attention replaces the infinite feature map of $\exp(\lrangle{\vq,\vk})$ with a finite
$\psi:\R^d\to\R^{n_\psi}$ and exposes the recurrent update $\mS_t=\mS_{t-1}+\psi(\vk_t)\vv_t^\top$. The
price of constant state is \emph{memory collision}: many key--value bindings must share a single
fixed state, and retrieval degrades sharply on associative-recall tasks at long
contexts~\citep{schlag21linear,jelassi24repeat,du25mom}.

Recent gated linear RNNs, including DeltaNet~\citep{yang24deltanet},
Mamba2~\citep{dao24mamba2}, and Gated DeltaNet~\citep{yang25gateddeltanet}, close part of the gap
with learned forget gates, unnormalized linear writes, and content-addressed erase operations. Yet
the geometry driving their feature maps remains opaque: what limits recall capacity at a fixed
state size, and which post-Transformer designs operate below that limit? Must feature maps be
nonnegative? Which finite maps preserve the isotropy of the softmax? Without principled answers,
new linear attention architectures sit one ablation away from the existing menagerie of variants.

\paragraph{Central finding.} Memory collision is, at heart, the saturation of a spherical-packing
problem. Once nonnegative attention weights are certified by a self-dual homogeneous cone and the
latent geometry is required to be isometrically invariant, the Koecher--Vinberg
classification~\citep{koecher57positivitatsbereiche,vinberg63homogeneous,faraut94analysis}
organizes the admissible feature geometries into irreducible factors. We study the ordinary real
families: the positive orthant, the Lorentz cone, and the symmetric positive semidefinite (PSD) cone.
Their feature maps form the basis of our \textbf{KATA} (Kernelized Linear Attention Activations)
framework. The denominator of normalized linear attention emerges natively as a parameter-free,
token-conditioned convex output gate, so part of the structure typically introduced through explicit
gating is already inherent to the normalized formulation. The choice of cone then determines the
interference geometry and, through it, the associative capacity.

To compare these geometries, we develop a power signal-to-noise ratio (pSNR) analysis based on
the optimal interference attainable within each cone. Applying the same analysis to an exponential
readout also yields a capacity law for softmax at arbitrary temperature. We support these predictions
with efficient Triton implementations across a wide range of context lengths and with pretraining
experiments on 340M-parameter language models. Together, the theory and experiments clarify how
feature geometry interacts with positional embeddings, gating, the delta rule, and softmax attention.

\paragraph{Working on the unit sphere.} Throughout, we analyze query and key directions on the
unit sphere. Explicit $\ell_2$ normalization enforces this constraint directly. Under RMSNorm, the
per-channel gains can instead be absorbed into a learned diagonal bilinear form and an
inverse-temperature scale, leaving normalized directions in a lifted representation
(\Cref{app:rms-temperature}). This setting covers the query--key normalization used in
Qwen3~\citep{yang25qwen3}, Gemma~3~\citep{kamath25gemma3}, and
MiniMax-class~\citep{chen26minimax} models, as well as DeltaNet~\citep{yang24deltanet} and Gated
DeltaNet~\citep{yang25gateddeltanet}. The capacity bounds therefore describe the normalized address
geometry used by these architectures.

\paragraph{Contributions.}
\begin{enumerate}
  \item \textbf{A cone classification for nonnegative linear attention.} We characterize the
        ordinary real symmetric-cone feature geometries considered here under nonnegative attention
        weights, isometric invariance, and homogeneous latent geometry.
  \item \textbf{A capacity theory for associative recall.} Casting recall as discrete spherical
        packing, we prove the Lorentz Rankin wall (\Cref{thm:cap-lor}) and dimension-dependent packing bounds for PSD rank-one rays
        (\Cref{thm:cap-spd}), give explicit finite dictionaries from mutually unbiased
        bases and DeVore frames, and derive an optimal-packing pSNR ladder whose geometric argument
        also applies to temperature-scaled softmax attention.
  \item \textbf{Hardware-aligned kernels.} We release fused Triton kernels for the gate-free,
        delta-rule-free KATA-M$g$ recurrence at two operating points. Its quadratic
        $\gO(\seql^2)$ forward reaches up to ${\sim}1.6\times$ FlashAttention-2
        throughput~\citep{dao23fa2} and remains competitive on the training step. In the
        linear-state $\gO(\seql)$ form, KATA-M2 reaches parity with its quadratic counterpart near
        $16$k tokens, is faster by $32$k, and reaches ${\sim}11\times$ FlashAttention-2's forward
        throughput at $131$k; KATA-M1 is faster by $128$k. For the linear feature, an associative scan
        lowers the inter-chunk depth to $\gO(\log N_C)$ for
        $N_C\defeq\lceil\seql/C\rceil$ chunks and averages ${\sim}2.4\times$ the throughput
        of a matched sequential linear-attention chunk.
  \item \textbf{Experimental evidence across scales.} In the Zoology
        playground~\citep{arora23zoology}, we evaluate multi-query associative recall, repeated-key
        overwrite, and induction under matched training configurations, reporting both parameter counts and state sizes. KATA-M1 retains
        $0.985$ MQAR accuracy at $16\times$ the training context with roughly one quarter of
        softmax's KV-cache entries at that length. We then pretrain matched 340M-parameter
        language models for $15$B tokens and evaluate both standard language-model benchmarks
        and in-context recall. The PSD variants remain broadly comparable on zero-shot accuracy
        while showing a meaningful perplexity spread, and they retain much more high-entropy
        needle signal than Gated DeltaNet, whose UUID recall falls to $0.004$.
\end{enumerate}

\section{Reproducing Kernel Hilbert Spaces for Linear Attention} \label{sec:rkhs}
Linear attention~\citep{shen21efficient,katharopoulos20transformers} rests on a single algebraic
substitution: replacing the exponential kernel of softmax with an inner product of finite features.
This substitution allows the quadratic mixing of tokens to collapse into a recurrence on a
fixed-size state. We make this substitution explicit and identify the geometric assumptions it imposes on the
feature map. Constraining the feature image to a symmetric cone then resolves two design questions
typically addressed through ad hoc heuristics: which finite maps are admissible and how normalization
produces an output gate.

Throughout, sequences have length $\seql$, with
$\vq_t,\vk_t\in\R^d$ and $\vv_t\in\R^{d_v}$ denoting the query, key, and value at token $t$.
Their stacked tensors are $\mQ,\mK\in\R^{\seql\times d}$ and
$\mV\in\R^{\seql\times d_v}$. We write $\psi:\R^d\!\to\!\R^{n_\psi}$ for the feature map and
$\Psi(\mK)$ for its row-wise application. Causal softmax attention computes, per head:
\begin{equation}
  \vz_t \;=\; \frac{\sum_{i=1}^t \exp(\lrangle{\vq_t,\vk_i})\,\vv_i^\top}{
    \sum_{i=1}^t \exp(\lrangle{\vq_t,\vk_i})}.
  \label{eq:softmax-att}
\end{equation}
Substituting the softmax kernel $\exp(\lrangle{\vq,\vk})$ with a finite inner product
$\lrangle{\psi(\vq),\psi(\vk)}$ and using bilinearity yields the linearized form:
\begin{equation}
  \vz_t \;=\; \frac{\psi(\vq_t)^\top \mS_t}{\psi(\vq_t)^\top \mZ_t}, \quad
  \mS_t = \sum_{i=1}^t \psi(\vk_i)\vv_i^\top, \quad
  \mZ_t = \sum_{i=1}^t \psi(\vk_i),
  \label{eq:linatt}
\end{equation}
with state pair $(\mS_t,\mZ_t)$ of size $n_\psi(d_v+1)$ for all $t$, and additive recurrence
$\mS_t=\mS_{t-1}+\psi(\vk_t)\vv_t^\top$, $\mZ_t=\mZ_{t-1}+\psi(\vk_t)$. The normalized readout is defined whenever its denominator
$\psi(\vq_t)^\top\mZ_t$ is positive.

\subsection{Symmetric cones}
Let $\gK\subset\R^{n_\psi}$ be a closed convex cone. Its dual cone is defined as $\gK^* = \{\vy\in\R^{n_\psi} :
  \lrangle{\vx,\vy}\ge 0\;\forall\vx\in\gK\}$ under the Euclidean inner product. $\gK$ is
\emph{pointed} if $\gK\cap(-\gK)=\{0\}$, \emph{full-dimensional} if
$\operatorname{int}(\gK)\ne\emptyset$, \emph{self-dual} if $\gK=\gK^*$, and \emph{homogeneous} if
its automorphism group acts transitively on $\operatorname{int}(\gK)$. $\gK$ is a \emph{symmetric}
cone if it is self-dual and homogeneous. Throughout this paper, we work under the following three
assumptions:
\begin{assumption}[Self-dual nonnegative geometry]
  \label{ass:pos}
  $\psi(\R^d)\subset\gK=\gK^*$. Consequently,
  $\lrangle{\psi(\vx),\psi(\vy)}\ge 0$ for all $\vx,\vy$.
\end{assumption}

\begin{assumption}[Isometric invariance]
  \label{ass:iso}
  For every $\mU\in\gO(\R^d)$ there exists an inner-product-preserving map $\gT_\mU$ on $\gK$ such
  that $\psi(\mU\vx)=\gT_\mU\psi(\vx)$.
\end{assumption}

\begin{assumption}[Homogeneity]
  \label{ass:homogene}
  $\mathrm{Aut}(\gK)$ acts transitively on $\operatorname{int}(\gK)$.
\end{assumption}

\subsection{The denominator is a parameter-free convex gate}

Earlier work on linear attention enforced the nonnegativity in \Cref{ass:pos} through the feature
map $\psi(\vx)=\operatorname{ELU}(\vx)+1$~\citep{katharopoulos20transformers}, whereas Efficient
Attention~\citep{shen21efficient} omitted the denominator $\psi(\vq_t)^\top\mZ_t$. Later recurrent
architectures instead introduced explicit recurrence-level gates, including the hardware-friendly
vector gate of Gated Linear Attention (GLA)~\citep{yang24gla} and the scalar decay of
Mamba2~\citep{dao24mamba2}.

Under~\Cref{ass:pos}, we show that the discarded denominator already supplies a token-conditioned
output gate.
\begin{proposition}[Output-level gating]
  \label{prop:output-gate}
  Fix $\vq_t$ and write:
  \[
    D_{t-1}=\psi(\vq_t)^\top\mZ_{t-1},
    \qquad
    c_t=\psi(\vq_t)^\top\psi(\vk_t),
    \qquad
    D_t=D_{t-1}+c_t.
  \]
  Suppose $D_t>0$. If $D_{t-1}>0$, define
  $\bar\vz_{t|t-1}=\psi(\vq_t)^\top\mS_{t-1}/D_{t-1}$. Then:
  \begin{align}
    \vz_t
      &=\alpha_t(\vq_t)\,\bar\vz_{t|t-1}
        +\beta_t(\vq_t)\,\vv_t^\top, \\
    \alpha_t
      &=\frac{D_{t-1}}{D_t},
    \qquad
    \beta_t
      =\frac{c_t}{D_t},
    \label{eq:output-gate}
  \end{align}
  with $\alpha_t,\beta_t\ge0$ and $\alpha_t+\beta_t=1$. If $D_{t-1}=0$, including
  $t=1$, then $\vz_t=\vv_t^\top$ and the same weights reduce to
  $(\alpha_t,\beta_t)=(0,1)$.
\end{proposition}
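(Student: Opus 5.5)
The plan is a direct algebraic decomposition of the linearized readout in \eqref{eq:linatt}, using only the additive recurrences for $(\mS_t,\mZ_t)$ and the sign information supplied by \Cref{ass:pos}. First, substitute $\mS_t=\mS_{t-1}+\psi(\vk_t)\vv_t^\top$ into the numerator. Bilinearity then gives
\[
  \psi(\vq_t)^\top\mS_t=\psi(\vq_t)^\top\mS_{t-1}+c_t\,\vv_t^\top .
\]
Likewise, $\mZ_t=\mZ_{t-1}+\psi(\vk_t)$ gives the denominator $\psi(\vq_t)^\top\mZ_t=D_{t-1}+c_t=D_t$. Since $D_t>0$ by hypothesis, the readout $\vz_t$ is well defined and equals $\bigl(\psi(\vq_t)^\top\mS_{t-1}+c_t\vv_t^\top\bigr)/D_t$.

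In the case $D_{t-1}>0$, I will multiply and divide the first term by $D_{t-1}$. This rewrites it as $(D_{t-1}/D_t)\,\bar\vz_{t|t-1}$, and the second term is $(c_t/D_t)\,\vv_t^\top$. That is exactly the stated decomposition with $\alpha_t=D_{t-1}/D_t$ and $\beta_t=c_t/D_t$. The identity $\alpha_t+\beta_t=1$ follows at once from $D_t=D_{t-1}+c_t$. For nonnegativity, \Cref{ass:pos} gives $c_t=\lrangle{\psi(\vq_t),\psi(\vk_t)}\ge0$. It also gives $D_{t-1}=\sum_{i<t}\lrangle{\psi(\vq_t),\psi(\vk_i)}\ge0$ as a sum of nonnegative terms. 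Dividing by $D_t>0$ then yields $\alpha_t,\beta_t\ge0$, so the output is a convex combination.

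The degenerate case $D_{t-1}=0$ is the only step that needs care, and I expect it to be the main (if mild) obstacle. Here I must show that the numerator contribution $\psi(\vq_t)^\top\mS_{t-1}$ vanishes. Otherwise $\bar\vz_{t|t-1}$ is undefined and a stray term could survive. The argument again uses \Cref{ass:pos}. Since $D_{t-1}=\sum_{i<t}\lrangle{\psi(\vq_t),\psi(\vk_i)}$ is a sum of nonnegative terms equal to zero, each $\lrangle{\psi(\vq_t),\psi(\vk_i)}=0$ for $i<t$. Hence
\[
  \psi(\vq_t)^\top\mS_{t-1}=\sum_{i<t}\lrangle{\psi(\vq_t),\psi(\vk_i)}\,\vv_i^\top=0 .
\]
For $t=1$ this holds trivially because $\mS_0=0$ and $\mZ_0=0$ (empty sums). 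Then $D_t=c_t>0$ and $\vz_t=c_t\vv_t^\top/c_t=\vv_t^\top$. The formulas for the weights give $\alpha_t=0/D_t=0$ and $\beta_t=c_t/c_t=1$, which completes the proof.
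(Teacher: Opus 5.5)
Your proposal is correct and follows the paper's proof essentially step for step: you substitute the additive recurrences, multiply and divide the history term by $D_{t-1}$ to read off $\alpha_t,\beta_t$, and take nonnegativity from \Cref{ass:pos}. For the case $D_{t-1}=0$ you argue, as the paper does, that a zero sum of nonnegative inner products forces each term to vanish, so $\psi(\vq_t)^\top\mS_{t-1}=0$; your explicit handling of $t=1$ via empty sums spells out a point the paper leaves implicit.
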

The normalized readout is therefore a convex interpolation between the previous readout and the
current value, with query-dependent weights and no additional learned parameters. The gate acts at
readout, leaving the state update commutative and additive. Gated DeltaNet controls memory through
the rank-one state transition~\citep{yang25gateddeltanet}:
\[
  \mS_t
  =g_t(\mI-\beta_t\vk_t\vk_t^\top)\mS_{t-1}
   +\beta_t\vk_t\vv_t^\top,
\]
which uses extra projections and produces a noncommutative transition.

\subsection{Koecher--Vinberg classification}
\begin{figure*}[t]
  \centering
  \tiny
  \begin{subfigure}[b]{0.32\textwidth}\centering
    \includegraphics[width=0.6\textwidth]{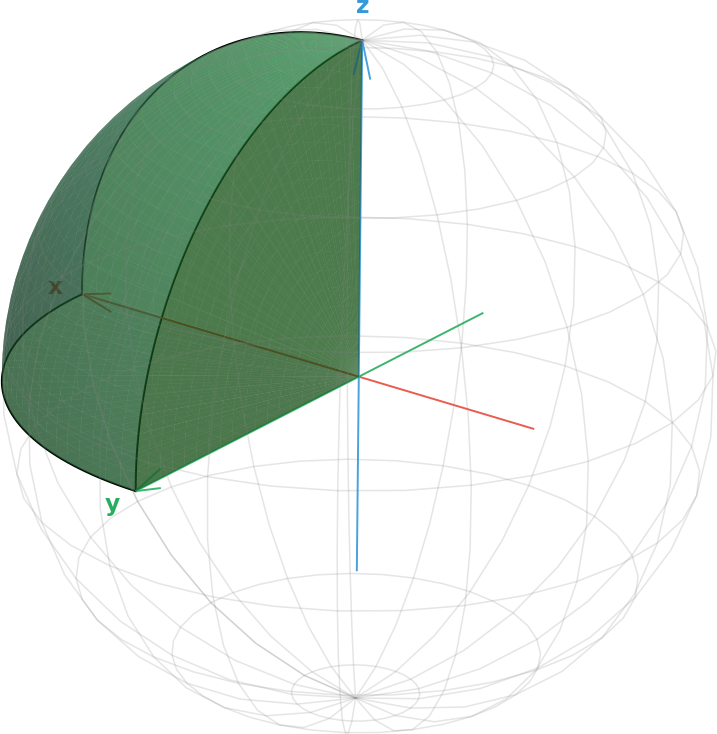}
    \caption{Positive orthant $\R^3_+$.}\label{fig:image1}
  \end{subfigure}\hfill
  \begin{subfigure}[b]{0.32\textwidth}\centering
    \includegraphics[width=0.6\textwidth]{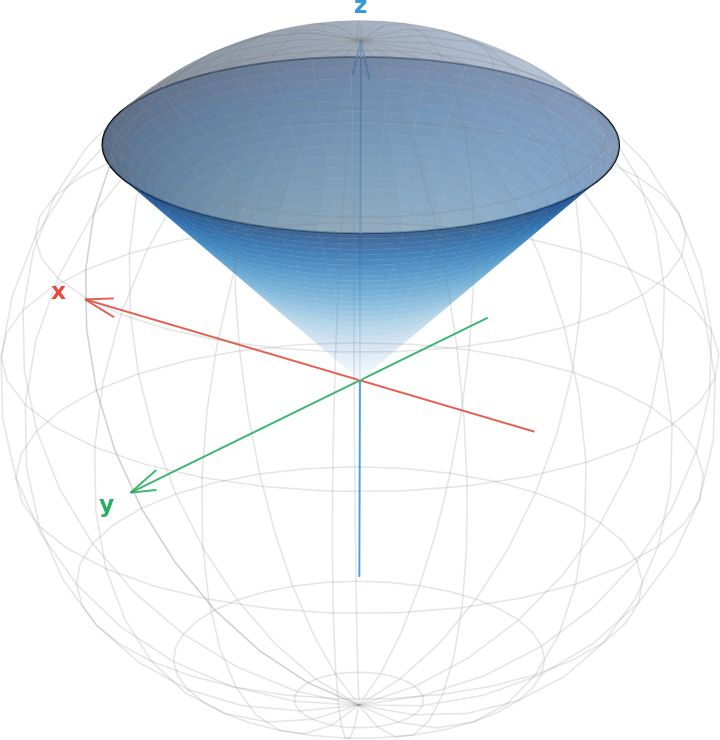}
    \caption{Lorentz cone $\gL^3_+$.}\label{fig:image2}
  \end{subfigure}\hfill
  \begin{subfigure}[b]{0.32\textwidth}\centering
    \includegraphics[width=0.6\textwidth]{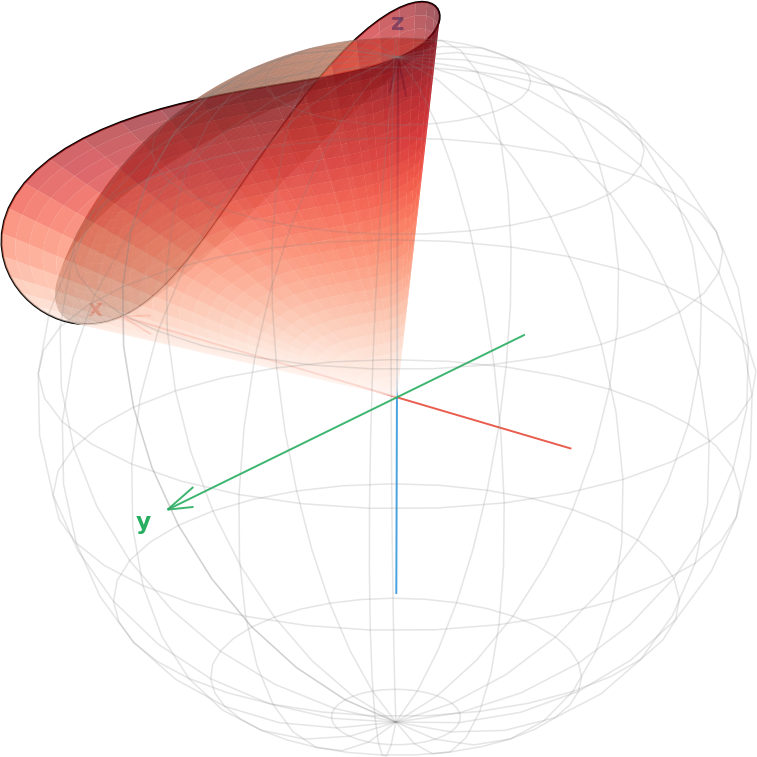}
    \caption{PSD cone $\sS^2_{+}$.}\label{fig:image3}
  \end{subfigure}
  \caption{\textbf{Ordinary real symmetric-cone geometries in $\R^3$.}
    The PSD cone $\sS^2_{+}$ is linearly isomorphic to the Lorentz cone $\gL^3_+$.}
  \label{fig:two_images}
\end{figure*}

Isometric invariance from~\Cref{ass:iso} promotes the softmax identity
$\exp(\lrangle{\mU\vx,\mU\vy})=\exp(\lrangle{\vx,\vy})$ to the kernel space. On its own, it does
not single out $\gK$ but guarantees that $\psi$ is $\gT_\mU$-equivariant: the input-space isometry
$\mU\in\gO(\R^d)$ and the cone-space isometry $\gT_\mU\in\gO(\R^{n_\psi})|_\gK$ intertwine through $\psi$.

Homogeneity in~\Cref{ass:homogene} means that any interior point can be mapped to any other by a
cone automorphism. Thus the latent geometry has no privileged interior region or structural
bottleneck, a useful symmetry when learning a map into $\gK$.

\Cref{ass:pos,ass:homogene} place $\gK$ within the scope of the classical
Koecher--Vinberg theorem, while \Cref{ass:iso} constrains how input rotations act on its factors.
The full classification also contains complex Hermitian, quaternionic Hermitian, and exceptional
factors. We restrict this work to the ordinary real factors listed below.
\begin{theorem}[Koecher--Vinberg decomposition~\citep{koecher57positivitatsbereiche,vinberg63homogeneous,faraut94analysis}]
  \label{thm:kv}
  Let $\gK\subset\R^{n_\psi}$ be a pointed, full-dimensional closed convex cone. If $\gK$ is self-dual
  and homogeneous, then it is a symmetric cone and decomposes uniquely, up to isometry and factor
  order, as a Cartesian product of irreducible symmetric cones. The irreducible factors studied here
  are:
  \[
    \begin{cases}
      \textnormal{\textsc{(O)}} & \R_+,                                \\
      \textnormal{\textsc{(L)}} & \gL^m_+
      = \{(\vy,t)\in\R^{m-1}\times\R : \norm{\vy}_2 \le t\},\  m\ge 3, \\
      \textnormal{\textsc{(S)}} & \sS^m_+
      = \{\mA\in\R^{m\times m} : \mA\succeq 0 \text{ and } \mA=\mA^{\top}\},\ m\ge 2.
    \end{cases}
  \]
\end{theorem}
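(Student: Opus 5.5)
The plan is the classical Koecher--Vinberg route through Euclidean Jordan algebras, following \citet{faraut94analysis}. The statement's first clause is definitional: under the paper's convention, a self-dual, homogeneous cone is symmetric. The content lies in the unique decomposition and in identifying the factors.

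\emph{Step 1 (characteristic function).} On $\operatorname{int}(\gK)$ I define the Koecher--Vinberg characteristic function
\[
\varphi(\vx)=\int_{\gK^*}e^{-\lrangle{\vx,\vy}}\,\dd\vy .
\]
Pointedness and full-dimensionality of $\gK$ ensure that $\gK^*=\gK$ is also pointed with nonempty interior. The integral is then finite on $\operatorname{int}(\gK)$, and $\log\varphi$ is strictly convex and blows up at the boundary.

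For $g\in\mathrm{Aut}(\gK)$ the integral transforms as $\varphi(g\vx)=|\det g|^{-1}\varphi(\vx)$. Hence $\nabla^2\log\varphi$ is an $\mathrm{Aut}(\gK)$-invariant Riemannian metric, and $\vx^\#=-\nabla\log\varphi(\vx)$ is an involution of $\operatorname{int}(\gK)$. Self-duality is what makes this map send $\operatorname{int}(\gK)$ back into itself.

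\emph{Step 2 (Jordan algebra from a base point).} Homogeneity gives a transitive group action, so I fix a point $\ve\in\operatorname{int}(\gK)$ with $\ve^\#=\ve$. I take $\mathfrak g=\mathrm{Lie}(\mathrm{Aut}(\gK))$ and its Cartan decomposition $\mathfrak g=\mathfrak k\oplus\mathfrak p$, where $\mathfrak k$ is the stabilizer algebra of $\ve$ and $\mathfrak p$ consists of the symmetric elements with respect to the metric at $\ve$. By transitivity the map $\mathfrak p\to\R^{n_\psi}$, $X\mapsto X\ve$, is a linear isomorphism, so each $\vx$ determines a unique $L(\vx)\in\mathfrak p$ with $L(\vx)\ve=\vx$. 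I then define the product $\vx\circ\vy=L(\vx)\vy$.

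The remaining algebraic checks are the following:
\begin{enumerate*}[label=(\roman*)]
\item commutativity, from the symmetry of $\nabla^3\log\varphi$ at $\ve$;
\item the Jordan identity $[L(\vx),L(\vx^2)]=0$, from the fact that $[\mathfrak p,\mathfrak p]\subset\mathfrak k$ fixes $\ve$;
\item Euclideanity, since each $L(\vx)$ is self-adjoint.
\end{enumerate*}
Together these show that $\gK=\{\vx^2\}$ is the cone of squares of a Euclidean Jordan algebra $V$. I expect Step 2 to be the main obstacle. The delicate point is the Jordan identity, and I would follow Faraut--Kor\'anyi Ch.~I--III closely for it rather than reprove it.

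\emph{Step 3 (decomposition and uniqueness).} A Euclidean Jordan algebra is semisimple, because its trace form is positive definite. It therefore splits uniquely, up to order, into simple ideals $V=\bigoplus_i V_i$. The cone of squares splits accordingly as $\gK=\prod_i \gK_i$, with each $\gK_i$ an irreducible symmetric cone.

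Uniqueness comes from the uniqueness of the ideal decomposition. Concretely, the factors are exactly the minimal faces $F$ of $\gK$ for which $\gK=F\times F'$ holds orthogonally, and these are intrinsic to $\gK$.

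\emph{Step 4 (classification of factors).} The Jordan--von Neumann--Wigner theorem classifies simple Euclidean Jordan algebras. There are five families: $\R$, the spin factors $\R\oplus\R^{m-1}$, and $\mathrm{Herm}_r(\mathbb F)$ for $\mathbb F\in\{\R,\mathbb C,\mathbb H\}$, together with $\mathrm{Herm}_3(\mathbb O)$.

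Keeping only the ordinary real ones, I read off the cones of squares:
\begin{enumerate*}[label=(\roman*)]
\item $\R$ gives $\R_+$;
\item the spin factor with product $(\vy,t)\circ(\vy',t')=(t\vy'+t'\vy,\lrangle{\vy,\vy'}+tt')$ gives $\gL^m_+$;
\item $\mathrm{Sym}_m(\R)$ gives $\sS^m_+$.
\end{enumerate*}
The thresholds $m\ge 3$ and $m\ge2$ handle small-dimension coincidences. For $m=2$ the Lorentz cone is $\R_+^2$, which is reducible, and the isomorphism $\sS^2_+\cong\gL^3_+$ is the one noted in \Cref{fig:two_images}.

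The statement lists only these real factors, so the theorem restricted to "factors studied here" follows. The remaining families are excluded by scope, not by proof.
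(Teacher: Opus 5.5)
Your proposal follows the same route as the paper, which invokes Koecher--Vinberg to realize $\gK$ as the cone of squares of a Euclidean Jordan algebra, decomposes it uniquely into irreducible factors, and restricts the classification to the ordinary real families; you additionally unpack the Faraut--Kor\'anyi argument for the first step, which is sound in outline. Two attributions in Step~2 should be tightened. First, $[\mathfrak p,\mathfrak p]\subset\mathfrak k$ annihilating $\ve$ only gives commutativity. The Jordan identity needs the extra fact that every $D\in\mathfrak k$ is a derivation: since $[D,L(\vy)]\in\mathfrak p$ and $[D,L(\vy)]\ve=D\vy$, you get $[D,L(\vy)]=L(D\vy)$. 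For $D=[L(\vx),L(\vy)]$ this yields $3L(\vx)L(\vx^2)=L(\vx^3)+2L(\vx)^3$, and transposing via self-adjointness of the $L$'s then gives $[L(\vx),L(\vx^2)]=0$. Second, the fixed point $\ve^\#=\ve$ does not come from transitivity; it comes from minimizing $\varphi$ on a sphere, using its boundary blow-up and its degree $-n_\psi$ homogeneity.
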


\paragraph{Classical feature maps}
We can map raw projected keys and queries to the positive orthant via
$\mathrm{ReLU}(\vx)+\eps$ for some $\eps>0$, or via
$\mathrm{ELU}(\vx)+1$ as in linear attention~\citep{katharopoulos20transformers}. For the Lorentz
cone, $\psi^{\mathrm L}(\vy)=(\vy,\norm{\vy}_2)$ lies on the boundary of $\gL^{d+1}_+$ for
$\vy\ne0$, and Cauchy--Schwarz inequality yields:
\[
  \lrangle{\psi^{\mathrm L}(\vx),\psi^{\mathrm L}(\vy)}
  = \lrangle{\vx,\vy}+\norm{\vx}_2\norm{\vy}_2 \ge 0.
\]
If $\norm{\vx}_2=1$, then
$\norm{\psi^{\mathrm L}(\vx)}_2^2
 =\lrangle{\psi^{\mathrm L}(\vx),\psi^{\mathrm L}(\vx)}
 =2\norm{\vx}_2^2=2$.
Thus $2^{-1/2}\psi^{\mathrm L}(\vx)$ lies on the unit sphere.\par

For the PSD cone $\sS^m_+$, a classical choice is the LDL parameterization, which represents any
$\mA \in \sS^m_+$ as $\mA = \mL_A \mD_A \mL_A^\top$, where $\mD_A$ is diagonal and $\mL_A$ is unit
lower triangular. Then:
\[
\lrangle{\mA,\mB} = \Tr\!\left(\mL_A \mD_A \mL_A^\top \mL_B \mD_B
  \mL_B^\top\right) = \normf{\mD_B^{1/2}\mL_B^\top \mL_A \mD_A^{1/2}}^2.
\] 

These canonical maps establish nonnegative kernels for each real factor considered here. We next turn
from admissible feature geometry to the number of key addresses that each cone can separate.

\section{Capacity of Symmetric Cones} \label{sec:expressivity}
The cone classification identifies admissible nonnegative feature geometries, but not how many key
addresses a fixed feature dimension can distinguish. Let $\gK\subset\R^d$ be a cone and place $T$
unit-norm feature vectors on $\gK\cap\sphere{d-1}$. We quantify their worst collision by the
\emph{interference}:
\begin{equation}
  \mu(\gK,T)
  \defeq
  \min_{\{\bm{\vpsi}_1,\dots,\bm{\vpsi}_T\}\subset\gK\cap\sphere{d-1}}
  \max_{i\neq j}
  \big|\lrangle{\bm{\vpsi}_i,\bm{\vpsi}_j}\big|,
  \label{eq:mu-def}
\end{equation}
and define the \emph{$\eps$-capacity}:
\begin{equation}
  \mathsf{C}_{\eps}(\gK)
  \defeq
  \sup\{T\in\mathbb{N}:\mu(\gK,T)\le\eps\}.
  \label{eq:eps-capacity}
\end{equation}
This is precisely the mutual coherence of the feature dictionary~\citep{donoho03optimally}. It
isolates the geometry of distinguishable key addresses before specifying values or retrieval
dynamics.

\begin{theorem}[Welch bound~\citep{welch74lower}]
  \label{thm:welchbound}
  The $d$ coordinate rays of $\R_+^d$ are pairwise orthogonal, so zero interference is attainable
  for $T\le d$. For any $T>d$ unit vectors in $\R^d$:
  \begin{equation}
    \mu(\R^d,T)^2\ge \frac{T-d}{d(T-1)}.
  \end{equation}
  Consequently, if $\mu\le\eps<1/\sqrt d$, then
  $T\le d(1-\eps^2)/(1-d\eps^2)$.
\end{theorem}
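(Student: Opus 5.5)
The statement has three parts. The first is a direct construction. The second, the Welch inequality, carries the content. The third is an algebraic rearrangement of the second.

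\textbf{Construction for $T\le d$.} Take $\bm{\vpsi}_i=\ve_i$ for $i\le T$. These vectors lie in $\R^d_+\cap\sphere{d-1}$ and are pairwise orthogonal. Hence $\mu(\R^d_+,T)=0$, and therefore also $\mu(\R^d,T)=0$.

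\textbf{Welch inequality via the Gram matrix.} Fix unit vectors $\bm{\vpsi}_1,\dots,\bm{\vpsi}_T\in\sphere{d-1}$ with $T>d$. Form the synthesis matrix $\mPsi$ whose rows are these vectors, and let $\mathbf{G}=\mPsi\mPsi^\top\in\R^{T\times T}$ be their Gram matrix.

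\emph{Trace.} Since the vectors are unit, $\Tr\mathbf{G}=T$.

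\emph{Rank.} $\operatorname{rank}\mathbf{G}\le d$. So $\mathbf{G}$ has at most $d$ nonzero eigenvalues $\lambda_1,\dots,\lambda_d\ge0$, and they sum to $T$.

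\emph{Cauchy--Schwarz on the spectrum.}
\[
  \normf{\mathbf{G}}^2=\sum_k\lambda_k^2\ge \frac{(\sum_k\lambda_k)^2}{d}=\frac{T^2}{d}.
\]

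\emph{Splitting the Frobenius norm.} The diagonal of $\mathbf{G}$ contributes exactly $T$. Each of the $T(T-1)$ off-diagonal entries is at most $\mu^2$ in square, where $\mu=\max_{i\ne j}|\lrangle{\bm{\vpsi}_i,\bm{\vpsi}_j}|$. Thus
\[
  T+T(T-1)\mu^2\ge T^2/d,
  \qquad\text{so}\qquad
  \mu^2\ge\frac{T-d}{d(T-1)}.
\]
The bound holds for every configuration, so it also holds for the minimum over configurations, which is $\mu(\R^d,T)$. It holds a fortiori for configurations restricted to any cone $\gK\subset\R^d$.

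\textbf{Capacity consequence.} Suppose $\mu\le\eps$. The case $T\le d$ satisfies the claimed bound trivially, because $(1-\eps^2)/(1-d\eps^2)\ge1$ whenever $d\eps^2<1$. For $T>d$, the Welch inequality gives $\eps^2 d(T-1)\ge T-d$. Rearranging,
\[
  T(1-d\eps^2)\le d-d\eps^2=d(1-\eps^2).
\]
Because $\eps<1/\sqrt d$, the factor $1-d\eps^2$ is positive. Dividing by it yields $T\le d(1-\eps^2)/(1-d\eps^2)$.

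\textbf{Main obstacle.} The only non-mechanical step is the rank/Cauchy--Schwarz bound on $\normf{\mathbf{G}}^2$. Equivalently, it can be obtained from $\normf{\mPsi^\top\mPsi}^2\ge(\Tr\mPsi^\top\mPsi)^2/d$ on the $d\times d$ frame operator, using the identity $\normf{\mPsi\mPsi^\top}=\normf{\mPsi^\top\mPsi}$. Everything else is bookkeeping.
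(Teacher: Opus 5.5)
Your proof is correct and has the same structure as the paper's: the standard-basis construction, then the Welch inequality, then the rearrangement (your explicit treatment of the $T\le d$ case in the consequence is a small extra). The one difference is that the paper cites Welch's result, while you prove it with the Gram-matrix trace/rank Cauchy--Schwarz argument; this also shows that the paper's remark about nonnegative orthant inner products is unnecessary, since squaring already removes the sign.
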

For $T>d$, equality in the Welch inequality holds precisely when the vectors form an equiangular tight frame
(ETF): tightness minimizes the average squared off-diagonal Gram entry, and equiangularity makes
every off-diagonal modulus attain that average. The bound is therefore generally unattained. Real
ETFs exist only for special pairs $(d,T)$ and satisfy $T\le d(d+1)/2$~\citep{fickus16etf}. Over
$\mathbb C^d$, the corresponding ceiling is $d^2$, although the representation uses $2d$ real
scalars. We restrict the present analysis to real cones and leave complex extensions to future work.

The orthant inherits the ordinary Welch floor after its $d$ coordinate rays. For the Lorentz cone
$\gL_+^d$, a normalized extreme ray has the form
$\psi=2^{-1/2}(\vy,1)$ with $\vy\in\sphere{d-2}$, and its kernel is
$F_{\mathrm{Lor}}(z)=(1+z)/2$. For $0\le\eps<1/2$, requiring feature interference at most $\eps$ forces
$\lrangle{\vy_i,\vy_j}\le-(1-2\eps)$. Rankin's bound therefore gives:
\[
  \mathsf{C}_{\eps}(\gL_+^d)
  \le \min\!\left\{d,\,1+\frac{1}{1-2\eps}\right\}.
\]
The formal statement and proof appear as \Cref{thm:cap-lor} in \Cref{app:packing}.

\subsection{Rank-one packing into the PSD cone}
\label{ssec:spd-welch}

The LDL construction in \Cref{sec:rkhs} represents an arbitrary matrix in $\sS_+^p$ using:
\begin{equation}
  n=\frac{p(p+1)}{2}
  \label{eq:psd-ambient-dim}
\end{equation}
free coordinates. Directly parameterizing an arbitrary PSD atom would therefore require a learned
query/key projection of width $n$. Packing does not require covering the entire cone: we can project
once to $\vu\in\sphere{p-1}$ and apply the deterministic rank-one lift:
\[
  \psi^{\mathrm S}(\vu)=\vu\vu^\top\succeq0,
  \qquad
  \norm{\vu\vu^\top}_F=1,
  \qquad
  \lrangle{\vu\vu^\top,\vv\vv^\top}_F=(\vu^\top\vv)^2.
\]
The learned projection emits only $p$ coordinates, while the lifted feature still occupies the
$n$-dimensional symmetric ambient space. \Cref{prop:spd-rank-one} shows that, among unit-Frobenius
PSD atoms, rank-one matrices uniquely minimize the expected overlap under a Haar-random relative
eigenbasis. This average-case optimality motivates the rank-one family; the following theorem gives
its worst-case packing guarantee.

\begin{theorem}[PSD rank-one packing]
  \label{thm:cap-spd}
  Let $p\ge2$, $n=p(p+1)/2$, and $0\le\eps<1$. Identify $\sS_+^p$ with its
  $n$-dimensional Euclidean ambient space under the Frobenius inner product. The unit-Frobenius representatives of its extreme rays are rank-one matrices
  $\vu\vu^\top$ with $\vu\in\sphere{p-1}$, and their pairwise inner products satisfy:
  \[
    \lrangle{\vu_i\vu_i^\top,\vu_j\vu_j^\top}_F
    =\lrangle{\vu_i,\vu_j}^2.
  \]
  Hence PSD interference at most $\eps$ is equivalent to
  $|\lrangle{\vu_i,\vu_j}|\le\sqrt{\eps}$, and a maximal greedy spherical-cap construction yields:
  \begin{equation}
    \mathsf{C}_{\eps}(\sS_+^p)
    \ge \frac{1}{2}(1-\eps)^{-(p-1)/2}.
  \end{equation}
\end{theorem}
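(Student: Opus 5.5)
The proof has three parts: identify the extreme rays, compute the Frobenius inner product, and run a greedy packing on the sphere $\sphere{p-1}$.

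\textbf{Extreme rays.} Any $\mA\in\sS_+^p$ has a spectral decomposition $\mA=\sum_k \lambda_k \vw_k\vw_k^\top$ with $\lambda_k\ge0$. This writes $\mA$ as a nonnegative combination of rank-one PSD matrices, so an extreme ray must be rank one. Conversely, a rank-one $\vu\vu^\top$ spans an extreme ray. To see this, suppose $\vu\vu^\top=\mB+\mC$ with $\mB,\mC\succeq0$. Then $\ker\mB\supseteq\ker(\vu\vu^\top)=\vu^\perp$, so $\mB$ is a multiple of $\vu\vu^\top$, and likewise $\mC$.

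\textbf{Unit-Frobenius representatives and inner products.} The normalization is immediate: $\norm{\vu\vu^\top}_F^2=\Tr(\vu\vu^\top\vu\vu^\top)=\norm{\vu}^4$, so unit Frobenius norm means $\vu\in\sphere{p-1}$. The inner product is $\Tr(\vu_i\vu_i^\top\vu_j\vu_j^\top)=(\vu_i^\top\vu_j)^2$, which is nonnegative. Hence interference at most $\eps$ is equivalent to $|\lrangle{\vu_i,\vu_j}|\le\sqrt\eps$. Since $\vu$ and $-\vu$ give the same atom, we are packing lines, i.e.\ antipodal pairs of points.

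\textbf{Greedy cap construction.} Set $s=\sqrt\eps$ and choose unit vectors $\vu_1,\vu_2,\dots$ greedily, each with $|\lrangle{\vu_j,\vu_i}|\le s$ for all earlier $i$, until no further choice is possible. At maximality, every $\vx\in\sphere{p-1}$ satisfies $|\lrangle{\vx,\vu_i}|>s$ for some $i$. So the $2T$ caps $\{\vx:\lrangle{\vx,\pm\vu_i}>s\}$ cover the sphere, and
\[
2T\,\sigma(C_s)\ge1,
\]
where $\sigma(C_s)$ is the normalized measure of a cap of angular radius $\theta=\arccos s$.

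It then suffices to show $\sigma(C_s)\le (1-s^2)^{(p-1)/2}=(1-\eps)^{(p-1)/2}$, which gives $T\ge\tfrac12(1-\eps)^{-(p-1)/2}$.

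\textbf{Main obstacle: the cap-measure estimate.} I will use the standard bound that the cap $\{\lrangle{\vx,\ve}>s\}$ has normalized measure at most $(1-s^2)^{(p-1)/2}=\sin^{p-1}\theta$ for $s\ge0$. The plan for proving it is as follows.
\begin{itemize}
\item For $s\le 1/\sqrt2$, the cap lies inside the spherical part of the ball of radius $\sin\theta$ centered at $s\ve$.
\item More precisely, I will compare cones: the solid cone over the cap, intersected with the unit ball, has volume fraction $\sigma(C_s)$. That region is contained in a ball of radius $\sin\theta$ when $s\le1/\sqrt2$, giving the fraction $\sin^{p}\theta\le\sin^{p-1}\theta$.
\item For $s>1/\sqrt2$, a direct integral bound $\sigma(C_s)\propto\int_s^1(1-t^2)^{(p-3)/2}\,dt$ handles the case. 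Alternatively, I would cite the known inequality $\sigma(C_s)\le e^{-ps^2/2}\le$ the needed bound, checking the constants carefully.
\end{itemize}
This is the step where constants could fail, so I would verify the small cases $p=2,3$ directly; $p=2$ gives arc measure $\theta/\pi$. If a clean general cap bound proves awkward, I would fall back on the exact integral representation with the substitution $t=\cos\phi$ and bound it monotonically.

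Finally, the bound is trivially true whenever its right-hand side is at most $1$, since $T\ge1$. This lets me assume $\eps$ is large enough that no degenerate edge cases arise.
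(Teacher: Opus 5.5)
Your architecture is the same as the paper's: rank-one extreme rays, $\lrangle{\vu_i\vu_i^\top,\vu_j\vu_j^\top}_F=\lrangle{\vu_i,\vu_j}^2$, greedy lines outside antipodal caps, and the covering count $2T\sigma(C_s)\ge1$. The gap is in the step you flag yourself, the cap estimate $\sigma(C_s)\le(1-s^2)^{(p-1)/2}$.

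You establish that estimate only for $s=\sqrt{\eps}\le1/\sqrt2$. On that range your cone-in-ball argument is correct. Every cap point lies within $\sin\theta$ of $s\ve$ for any $s\ge0$; the restriction is needed only so that the origin, and hence the whole solid cone, also lies in that ball.

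For $s>1/\sqrt2$, i.e.\ $\eps>1/2$, neither of your fallbacks works.
\begin{itemize}
\item The Gaussian tail points the wrong way. Since $\log(1-s^2)\le-s^2$, one has $(1-s^2)^{(p-1)/2}\le e^{-(p-1)s^2/2}$. As $s\to1$ your target tends to $0$, while $e^{-ps^2/2}\to e^{-p/2}$. For $p=10$, $s=0.9$ the comparison is $e^{-4.05}\approx1.7\times10^{-2}$ against $0.19^{4.5}\approx5.7\times10^{-4}$.
\item The other natural containment, in the ball of radius $1/(2s)$ about $\ve/(2s)$, gives $(2s)^{-p}$. This also stays bounded away from $0$ as $s\to1$.
\item The ``direct integral bound'' is asserted with no argument.
\item Your closing reduction does not rescue this regime. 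The right-hand side is at most $1$ exactly when $\eps\le1-4^{-1/(p-1)}$, which is at most $1/2$ for $p\ge3$. So it removes the small-$\eps$ cases you already handle and leaves the hard ones untouched.
\end{itemize}

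What is missing is a bound valid uniformly in $s\in[0,1]$. The paper gets it from the angular integral. Substituting $\sin\phi=\sin\theta\sin\psi$ in $\int_0^\theta\sin^{p-2}\phi\,d\phi$ and using $\cos\psi\le\sqrt{1-\sin^2\theta\sin^2\psi}$ gives $\int_0^\theta\sin^{p-2}\phi\,d\phi\le\sin^{p-1}\theta\int_0^{\pi/2}\sin^{p-2}\psi\,d\psi$. This yields the lemma for every $a\in[0,1]$ at once, with no case split.

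Within your own $t$-representation an equally short repair is available:
\begin{itemize}
\item On $[s,1]$, use $t/s\ge1$ to get $\int_s^1(1-t^2)^{(p-3)/2}dt\le\frac1s\int_s^1t(1-t^2)^{(p-3)/2}dt=\frac{(1-s^2)^{(p-1)/2}}{(p-1)s}$.
\item On $[-1,1]$, use $|t|\le1$ to get $\int_{-1}^1(1-t^2)^{(p-3)/2}dt\ge\frac{2}{p-1}$.
\end{itemize}
Together these give $\sigma(C_s)\le(1-s^2)^{(p-1)/2}/(2s)$. That settles every $s\ge1/2$, and combined with your ball argument it covers the whole range.
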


Applying \Cref{thm:welchbound} to the raw vectors shows that PSD interference $\eps<1/p$ forces
$T\le p(1-\eps)/(1-p\eps)$. Rank-one PSD features therefore do not provide exponential capacity
below the Welch floor. For any fixed tolerance $\eps\in(0,1)$, once $p>1/\eps$ the spherical-cap
lower bound in \Cref{thm:cap-spd} is exponential in $p$.

Squaring the raw inner product replaces the mutually obtuse constraint induced by the Lorentz lift
with an unsigned near-orthogonal spherical code. The same $p$-dimensional projection can generate
higher even-order features:
\[
  \psi_{2r}(\vu)=\vu^{\otimes 2r},
  \qquad
  \lrangle{\psi_{2r}(\vu),\psi_{2r}(\vv)}
    =(\vu^\top\vv)^{2r}\ge 0,
\]
whose symmetric ambient dimension is $\binom{p+2r-1}{2r}$. We use the first nontrivial order,
$r=1$: as \Cref{sec:hardware} shows, the quadratic expanded state already dominates the hardware budget.

\subsection{Constructive packing} \label{ssec:constructive}
\begin{wraptable}{r}{0.4\textwidth}
  \centering
  \footnotesize
  \caption{\textbf{Constructive packing requirements.} Ambient dimension required for
  $T=10^5$ under two interference targets.}
  \label{tab:constructive-capacity}
  \begin{tabular}{@{}lll@{}}
    \toprule
    Method  & $\mu\le0.05$       & $\mu\le0.20$ \\
    \midrule
    Welch  & $d\ge399$          & $d\ge25$          \\
    Random & $d\approx18{,}421$ & $d\approx1{,}151$ \\
    MUB    & $d=802$ ($s=401$)  & $d=634$ ($s=317$) \\
    DeVore & $d=2{,}209$ ($s=47$) & $d=361$ ($s=19$) \\
    \bottomrule
  \end{tabular}
\end{wraptable}

We give two deterministic finite dictionaries with explicit interference guarantees. The first realifies complete sets of complex mutually unbiased bases (MUBs), producing a dense dictionary whose size is quadratic in its real ambient dimension.

\begin{restatable}[MUB-derived real dictionary]{theorem}{mubconstruct}
  \label{thm:mub}
  For any prime power $s=\ell^k$, realification of a complete set of complex MUBs gives
  $T=s(s+1)$ unit vectors in $\R^d$, where $d=2s$, with:
  \[
    \mu(\R^d,T)\le\frac{1}{\sqrt{s}}=\sqrt{\frac{2}{d}}.
  \]
\end{restatable}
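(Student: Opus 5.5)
The plan has three steps: take a complete set of MUBs from the classical finite-field construction, realify each vector, and bound the real inner products of realified vectors by the moduli of the complex ones.

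\textbf{Complex construction.} For a prime power $s=\ell^k$, a complete set of $s+1$ mutually unbiased orthonormal bases $\mathcal{B}_0,\dots,\mathcal{B}_s$ of $\mathbb{C}^s$ exists (Wootters--Fields for odd $\ell$, Galois-ring or Klappenecker--R\"otteler variants for $\ell=2$). I take this as given. Its defining properties are:
\begin{itemize}
\item within a basis, distinct vectors satisfy $\langle \vu,\vv\rangle_{\mathbb{C}}=0$;
\item across different bases, $|\langle \vu,\vv\rangle_{\mathbb{C}}|=1/\sqrt{s}$.
\end{itemize}
This gives $s(s+1)$ unit vectors in $\mathbb{C}^s$ whose pairwise moduli are either $0$ or $s^{-1/2}$.

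\textbf{Realification.} Define $R:\mathbb{C}^s\to\R^{2s}$ by $R(\vu)=(\operatorname{Re}\vu,\operatorname{Im}\vu)$. This map is an $\R$-linear isometry, and it satisfies $\lrangle{R(\vu),R(\vv)}=\operatorname{Re}\langle \vu,\vv\rangle_{\mathbb{C}}$. Two consequences follow.
\begin{itemize}
\item Each realified vector has unit norm in $\R^{d}$ with $d=2s$.
\item For any pair, $|\lrangle{R(\vu),R(\vv)}|\le|\langle\vu,\vv\rangle_{\mathbb{C}}|\le 1/\sqrt{s}$.
\end{itemize}

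\textbf{Distinctness and counting.} I must check that the $T=s(s+1)$ real vectors are distinct, so the count is exactly $s(s+1)$. Two vectors could only coincide if their complex preimages were equal. That is impossible: within a basis the vectors are orthogonal, and across bases their overlap modulus is $s^{-1/2}<1$. With the count in hand, the definition of $\mu$ in \eqref{eq:mu-def} gives $\mu(\R^d,T)\le 1/\sqrt{s}=\sqrt{2/d}$.

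One subtlety is that the phase of each complex vector is arbitrary, so $\vu$ and $i\vu$ are both admissible representatives. Realification distinguishes them, and $\lrangle{R(\vu),R(i\vu)}=0$. One could therefore double the dictionary, but the theorem only claims one representative per ray, so no phase choice is needed.

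\textbf{Main obstacle.} The only nontrivial ingredient is the existence of complete MUB sets in prime-power dimensions. I would cite it rather than reprove it. If a sketch were wanted, I would use the following standard construction.
\begin{itemize}
\item For odd $\ell$, take the vectors $\vv^{(a,b)}_x=s^{-1/2}\,\omega^{\operatorname{tr}(a x^2+b x)}$ with $\omega=e^{2\pi i/\ell}$, together with the standard basis.
\item Unbiasedness reduces to the magnitude $\sqrt{s}$ of quadratic Gauss sums over $\mathbb{F}_s$.
\item The characteristic-$2$ case needs a Galois-ring variant instead.
\end{itemize}
Everything else in the proof is a direct computation.
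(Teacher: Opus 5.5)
Your proposal is correct and is essentially the paper's proof: take a complete set of $s+1$ MUBs in $\mathbb{C}^s$ for prime-power $s$ (cited rather than reproved), realify via $R(\va+i\vb)=(\va,\vb)$, and use $\langle R(\vu),R(\vv)\rangle=\operatorname{Re}\langle\vu,\vv\rangle_{\mathbb{C}}$ to obtain $s(s+1)$ unit vectors in $\R^{2s}$ with interference at most $1/\sqrt{s}=\sqrt{2/d}$. The only differences are cosmetic: the paper spells out the odd-prime case with the quadratic-phase vectors and the Gauss-sum modulus, while you add a distinctness check and a correct aside that adjoining $R(i\vu)$ would double the dictionary.
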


This construction is useful when $T=\Theta(d^2)$ is sufficient and a uniform deterministic worst-case guarantee matters more than sparsity. The prime-power condition restricts the available dimensions. The full construction is given in \Cref{app:mub_construct}.
When higher interference is tolerated, DeVore's polynomial construction provides a larger and sparser dictionary.

\begin{restatable}[DeVore polynomial construction]{theorem}{devoreconstruct}
  \label{thm:devore}
  For any prime power $s$ and integer $0\le r<s$, there is a sparse deterministic construction of
  $T=s^{r+1}$ unit vectors in $\R^d$, where $d=s^2$, with maximum interference
  $\mu\le r/s$.
\end{restatable}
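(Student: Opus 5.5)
We will use DeVore's classical polynomial construction over the finite field $\mathbb{F}_s$. Index the $d=s^2$ coordinates by pairs $(x,y)\in\mathbb{F}_s\times\mathbb{F}_s$; this requires $s$ to be a prime power so that $\mathbb{F}_s$ exists.

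\textbf{Construction.} Consider the set $\mathcal{P}_r$ of all polynomials $f\in\mathbb{F}_s[X]$ of degree at most $r$. There are exactly $s^{r+1}$ of them, one for each choice of coefficient vector in $\mathbb{F}_s^{r+1}$. To each $f$ we attach the indicator of its graph,
\[
\vv_f\in\{0,1\}^{s^2},\qquad (\vv_f)_{(x,y)}=1 \iff y=f(x),
\]
and normalize it to $\bm{\psi}_f=\vv_f/\sqrt{s}$. Each graph has exactly one point per value of $x$, so $\vv_f$ has exactly $s$ nonzero entries. Hence $\norm{\bm{\psi}_f}_2=1$, and the vectors are sparse, with only a $1/s$ fraction of coordinates nonzero. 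They are also nonnegative, so they lie in $\R^d_+$; this is consistent with the cone setting, although the statement only needs $\R^d$.

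\textbf{Interference bound.} For $f\ne g$, the inner product is
\[
\lrangle{\bm{\psi}_f,\bm{\psi}_g}=\frac{1}{s}\,\bigl|\{x\in\mathbb{F}_s: f(x)=g(x)\}\bigr|.
\]
The difference $f-g$ is a nonzero polynomial of degree at most $r$ over a field. It therefore has at most $r$ roots, so the graphs meet in at most $r$ points. This gives $0\le\lrangle{\bm{\psi}_f,\bm{\psi}_g}\le r/s$, hence $\mu\le r/s$. The condition $r<s$ guarantees two things: distinct polynomials of degree at most $r$ define distinct functions on $\mathbb{F}_s$, so all $s^{r+1}$ vectors are distinct; and the resulting bound is nontrivial, since $r/s<1$.

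\textbf{Main step.} The only substantive ingredient is the root-counting bound, together with care that the argument works over a general $\mathbb{F}_{\ell^k}$ and not just over $\mathbb{Z}/s\mathbb{Z}$. The polynomial-roots fact holds in any field, so no further work is needed. The rest is direct counting.
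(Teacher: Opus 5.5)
Your proof is correct and follows the paper's argument essentially verbatim: the same normalized graph-indicator vectors $\vv_f/\sqrt{s}$ on coordinates indexed by $(x,y)\in\mathbb{F}_s^2$, the same overlap count $\frac{1}{s}\,|\{x\in\mathbb{F}_s: f(x)=g(x)\}|$, and the same root bound for the nonzero polynomial $f-g$ of degree at most $r$. Your use of $r<s$ to guarantee that distinct polynomials give distinct vectors also matches the paper's proof.
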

This construction is preferable when sparse storage or fast overlap computation matters more than the tighter interference of the MUB-derived dictionary. The proof is in \Cref{app:devore-construct}.

The Johnson--Lindenstrauss (JL) lemma gives a related randomized asymptotic statement: $T$ prescribed points can be embedded into
$d=\gO(\eta^{-2}\log(T/\delta))$ dimensions while preserving all pairwise distances within
$1\pm\eta$ with failure probability at most $\delta$
~\citep{johnson84extensions,achlioptas03database}. The same concentration argument yields random
spherical dictionaries with interference at most $\mu$ in
$d=\gO(\mu^{-2}(\log T+\log(1/\delta)))$ dimensions, or equivalently an achievable
$T=\exp(\Omega(\mu^2d))$ at fixed success probability.
These asymptotic random-projection guarantees are often loose at the dimensions of individual
attention heads. Deterministic constructions give exact finite dictionaries and worst-case
interference guarantees in the regimes where they apply. The bounds and finite-size comparison are
collected in \Cref{app:geom-bounds,tab:constructive-capacity}. These constructions illustrate the
available finite regimes, and better real spherical packings may exist. Determining which packings are best suited to attention heads is left to future work.

\subsection{Signal-to-noise analysis}
\label{ssec:snr}

Packing controls the score margin between a matched key and its distractors. Whether that margin is
sufficient for retrieval depends on the readout. To isolate key interference, hold the attention
coefficients fixed and take independent isotropic values
$\vv_j\sim\gN(0,\mI_{d_v}/d_v)$. Then $\mathbb{E}\norm{\vv_j}^2=1$, the cross terms vanish in expectation, and for
$\vo(\vq)=\sum_j a_j(\vq)\vv_j$ with matched index $\star$:
\begin{equation}
  \mathrm{pSNR}(\vq)
    \defeq
    \frac{\mathbb{E}\norm{a_\star\vv_\star}^2}
    {\mathbb{E}\norm{\sum_{j\ne\star}a_j\vv_j}^2}
    =\frac{\lvert a_\star\rvert^2}
    {\sum_{j\ne\star}\lvert a_j\rvert^2}.
  \label{eq:psnr-definition}
\end{equation}
We call the readout \emph{ideally retrievable} when $\mathrm{pSNR}>1$, so the matched value has more
power than the aggregate distractors. This criterion gives the expected power ratio under the isotropic-value model. Exact recall also
depends on the realized values. Deterministic orthogonal unit values give the same identity
when $d_v\ge T$.

\begin{restatable}[Idealized Welch-scale retrieval]{theorem}{optimalsnr}
  \label{thm:snr}
  Let $p\ge2$ and $T>p$. Suppose the $T-1$ distractors have squared correlations:
  \[
    \eps^\star=\frac{T-p}{p(T-1)}
  \]
  with the matched query. Then:
  \begin{align}
    \mathrm{pSNR}_{\mathrm{Lin}}
      &=\frac{1}{(T-1)\eps^\star}
       =\frac{p}{T-p}, \\
    \mathrm{pSNR}_{\mathrm{PSD}}
      &=\frac{1}{(T-1)(\eps^\star)^2}
       =\frac{p^2(T-1)}{(T-p)^2}.
  \end{align}
\end{restatable}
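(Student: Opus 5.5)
The plan is to substitute the attention coefficients directly into the pSNR identity \eqref{eq:psnr-definition}, using the same normalization for both readouts. We take the matched key to coincide with the query direction, so the matched raw correlation is $\lrangle{\vq,\vk_\star}=1$. Each distractor has squared raw correlation $\lrangle{\vq,\vk_j}^2=\eps^\star$ for $j\ne\star$. Because pSNR is a ratio of powers, normalization by the denominator $\psi(\vq)^\top\mZ$ cancels: the gate of \Cref{prop:output-gate} rescales every $a_j$ by the same positive factor. We may therefore work with the unnormalized scores $a_j=\lrangle{\psi(\vq),\psi(\vk_j)}$.

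\textbf{Linear readout.} Here $a_j=\lrangle{\vq,\vk_j}$, so $a_\star^2=1$ and $a_j^2=\eps^\star$ for every distractor. The denominator of \eqref{eq:psnr-definition} is $(T-1)\eps^\star$, which gives $\mathrm{pSNR}_{\mathrm{Lin}}=1/((T-1)\eps^\star)$. Substituting $\eps^\star=(T-p)/(p(T-1))$ cancels the factor $(T-1)$ and leaves $p/(T-p)$.

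\textbf{PSD readout.} Here we use the rank-one lift of \Cref{thm:cap-spd}: $a_j=\lrangle{\vq\vq^\top,\vk_j\vk_j^\top}_F=\lrangle{\vq,\vk_j}^2$. This gives $a_\star=1$ and $a_j=\eps^\star$, hence $a_j^2=(\eps^\star)^2$. The pSNR is $1/((T-1)(\eps^\star)^2)$, and substitution yields $p^2(T-1)^2/((T-1)(T-p)^2)=p^2(T-1)/(T-p)^2$.

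The algebra is routine. The step that needs care is justifying the hypothesis itself, namely why uniform squared correlation $\eps^\star$ is the relevant idealization. I would point out that $\eps^\star$ is exactly the Welch floor $\mu(\R^p,T)^2$ of \Cref{thm:welchbound}. Equality in that bound holds for an equiangular tight frame, in which every off-diagonal squared modulus equals the average. So the hypothesis describes an optimal packing and is not an arbitrary assumption.

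I would also note that the isotropic-value model is what makes the cross terms vanish. Independence with $\E\norm{\vv_j}^2=1$ gives $\E\norm{\sum_{j\ne\star}a_j\vv_j}^2=\sum_{j\ne\star}a_j^2$, which is the identity already recorded in \eqref{eq:psnr-definition}. No further estimate is required.
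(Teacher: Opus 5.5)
Your proof is correct and follows the paper's argument: normalize the matched score to one, note that each distractor contributes power $\eps^\star$ under the linear kernel and $(\eps^\star)^2$ under the rank-one PSD lift, sum over the $T-1$ distractors, and substitute. One caution on a side remark: $\eps^\star$ is the Welch \emph{lower bound} on $\mu(\R^p,T)^2$, not its value in general, because equality requires a real ETF, which needs $T\le p(p+1)/2$. Since the uniform squared correlation is a hypothesis of the theorem, this does not affect the proof.
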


The criterion $\mathrm{pSNR}>1$ gives linear retrieval only for $T<2p$, hence capacity
$\Theta(p)$. The PSD map squares distractor interference and extends the idealized threshold to
$\Theta(p^2)$, showing how the rank-one lift converts the same address geometry into a larger
retrieval regime. This comparison also exposes what changes when the readout is nonlinear:
applying softmax to the same Welch-scale margin yields the following proposition.

\begin{restatable}[Softmax sharpening at the Welch scale]{proposition}{softmaxsnr}
  \label{prop:softmax-snr}
  Under the setup of \Cref{thm:snr}, let
  $\mu^\star=\sqrt{\eps^\star}$ and take the harmful distractor sign
  $+\mu^\star$. Temperature-$\tau$ softmax satisfies:
  \begin{equation}
    \mathrm{pSNR}_{\mathrm{Soft}}
      =\frac{\exp(2(1-\mu^\star)/\tau)}{T-1}.
    \label{eq:softmax-psnr}
  \end{equation}
  It is therefore ideally retrievable precisely when:
  \begin{equation}
    T-1
    <\exp\!\left(\frac{2(1-\mu^\star)}{\tau}\right).
  \end{equation}
  For $T\gg p$, this gives the approximate capacity:
  \begin{equation}
    T_{\mathrm{Soft}}(p,\tau)
    \approx
    1+\exp\!\left(\frac{2(1-1/\sqrt p)}{\tau}\right).
    \label{eq:softmax-capacity-asymptotic}
  \end{equation}
  Equivalently, any finite target length with $\mu^\star<1$ satisfies the criterion whenever:
  \begin{equation}
    \frac{1}{\tau}
    >
    \frac{\log(T-1)}{2(1-\mu^\star)}.
    \label{eq:softmax-required-scale}
  \end{equation}
\end{restatable}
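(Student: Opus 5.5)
Plan: substitute the Welch-scale setup of \Cref{thm:snr} into the softmax weights and compute the pSNR from \eqref{eq:psnr-definition}. The matched query has correlation $1$ with its own key, and we place every distractor at the harmful sign $\lrangle{\vq,\vk_j}=+\mu^\star$. Temperature-$\tau$ softmax gives
\[
  a_\star=\frac{e^{1/\tau}}{Z},\qquad a_j=\frac{e^{\mu^\star/\tau}}{Z}\quad (j\ne\star),
\]
with $Z=e^{1/\tau}+(T-1)e^{\mu^\star/\tau}$.

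The normalizer $Z$ cancels in the ratio, since pSNR is invariant to a common rescaling of the $a_j$. Hence
\[
  \mathrm{pSNR}_{\mathrm{Soft}}=\frac{e^{2/\tau}}{(T-1)e^{2\mu^\star/\tau}},
\]
which is \eqref{eq:softmax-psnr}. This is the point where the nonlinear readout is handled; the cancellation is what makes the computation exact rather than approximate.

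The criterion $\mathrm{pSNR}>1$ is then equivalent to $T-1<\exp(2(1-\mu^\star)/\tau)$. To get the required-scale form \eqref{eq:softmax-required-scale}, take logarithms and divide by $2(1-\mu^\star)>0$. The hypothesis $\mu^\star<1$ guarantees this divisor is positive, which holds because $\eps^\star<1$ for $T>p\ge2$. The case $T=2$ is trivial, since $\log 1=0$.

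The only non-exact step is the asymptotic capacity \eqref{eq:softmax-capacity-asymptotic}. For $T\gg p$ we have $\eps^\star=(T-p)/(p(T-1))\to 1/p$, so $\mu^\star\to1/\sqrt p$ from below. Substituting this limit into the threshold equation $T-1=\exp(2(1-\mu^\star)/\tau)$ gives the stated approximation.

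I expect this to be the main obstacle only in presentation. The threshold equation is implicit, because $\mu^\star$ depends on $T$. I would state that the error is controlled by $|\mu^\star-1/\sqrt p|=\gO(1/(\sqrt p\,T))$, which makes the exponent's relative correction negligible when $\tau$ is not too small. I would therefore present the result as a leading-order approximation rather than a sharp fixed point.
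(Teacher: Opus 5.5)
Your proof is correct and follows the paper's argument. The softmax partition cancels from pSNR, the retrievability criterion and the required-scale inequality follow by taking logarithms, and the asymptotic capacity comes from $\mu^\star\to1/\sqrt p$; the paper's additional shift by the Lorentz bias $z_j+1\ge0$ is softmax-invariant and plays no role in the computation.

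One slip in your optional error remark: since $1/p-\eps^\star=(p-1)/(p(T-1))$, you get $1/\sqrt p-\mu^\star\approx(p-1)/(2\sqrt p\,(T-1))=\Theta(\sqrt p/T)$, not $\gO(1/(\sqrt p\,T))$. At the threshold the induced exponent shift is $\approx\sqrt p/(\tau T)$, which vanishes as $\tau\to0$ because $T$ grows like $e^{2(1-1/\sqrt p)/\tau}$. So the regime $T\gg p$ alone controls the error, and no lower bound on $\tau$ is needed.
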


The softmax partition is common to all coefficients and cancels from pSNR; exponentiation then
sharpens the remaining angular margin. Every fixed positive temperature gives a finite idealized
capacity; for a fixed positive score margin, this capacity is exponential in $1/\tau$. Conversely,
writing $\beta_T=1/\tau$, an unconstrained inverse temperature removes any finite dimension-only
ceiling, with $\beta_T=\Theta(\log T)$ for a fixed positive margin.
This recovers, from a complementary retrieval perspective, the critical logarithmic scale identified
for long-context softmax dynamics by \citet{chen25longcontext}. In our framework the law follows
directly from Welch-scale interference and the pSNR criterion: the matched score must be amplified
logarithmically to remain above the aggregate power of a growing number of distractors. Proofs are
given in \Cref{app:optimalsnr,app:softmaxsnr}.


An alternative perspective on capacity comes from associative-memory theory, which asks how
many patterns remain stable and retrievable under a specified update rule, noise model, and success
criterion~\citep{krotov25mam}. Higher-order and exponential interactions can raise these dynamical
storage thresholds
from polynomial to exponential regimes
~\citep{krotov16dense,demircigil17model,lucibello24exponential}. Our $\eps$-capacity measures the quality of the address packing at a fixed interference tolerance. Under the idealized orthogonal-value readout above, that packing directly determines pSNR and hence whether associative recall succeeds. A nonlinear readout can amplify any positive target--distractor margin, while an exact collision remains irreducible. The specified update rule, noise model, and optimization then determine how closely empirical recall realizes this geometric capacity.

\section{Hardware-Aligned Implementation} \label{sec:hardware}
The capacity story of \Cref{sec:expressivity} is geometric, but its practical reach depends on how
cleanly the feature map maps onto modern accelerators. Chunkwise parallel training is the standard
scaffold for linear-time recurrent attention: split a length-$\seql$ sequence into chunks of size $C$,
yielding
$N_C\defeq\lceil \seql/C\rceil$ chunks with the final chunk padded when necessary, run intra-chunk
attention on tensor cores, and propagate the inter-chunk state by a sequential prefix
sum~\citep{hua22transformerquality}. FLA's chunk-parallel linear attention~\citep{yang24fla}
and Gated DeltaNet's non-commutative Householder transition in the compact WY
representation~\citep{schreiber89compactwy,yang25gateddeltanet} share this decomposition; what differs
is the inter-chunk update. GDN requires a
$C\!\times\!C$ triangular solve per chunk, expanding the per-token state into
high-bandwidth memory (HBM) and tying chunk size to the solve's conditioning. Mamba2's
data-dependent transition is likewise sequential at the chunk level. An additive feature-state recurrence
avoids both costs and admits a parallel scan that lowers inter-chunk depth from $\Theta(N_C)$ to
$\gO(\log N_C)$.

\subsection{From rank-one rays to low-rank PSD packing}
\label{ssec:hardware-psd-maps}

From this point onward, unqualified references to a \emph{KATA mixer} or \emph{KATA kernel}
denote PSD-resident features. Orthant and Lorentz variants are named explicitly. The full
construction uses the rank-one map $\psi(\vu)=\vu\vu^\top$. A direct outer-product layout has $d^2$ entries for head dimension $d$,
or $n=d(d+1)/2$ unique coordinates after symmetry packing. Thus $n_\psi=n$ for the full PSD
map. Linear attention must retain the
feature--value state:
\[
  \mS_t=\Psi(\mK_{1:t})^\top\mV_{1:t}\in\R^{n\times d_v},
\]
so its dominant storage is $n d_v$; the feature width alone understates the memory cost. At $d=64$, the direct and packed
widths are already $4096$ and $2080$, respectively. Multiplication by $d_v$ places the full PSD map
at the edge of hardware-friendly state sizes even at this standard head dimension.

We therefore test two reduced PSD families. Let $g\in\mathbb{N}$ divide $d$, set $m=d/g$, and
partition each projected query or key into $g$ blocks
$\vu=(\vu_1,\ldots,\vu_g)$ with $\vu_a\in\R^m$. KATA-$\Sigma g$ sums the block outer products
into one PSD feature:
\[
  \psi_{\Sigma g}(\vu)
  =\sum_{a=1}^{g}\vu_a\vu_a^\top\in\sS_+^m,
  \qquad
  \lrangle{\psi_{\Sigma g}(\vq),\psi_{\Sigma g}(\vk)}_F
  =\sum_{a,b=1}^{g}(\vq_a^\top\vk_b)^2.
\]
Its dominant state size is:
\[
  \frac{m(m+1)}{2}d_v
  =\frac{(d/g)(d/g+1)}{2}d_v.
\]
Thus KATA-$\Sigma g$ compresses $g$ outer products into a single low-rank PSD factor.

KATA-M$g$ instead retains one PSD factor per block. The raw inner product decomposes as
$\vq^\top\vk=\sum_{a=1}^{g}\vq_a^\top\vk_a$, while the blockwise PSD map is:
\[
  \psi_{\mathrm{M}g}(\vu)
  =\bigl(\vu_1\vu_1^\top,\ldots,\vu_g\vu_g^\top\bigr)
  \in(\sS_+^m)^g,
  \qquad
  \lrangle{\psi_{\mathrm{M}g}(\vq),\psi_{\mathrm{M}g}(\vk)}
  =\sum_{a=1}^{g}(\vq_a^\top\vk_a)^2.
\]
This maps into the Cartesian product of $g$ copies of $\sS_+^{d/g}$ and has dominant state size:
\[
  g\frac{m(m+1)}{2}d_v
  =\frac{d(d/g+1)}{2}d_v.
\]
Both families recover the full rank-one PSD map at $g=1$. A \emph{Delta} prefix adds the delta
rule, a \emph{Gated} prefix adds the learned gate, and \emph{GatedDelta} combines both without
changing the underlying PSD feature geometry. Their recurrences and implementation details are
specified in \Cref{app:variants,app:spd-delta,app:mechanism-archs}.

\paragraph{Chunkwise forward.} Let $n_\psi$ denote the active feature dimension. We partition the sequence into chunks of size $C$ and write the per-chunk feature matrices as
$\Psi(\mQ_{[t]}),\Psi(\mK_{[t]})\in\R^{C\times n_\psi}$ and $\mV_{[t]}\in\R^{C\times d_v}$. The
inter-chunk state update reads:
\begin{equation}
  \mS_{[t+1]}  = \mS_{[t]} + \Psi(\mK_{[t]})^\top \mV_{[t]},\quad
  \mZ_{[t+1]}  = \mZ_{[t]} + \Psi(\mK_{[t]})^\top \vone_C. \label{eq:chunk-SZ}
\end{equation}
With $\mM\in\{0,1\}^{C\times C}$ the lower-triangular causal mask and
$\mP=(\Psi(\mQ_{[t]})\Psi(\mK_{[t]})^\top)\odot\mM$, the chunk output is:
\begin{align}
  \mN_{[t]} & = \Psi(\mQ_{[t]})\mS_{[t]} + \mP\,\mV_{[t]} \\
  \mD_{[t]} & = \Psi(\mQ_{[t]})\mZ_{[t]} + \mP\,\vone_C   \\
  \mO_{[t]} & = \diag(\mD_{[t]})^{-1}\mN_{[t]}
  \label{eq:chunk-out}
\end{align}
Every operation in this pass is a dense general matrix multiplication (GEMM), an element-wise
op, or a warp reduction. Relative to Gated DeltaNet's WY representation, which forms and
inverts a $C\!\times\!C$ unitriangular
matrix per chunk, the inter-chunk cost drops from $\gO(Cd_k^2 + d_k^2 d_v)$ to $\gO(Cn_\psi d_v)$,
and available SRAM sets the chunk-size limit. Triangular-solve conditioning no longer constrains
it. The full pass is summarized in \Cref{alg:kata}.

\subsection{Tree-scan inter-chunk reduction}\label{ssec:treescan}
\begin{figure*}[t]
  \centering
  \begin{tikzpicture}[
      scale=0.78, transform shape,
      font=\sffamily,
      >=Stealth,
      chunk/.style={rectangle, draw=black!60, thick, fill=red!15, text width=2.15cm, align=center, rounded
          corners, minimum height=1.0cm},
          localstate/.style={rectangle, draw=black!60, thick, fill=yellow!30,
          text width=1.0cm, align=center, rounded corners, minimum height=1cm},
          scannode/.style={circle,
          draw=black!60, thick, fill=cyan!20, minimum size=1.0cm, inner sep=0pt},
          prefnode/.style={rectangle, draw=black!60, thick, fill=purple!20, text width=1.5cm, align=center, rounded corners, minimum height=0.8cm},
          outputblock/.style={rectangle, draw=black!60, thick, fill=green!20, text
          width=3.05cm, align=center, rounded corners, minimum height=1cm},
          finalout/.style={rectangle, draw=black!60, thick, fill=gray!15, text width=1.5cm, align=center, rounded corners, minimum
          height=0.8cm},
      arrow/.style={->, thick, draw=black!80}, crossarrow/.style={->, thick, draw=black!80},
      skiparrow/.style={->, thick, dashed, draw=red!60}, shiftarrow/.style={->, thick, draw=purple!80},
      layerlabel/.style={text width=4.5cm, align=left, font=\sffamily\color{black!80}} ]
    \def\xa{0}
    \def\xb{4.2}
    \def\xc{8.4}
    \def\xd{12.6}
    \def\xlegend{16.8}

    \def\yIn{9.75}
    \def\yL{8.25}
    \def\ySone{6.25}
    \def\yStwo{4.75}
    \def\yP{3.5}
    \def\yOutBlock{1.75}
    \def\yFinal{0.0}

    \node[chunk] (IN0) at (\xa, \yIn) {$\mQ_{[0]}, \mK_{[0]}, \mV_{[0]}$};
    \node[chunk] (IN1) at (\xb, \yIn) {$\mQ_{[1]}, \mK_{[1]}, \mV_{[1]}$};
    \node[chunk] (IN2) at (\xc, \yIn) {$\mQ_{[2]}, \mK_{[2]}, \mV_{[2]}$};
    \node[chunk] (IN3) at (\xd, \yIn) {$\mQ_{[3]}, \mK_{[3]}, \mV_{[3]}$};

    \node[layerlabel] at (\xlegend, \yIn) {\textbf{Input Sequence}\\Split into $N_C$ chunks\\Length $C$};

    \node[localstate] (L0) at (\xa, \yL) {$\mS_{[0]}^{\text{loc}}$};
    \node[localstate] (L1) at (\xb, \yL) {$\mS_{[1]}^{\text{loc}}$};
    \node[localstate] (L2) at (\xc, \yL) {$\mS_{[2]}^{\text{loc}}$};
    \node[localstate] (L3) at (\xd, \yL) {$\mS_{[3]}^{\text{loc}}$};

    \foreach \i in {0,1,2,3} {
        \draw[arrow] (IN\i) -- (L\i) node[midway, right, font=\scriptsize] {$\Psi(\mK_{[\i]})^\top\mV_{[\i]}$};
      }

    \node[layerlabel] at (\xlegend, \yL) {\textbf{Intra-Chunk State}\\Computed in parallel\\Time: $\mathcal{O}(C)$};

    \node[scannode] (S10) at (\xa, \ySone) {$\oplus$};
    \node[scannode] (S11) at (\xb, \ySone) {$\oplus$};
    \node[scannode] (S12) at (\xc, \ySone) {$\oplus$};
    \node[scannode] (S13) at (\xd, \ySone) {$\oplus$};

    \draw[arrow] (L0) -- (S10);
    \draw[arrow] (L1) -- (S11);
    \draw[arrow] (L2) -- (S12);
    \draw[arrow] (L3) -- (S13);

    \draw[crossarrow] (L0) -- (S11);
    \draw[crossarrow] (L1) -- (S12);
    \draw[crossarrow] (L2) -- (S13);

    \node[scannode] (S20) at (\xa, \yStwo) {$\oplus$};
    \node[scannode] (S21) at (\xb, \yStwo) {$\oplus$};
    \node[scannode] (S22) at (\xc, \yStwo) {$\oplus$};
    \node[scannode] (S23) at (\xd, \yStwo) {$\oplus$};

    \draw[arrow] (S10) -- (S20);
    \draw[arrow] (S11) -- (S21);
    \draw[arrow] (S12) -- (S22);
    \draw[arrow] (S13) -- (S23);

    \draw[crossarrow] (S10) -- (S22);
    \draw[crossarrow] (S11) -- (S23);

    \begin{scope}[on background layer]
      \node[fill=blue!5, rounded corners, fit=(S10) (S23) (S13) (S20), inner sep=15pt] (scanbox) {};
    \end{scope}

    \node[layerlabel] at (\xlegend, 6.25) {\textbf{Associative Scan}\\Parallel prefix replaces\\sequential recurrence.\\Time: $\mathcal{O}(\log N_C)$};

    \node[prefnode] (P0) at (\xa, \yP) {$\mS_{[0]}^{\text{pre}} = \mathbf{0}$};
    \node[prefnode] (P1) at (\xb, \yP) {$\mS_{[1]}^{\text{pre}}$};
    \node[prefnode] (P2) at (\xc, \yP) {$\mS_{[2]}^{\text{pre}}$};
    \node[prefnode] (P3) at (\xd, \yP) {$\mS_{[3]}^{\text{pre}}$};

    \draw[shiftarrow] (S20) -- (P1) node[midway, above left, font=\scriptsize, text=black] {shift};
    \draw[shiftarrow] (S21) -- (P2) node[midway, above left, font=\scriptsize, text=black] {shift};
    \draw[shiftarrow] (S22) -- (P3) node[midway, above left, font=\scriptsize, text=black] {shift};

    \node[layerlabel] at (\xlegend, \yP) {\textbf{Inter-Chunk Prefix}\\$\mS^{\text{pre}}_{[t]} = \sum_{c<t}\mS^{\text{loc}}_{[c]}$};

      \foreach \i in {0,1,2,3} {
      \node[outputblock] (OUT\i) at (\i*4.2, \yOutBlock) {
      \footnotesize
      \textbf{Inter:} $\Psi(\mQ_{[\i]})\mS_{[\i]}^{\text{pre}}$\\[1mm]
      \textbf{Intra:} $\Psi(\mQ_{[\i]})\Psi(\mK_{[\i]})^\top,\mV_{[\i]}$
    };

    \draw[arrow] (P\i) -- (OUT\i);

    \draw[skiparrow, rounded corners=5pt]
    (IN\i.west) -- ++(-1,0) |- (OUT\i.west);
    }

    \node[layerlabel] at (\xlegend, \yOutBlock) {\textbf{Combine Chunks}\\Time: $\mathcal{O}(C^2)$};

    \node[finalout] (F0) at (\xa, \yFinal) {$\mO_{[0]}$};
    \node[finalout] (F1) at (\xb, \yFinal) {$\mO_{[1]}$};
    \node[finalout] (F2) at (\xc, \yFinal) {$\mO_{[2]}$};
    \node[finalout] (F3) at (\xd, \yFinal) {$\mO_{[3]}$};

    \foreach \i in {0,1,2,3} {
        \draw[arrow] (OUT\i) -- (F\i) node[midway, right, font=\scriptsize] {Add};
      }

    \node[layerlabel] at (\xlegend, \yFinal) {\textbf{Final Embeddings}};

  \end{tikzpicture}
  \caption{\textbf{Chunkwise associative scan.}
  Each chunk forms its local state in parallel. An exclusive scan produces inter-chunk prefixes in
  $\lceil\log_2 N_C\rceil$ rounds, and each output kernel combines its prefix with the local
  $C\!\times\!C$ causal term. Only $\mS$ is shown; $\mZ$ follows the same scan.}
  \label{fig:tree-scan}
\end{figure*}

A useful observation is that the recurrence \cref{eq:chunk-SZ} is an associative prefix sum across
the $N_C$ chunks. Existing chunk-parallel implementations, including the FLA and Gated DeltaNet
chunk paths, traverse this prefix sequentially with depth $\Theta(N_C)$. Because
$(\mS_\text{loc},\mZ_\text{loc})$ is a vector-space cumulative sum, however, the same prefix admits
a parallel scan with depth $\gO(\log_2 N_C)$. Our forward decomposes
into three Triton kernels dispatched in series: a per-chunk \emph{state} kernel with
$B\!\cdot\!H\!\cdot\!N_C$ programs in parallel, a \emph{cumulative} pass that performs a
associative scan in $\lceil\log_2 N_C\rceil$ rounds with $N_C$-wide parallelism per
round, and a per-chunk \emph{output} kernel that combines the inter-chunk prefix with the local
$C\!\times\!C$ causal pattern. The resulting wall-clock model is:
\begin{align*}
  t_\text{total}
  &=\underbrace{\gO\lrbp{C n_\psi d_v}}_{\text{state}}
  +\underbrace{\gO\lrbp{\log_2 N_C\cdot d_v}}_{\text{tree-scan}}
  +\underbrace{\gO\lrbp{C n_\psi d_v+C^2(n_\psi+d_v)}}_{\text{output}},
\end{align*}
replacing the linear-scan baseline's $\gO(N_C d_v)$ middle term. The $n_\psi$ feature slices are independent and execute in parallel.

At $\seql{=}2048$ and $C{=}32$, the scan depth shrinks from $N_C{=}64$ to
$\log_2 N_C{=}6$. Its arithmetic cost is negligible relative to the HBM write of
$\mS_\text{loc}$ at chunk boundaries, which is the actual bottleneck.\par

\paragraph{Occupancy at small batch.} The advantage is largest precisely where a sequential scan
starves the GPU. At $B{=}1$, the $B\!\cdot\!H{=}16$ programs of a sequential chunked scan, as in
FLA or Gated DeltaNet, cannot fill the H100's $132$ SMs, so the forward is occupancy-bound and
plateaus near $25$\,Mtok/s. The tree scan instead exposes $B\!\cdot\!H\!\cdot\!N_C$ programs, which
scale with the actual work. \Cref{fig:btsweep_toks,tab:treescan} report the full
$B\times\seql$ grid and its $B=1$ slice, respectively. For the linear feature
($\psi(\vx)=\vx$, with a matched $d\times d$ state), the associative scan delivers
$\mathbf{1.9}$--$\mathbf{2.9\times}$ the throughput of the sequential linear-chunk kernel from $2048$
to $131$k tokens, and up to $7.5\times$ the throughput of Gated DeltaNet at $\seql=2048$.\par

\begin{wrapfigure}{r}{0.36\textwidth}
    \centering
    \includegraphics[width=\linewidth]{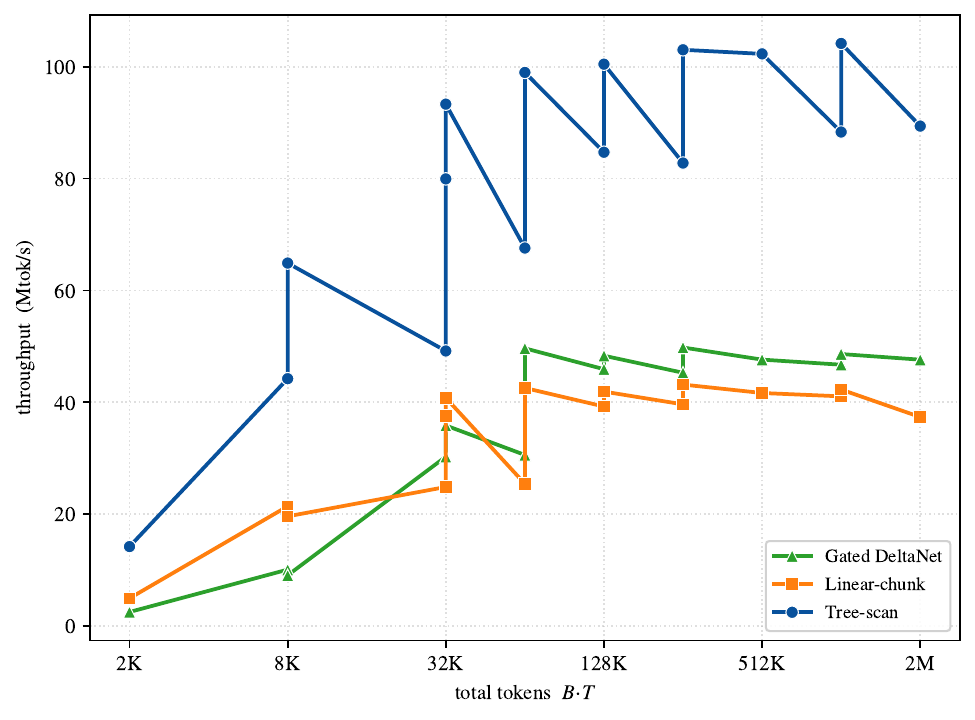}
    \caption{\textbf{Forward throughput over $B\times\seql$.}
    Associative scan, sequential scan, and Gated DeltaNet on H100.}
    \label{fig:btsweep_toks}
\end{wrapfigure}

\paragraph{Why the PSD feature keeps to $\gO(\seql)$ or $\gO(\seql^2)$.} The tree scan is viable only
for the linear feature. A PSD block of width $m=d/g$ has a direct outer-product feature dimension
$m^2$, so a chunk's local state $\psi(\mK_{[c]})^\top\mV_{[c]}$ is an
$m^2\!\times\!d_v$ tile. It occupies $128$\,KB at $g{=}2$ ($m^2{=}1024$) and already exceeds the
H100's $228$\,KB shared-memory budget once the query feature and matrix-multiplication accumulators
are resident. At $g{=}1$ ($m^2{=}4096$), it occupies $1.5$\,MB. A scan must moreover materialize
every chunk's state to HBM, requiring $\gO(N_C m^2 d_v)$ storage traffic. We therefore do not
tree-scan the PSD recurrence. KATA-M$g$ runs either as a quadratic $\gO(\seql^2)$ kernel, which never
forms the $m^2$ state, or as a linear-state $\gO(\seql)$ kernel, whose
$m^2\!\times\!d_v$ state lives in HBM and is streamed in feature blocks. Realizing the
parallel-prefix speedup for PSD features would require this feature blocking within the scan and is
left to future work. Full timings and SRAM budgets are reported in
\Cref{app:tree-scan,app:hardware-bench}.

\subsection{Synthetic induction and hardware microbenchmarks}
\label{ssec:results-aux}

KATA-$\Sigma2$ solves a $V{=}256$ induction-head probe at every distance up to $2048$ tokens
within an $8$k-step budget, matching or nearly matching softmax with FlashAttention-2 at every scale tested while
avoiding the $\gO(\seql^2)$ memory of dense attention (\Cref{app:induction}). The fused rank-two
KATA-$\Sigma2$ forward scales linearly with $\seql$ at near-peak bandwidth without writing the
per-token $\psi$ tensor to HBM; full timings appear in \Cref{app:hardware-bench}.

\paragraph{Kernel latency and throughput ratios.} \Cref{fig:tsweep} sweeps sequence length on an NVIDIA~H100 for the two
computational forms of the KATA-M$g$ kernel. Across the tested lengths, the quadratic KATA-M1
forward delivers $1.3$--$1.6\times$ the throughput of FlashAttention-2, while KATA-M2 delivers
$1.2$--$1.4\times$ the throughput. Both retain the same $\gO(\seql^2)$ scaling. The exact linear-state PSD
kernel instead trades an $m^2{\times}d_v$ recurrent state, with $m=d/g$, for $\gO(\seql)$ compute.
KATA-M2 reaches parity with its quadratic counterpart near $16$K and is faster by $32$K; KATA-M1
is faster by $128$K. At $131$K, linear-state KATA-M2 reaches $11\times$ FlashAttention-2's forward
throughput while implementing the exact PSD recurrence. Gated DeltaNet remains faster, but its
raw-key feature loses high-entropy needle information (\Cref{tab:niah}). The measured latency scaling
approaches $2\times$ per doubling for the linear kernels and $4\times$ for the quadratic kernels.

\begin{figure}[t]
    \centering
    \includegraphics[width=\textwidth]{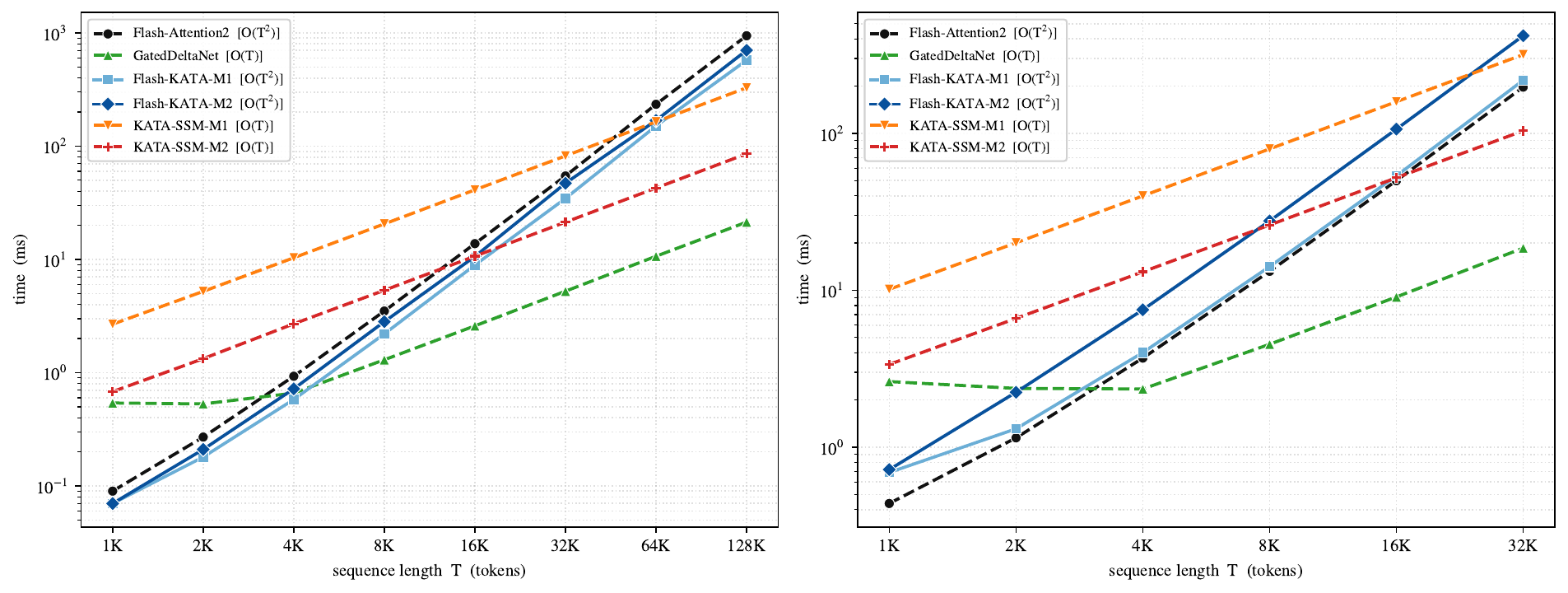}
    \caption{\textbf{KATA-M$g$ latency versus sequence length on an NVIDIA H100.}
    Forward latency (left) and forward-plus-backward latency (right) at
    $B{=}8$, $H{=}16$, $d_\text{head}{=}64$, and bf16 on log--log axes.
    Solid curves are the $\gO(\seql^2)$ kernels, FlashAttention-2 and quadratic KATA-M$g$;
    dashed curves are the $\gO(\seql)$ recurrences, Gated DeltaNet and linear-state KATA-M$g$.
    Lower is better; exact timings appear in
    \Cref{tab:tsweep-fwd,tab:tsweep-bwd}.}
    \label{fig:tsweep}
\end{figure}

\section{Experiments} \label{sec:experiments}
The theory of \Cref{sec:expressivity} predicts that associative-recall behavior separates by cone,
and the kernel of \Cref{sec:hardware} makes the corresponding feature maps cheap to run. We test
both predictions on three synthetic tasks that isolate distinct failure modes: multi-query
associative recall (MQAR) with controlled length extrapolation, a repeated-key overwrite probe,
and a long-range induction-head probe. We then scale the recall test to pretrained 340M-parameter language models
with a needle-in-a-haystack suite. Only KATA variants instantiated on the PSD cone as described in \Cref{ssec:hardware-psd-maps} are compared against vanilla linear
attention~\citep{katharopoulos20transformers}, Performer~\citep{choromanski21performers},
DeltaNet~\citep{yang24deltanet}, Gated DeltaNet~\citep{yang25gateddeltanet},
Mamba2~\citep{dao24mamba2}, and softmax attention. Concrete KATA configurations are named by feature family, KATA-$\Sigma g$ or KATA-M$g$.
The prefixes \emph{Delta}, \emph{Gated}, and \emph{GatedDelta} add the content erase,
multiplicative state gate, or both; for example, GatedDeltaKATA-M2 places PSD features inside the Gated DeltaNet raw-key
recurrence. Per-architecture details and reproduction recipes
are in \Cref{app:mechanism-archs,app:experiments-extended}. Experimental configurations
and the early-stopping criterion are matched across methods.

\subsection{Multi-query associative recall (MQAR)}
\label{ssec:results-capacity}

We use the canonical Zoology MQAR setup~\citep{arora23zoology}. Training mixes streams up to
$K_\text{train}{=}64$ at $\seql=256$; evaluation grids the OOD slice $K\in\{128,256,512,768,1024\}$
over $\seql\in\{512,\dots,4096\}$, so the largest evaluation slice is $\mathbf{16\times}$ the longest
training sequence and the largest training $K$. We report the best accuracy across the three-point
sweep $\mathrm{lr}\in\{10^{-3},3\!\times\!10^{-3},10^{-2}\}$. KATA-Orthant and KATA-Lorentz drop
into the same backbone as every other row, varying only the feature map.
\begin{table*}[t]
  \centering
  \fontsize{7.7pt}{9.2pt}\selectfont
  \setlength{\tabcolsep}{2pt}
  \begin{tabular}{l|c|rr|cccccc|ccccc}
    \toprule
    \multirow{3}{*}{Method}
      & \multirow{3}{*}{$d_h$}
      & \multirow{3}{*}{\#params}
      & \multirow{3}{*}{state (\#dim)}
      & \multicolumn{6}{c|}{MQAR length extrapolation}
      & \multicolumn{5}{c}{Repeated-key overwrite} \\
    \cmidrule(lr){5-10}\cmidrule(lr){11-15}
      & & & &
      \multicolumn{5}{c|}{\emph{OOD}}
      & \multirow{2}{*}{Avg.}
      & \multicolumn{1}{c|}{\emph{ID}}
      & \multicolumn{3}{c|}{\emph{OOD}}
      & \multirow{2}{*}{Avg.} \\
    \cmidrule(lr){5-9}\cmidrule(lr){11-11}\cmidrule(lr){12-14}
      & & & &
      $\seql{=}512$ & $\seql{=}1024$ & $\seql{=}2048$ & $\seql{=}3072$ & $\seql{=}4096$ &
      $K{=}64$ & $K{=}128$ & $K{=}256$ & $K{=}384$ & \\
    \midrule
    Softmax attn.
      & 64 & 1{,}132{,}928 & 1{,}048{,}576$^\star$
      & 1.000 & 1.000 & 1.000 & 1.000 & 1.000 & 1.000
      & 0.513 & 0.506 & 0.499 & 0.491 & 0.502 \\
    Softmax+RoPE ($\theta{=}10^4$)
      & 64 & 1{,}132{,}928 & 1{,}048{,}576$^\star$
      & 0.994 & 0.658 & 0.134 & 0.036 & 0.012 & 0.367
      & 0.998 & 0.890 & 0.403 & 0.160 & 0.613 \\
    Softmax+RoPE ($\theta{=}500$)
      & 64 & 1{,}132{,}928 & 1{,}048{,}576$^\star$
      & 0.741 & 0.046 & 0.000 & 0.001 & 0.001 & 0.158
      & 0.991 & 0.670 & 0.140 & 0.025 & 0.457 \\
    \midrule
    RWKV7
      & 64 & 1{,}162{,}880 & 8{,}192
      & 0.709 & 0.224 & 0.051 & 0.022 & 0.013 & 0.204
      & 0.820 & 0.341 & 0.084 & 0.030 & 0.319 \\
    Mamba2
      & 64 & 1{,}217{,}548 & 16{,}384
      & 0.845 & 0.413 & 0.173 & 0.094 & \underline{0.054} & 0.316
      & 0.930 & 0.556 & 0.199 & 0.127 & 0.453 \\
    GLA
      & 64 & 1{,}154{,}624 & 8{,}192
      & 0.900 & 0.496 & 0.151 & 0.067 & 0.038 & 0.330
      & 0.768 & 0.317 & 0.082 & 0.030 & 0.299 \\
    DeltaNet
      & 64 & 1{,}134{,}272 & 8{,}192
      & \underline{0.992} & \underline{0.733} & \underline{0.249} & \underline{0.101} & 0.047 & \underline{0.424}
      & \underline{0.984} & \underline{0.691} & \underline{0.306} & \underline{0.171} & \underline{0.538} \\
    Gated DeltaNet
      & 64 & 1{,}167{,}878 & 8{,}192
      & \textcolor{blue}{\textbf{1.000}} & \textcolor{blue}{\textbf{0.944}} & \textcolor{blue}{\textbf{0.515}} & \textcolor{blue}{\textbf{0.279}} & \textcolor{blue}{\textbf{0.170}} & \textcolor{blue}{\textbf{0.582}}
      & \textcolor{blue}{\textbf{0.998}} & \textcolor{blue}{\textbf{0.853}} & \textcolor{blue}{\textbf{0.384}} & \textcolor{blue}{\textbf{0.217}} & \textcolor{blue}{\textbf{0.613}} \\
    \cmidrule(lr){1-13}
    RWKV7
      & 128 & 2{,}489{,}600 & 32{,}768
      & 0.982 & 0.670 & 0.192 & 0.070 & 0.032 & 0.389
      & 0.881 & 0.376 & 0.089 & 0.031 & 0.344 \\
    Mamba2
      & 128 & 2{,}629{,}132 & 32{,}768
      & 0.824 & 0.336 & 0.066 & 0.021 & 0.008 & 0.251
      & \underline{0.936} & 0.595 & 0.224 & 0.144 & 0.475 \\
    GLA
      & 128 & 2{,}505{,}856 & 32{,}768
      & \underline{0.993} & 0.824 & 0.369 & 0.178 & 0.100 & 0.493
      & 0.897 & 0.404 & 0.105 & 0.037 & 0.361 \\
    DeltaNet
      & 128 & 2{,}432{,}384 & 32{,}768
      & \textcolor{blue}{\textbf{1.000}} & \underline{0.990} & \underline{0.526} & \underline{0.244} & \underline{0.138} & \underline{0.580}
      & \textcolor{blue}{\textbf{1.000}} & \underline{0.971} & \underline{0.585} & \underline{0.347} & \underline{0.726} \\
    Gated DeltaNet
      & 128 & 2{,}565{,}126 & 32{,}768
      & \textcolor{blue}{\textbf{1.000}} & \textcolor{blue}{\textbf{0.997}} & \textcolor{blue}{\textbf{0.670}} & \textcolor{blue}{\textbf{0.352}} & \textcolor{blue}{\textbf{0.186}} & \textcolor{blue}{\textbf{0.641}}
      & \textcolor{blue}{\textbf{1.000}} & \textcolor{blue}{\textbf{0.978}} & \textcolor{blue}{\textbf{0.595}} & \textcolor{blue}{\textbf{0.358}} & \textcolor{blue}{\textbf{0.733}} \\
    \cmidrule(lr){1-13}
    RWKV7
      & 256 & 2{,}489{,}600 & 65{,}536
      & \textcolor{blue}{\textbf{1.000}} & 0.986 & 0.661 & 0.332 & 0.175 & 0.631
      & 0.938 & 0.434 & 0.110 & 0.038 & 0.380 \\
    GLA
      & 256 & 5{,}798{,}144 & 65{,}536
      & \textcolor{blue}{\textbf{1.000}} & \underline{0.996} & \underline{0.902} & \textcolor{blue}{\textbf{0.689}} & \textcolor{blue}{\textbf{0.491}} & \textcolor{blue}{\textbf{0.816}}
      & 0.927 & 0.450 & 0.132 & 0.050 & 0.390 \\
    DeltaNet
      & 256 & 5{,}520{,}128 & 65{,}536
      & \textcolor{blue}{\textbf{1.000}} & \textcolor{blue}{\textbf{1.000}} & 0.865 & 0.535 & 0.342 & 0.748
      & \textcolor{blue}{\textbf{1.000}} & \textcolor{blue}{\textbf{0.998}} & \textcolor{blue}{\textbf{0.851}} & \textcolor{blue}{\textbf{0.569}} & \textcolor{blue}{\textbf{0.855}} \\
    Gated DeltaNet
      & 256 & 6{,}047{,}750 & 65{,}536
      & \textcolor{blue}{\textbf{1.000}} & \textcolor{blue}{\textbf{1.000}} & \textcolor{blue}{\textbf{0.909}} & \underline{0.618} & \underline{0.419} & \underline{0.789}
      & \underline{0.999} & \underline{0.992} & \underline{0.726} & \underline{0.419} & \underline{0.784} \\
    \midrule
    KATA-$\Sigma4$
      & 64 & 1{,}134{,}016 & 17{,}408
      & \textcolor{blue}{\textbf{1.000}} & \underline{0.997} & 0.865 & 0.610 & 0.407 & 0.776
      & 0.517 & 0.493 & 0.364 & 0.226 & 0.400 \\
    KATA-$\Sigma2$
      & 64 & 1{,}134{,}016 & 67{,}584
      & \textcolor{blue}{\textbf{1.000}} & \textcolor{blue}{\textbf{1.000}} & \textcolor{blue}{\textbf{1.000}} & 0.995 & 0.958 & 0.991
      & 0.518 & 0.505 & 0.444 & 0.342 & 0.452 \\
    KATA-M4
      & 64 & 1{,}134{,}016 & 69{,}632
      & \textcolor{blue}{\textbf{1.000}} & \textcolor{blue}{\textbf{1.000}} & \underline{0.982} & 0.877 & 0.707 & 0.913
      & 0.515 & 0.484 & 0.346 & 0.219 & 0.391 \\
    KATA-M2
      & 64 & 1{,}134{,}016 & 135{,}168
      & \textcolor{blue}{\textbf{1.000}} & \textcolor{blue}{\textbf{1.000}} & \textcolor{blue}{\textbf{1.000}} & 0.997 & 0.970 & 0.993
      & 0.521 & 0.509 & 0.492 & 0.436 & 0.489 \\
    KATA-M2+RoPE
      & 64 & 1{,}134{,}016 & 135{,}168
      & \underline{0.979} & 0.384 & 0.040 & 0.007 & 0.003 & 0.283
      & \underline{0.992} & 0.696 & 0.087 & 0.012 & 0.447 \\
    DeltaKATA-M2
      & 64 & 1{,}134{,}272 & 135{,}168
      & \textcolor{blue}{\textbf{1.000}} & \textcolor{blue}{\textbf{1.000}} & \textcolor{blue}{\textbf{1.000}} & 0.995 & 0.955 & 0.990
      & \textcolor{blue}{\textbf{1.000}} & \underline{0.999} & \underline{0.992} & \underline{0.922} & \underline{0.978} \\
    GatedKATA-M2
      & 64 & 1{,}134{,}276 & 135{,}168
      & \textcolor{blue}{\textbf{1.000}} & \textcolor{blue}{\textbf{1.000}} & \textcolor{blue}{\textbf{1.000}} & 0.996 & 0.965 & 0.992
      & 0.991 & 0.662 & 0.193 & 0.074 & 0.480 \\
    GatedDeltaKATA-M2
      & 64 & 1{,}134{,}532 & 135{,}168
      & \underline{0.979} & 0.699 & 0.258 & 0.119 & 0.062 & 0.423
      & \textcolor{blue}{\textbf{1.000}} & \underline{0.999} & 0.983 & 0.859 & 0.960 \\
    KATA-M1
      & 64 & 1{,}134{,}016 & 266{,}240
      & \textcolor{blue}{\textbf{1.000}} & \textcolor{blue}{\textbf{1.000}} & \textcolor{blue}{\textbf{1.000}} & \underline{0.998} & \underline{0.985} & \underline{0.997}
      & 0.521 & 0.511 & 0.503 & 0.483 & 0.505 \\
    DeltaKATA-M1
      & 64 & 1{,}134{,}272 & 266{,}240
      & \textcolor{blue}{\textbf{1.000}} & \textcolor{blue}{\textbf{1.000}} & \textcolor{blue}{\textbf{1.000}} & \textcolor{blue}{\textbf{1.000}} & \textcolor{blue}{\textbf{0.999}} & \textcolor{blue}{\textbf{1.000}}
      & \textcolor{blue}{\textbf{1.000}} & \textcolor{blue}{\textbf{1.000}} & \textcolor{blue}{\textbf{0.999}} & \textcolor{blue}{\textbf{0.998}} & \textcolor{blue}{\textbf{0.999}} \\
    \bottomrule
  \end{tabular}
  \caption{\textbf{MQAR length extrapolation and repeated-key overwrite.}
  Accuracy (higher is better); Avg. is the unweighted mean over the displayed settings.
  Model size and state dimension use the MQAR configuration.
  $^\star$Softmax reports its KV-cache size at $\seql{=}4096$.
  Within each task and block, the best and second-best fixed-state results are shown in bold blue and underlined, respectively.}
  \label{tab:mqar-overwrite}
  \label{tab:overwrite}
  \label{tab:mqar}
\end{table*}

In-distribution accuracy saturates for every fixed-state row, so the differences arise entirely on
the out-of-distribution slices. The orthant and Lorentz baselines collapse just past
$K\!\approx\!d_\text{head}$, where the linear Welch and Rankin ceilings of
\Cref{thm:cap-lor,thm:welchbound} predict. The PSD variants extend recall into the
$16\times$-extrapolation regime. At the largest slice, KATA-M1 retains $0.985$ accuracy with a
$266{,}240$-entry fixed state, roughly one quarter of softmax's $1{,}048{,}576$ KV-cache entries;
DeltaKATA-M1 reaches $0.999$ at the same state size. GatedDeltaKATA-M2 exposes the recurrence
trade-off: its average overwrite accuracy rises to $0.960$, while its MQAR average falls to $0.423$.
The convex-gate ablation, which caps $\norm{\mS_t}_F$ inside the convex hull of instantaneous
bindings, halves the OOD accuracy of the additive form (full table in \Cref{app:experiments-extended}).

\subsection{Repeated-key overwrite}
\label{ssec:results-overwrite}

Each of $K$ unique keys is written twice with two \emph{different} values, and the model must
retrieve the most recent binding. Training mixes $K\in\{4,8,16,32\}$ at sequence lengths up to
$256$; evaluation extends to $K\in\{64,128,256,384\}$ with $\seql\le 3072$. The backbone is the
two-layer Zoology transformer with a width-$3$ convolutional mixer, parameter-matched across rows.

For a repeated key, an additive state writes both values along the same feature direction and
therefore contains a term proportional to
$\psi(\vk)(\vv_{\mathrm{old}}+\vv_{\mathrm{new}})^\top$. The readout at that address sees
their superposition without temporal information. RoPE resolves the ambiguity in-distribution by
assigning the two writes different position-dependent features. Because this benchmark always
targets the latest write, the model learns to favor the most recent association. In a general
retrieval problem, the earlier association may remain correct, and this learned recency preference
can select the newer value. The positional shortcut also extrapolates poorly in
\Cref{tab:mqar-overwrite}, where both softmax+RoPE and KATA-M2+RoPE deteriorate rapidly beyond the
training distribution.

The convex output gate of \Cref{prop:output-gate} acts at readout and leaves the relative
contributions stored in $\mS_t$ unchanged. GatedKATA places its learned scalar inside the
recurrence. Unrolling
$\mS_t=\gamma_t\mS_{t-1}+\psi(\vk_t)\vv_t^\top$ gives:
\[
  \mS_t
  =\sum_{s=1}^{t}
    \left(\prod_{r=s+1}^{t}\gamma_r\right)
    \psi(\vk_s)\vv_s^\top.
\]
The recurrent gate assigns every write a multiplicative, token-dependent positional weight. At
$K{=}64$, this weighting raises overwrite accuracy from $0.521$ for KATA-M2 to $0.991$ for
GatedKATA-M2. The result supports a positional interpretation of the scalar gate, while state
magnitude control plays a secondary role. Since the product usually shrinks with age, the gate
favors recent writes.

The delta rule corrects the value stored at the current content address. For a unit-norm feature,
let $\widehat{\vv}_t=\mS_{t-1}^\top\psi(\vk_t)$ be the value currently retrieved at $\vk_t$.
The update is:
\[
  \mS_t
  =\mS_{t-1}
  +\beta_t\psi(\vk_t)\bigl(\vv_t-\widehat{\vv}_t\bigr)^\top.
\]
When $\beta_t=1$, reading the same address after the update returns $\vv_t$ exactly. When the
earlier association should be preserved, a learned $\beta_t$ can suppress the correction. The
update strength therefore controls content-dependent replacement at each write.
DeltaKATA-M1 is nearly perfect across the overwrite sweep, while GatedDeltaKATA-M2 reaches
$0.960$ average overwrite accuracy. Full recurrence details are given in \Cref{app:spd-delta}.

\subsection{Pretraining 340M-parameter language models}
\label{ssec:llm-pretraining}

To test whether the synthetic results persist under realistic pretraining, we train a family of
340M-parameter models
($24$ layers, $d_\text{model}{=}1024$, $16$ heads) for $15$B tokens of
SlimPajama~\citep{soboleva23slimpajama} with the TinyLlama tokenizer~\citep{zhang24tinyllama} at a
$2048$-token context, holding everything fixed and swapping only the token-mixing layer: softmax
attention, Gated DeltaNet, KATA-M1 and KATA-M2 (RoPE positions, RMSNorm on $\vq,\vk$),
DeltaKATA-M1 (KATA-M1 with a DeltaNet content erase), and $\ell_2$-norm and short-conv
ablations. The models remain broadly comparable on zero-shot accuracy, while perplexity reveals a meaningful
fluency spread (\Cref{tab:pretrain}). KATA-M1 trails softmax attention, Gated DeltaNet, and
DeltaKATA-M1 on both perplexity datasets; KATA-M2 is higher still. The recall evaluations therefore
separate direct association from contextual retrieval: NIAH primarily probes stored bindings,
while FDA and SQuAD also require context interpretation and candidate selection.
\begin{table*}[ht]
    \centering\footnotesize\setlength{\tabcolsep}{4pt}
  \begin{tabular}{ll|cc|ccccccc}
    \toprule
    \multicolumn{2}{l|}{\multirow{2}{*}{Method}} & \multicolumn{2}{c|}{Perplexity $\downarrow$} & \multicolumn{7}{c}{Accuracy $\uparrow$} \\
    \cmidrule(lr){3-4}\cmidrule(lr){5-11}
    \multicolumn{2}{l|}{}          & LMBD           & WikiT          & LMBD           & ARC-e          & ARC-c & HeSw           & PIQA           & Wino  & BoolQ \\
    \midrule
    \multicolumn{2}{l|}{Softmax attn.}          & 38.6           & 27.6           & 0.334          & 0.412          & 0.234 & 0.346          & 0.645          & 0.503 & 0.584 \\
    \multicolumn{2}{l|}{Gated DeltaNet}         & \textbf{40.7}  & 28.4           & 0.316          & 0.393          & 0.223 & \textbf{0.349} & 0.639          & 0.519 & 0.562 \\
    \midrule
    \multicolumn{2}{l|}{KATA-M1}     & 43.5           & 29.1           & \textbf{0.319} & 0.399          & 0.222 & 0.340          & \textbf{0.642} & 0.519 & 0.585 \\
    \multicolumn{2}{l|}{DeltaKATA-M1}    & 42.5           & \textbf{28.1}  & 0.309          & \textbf{0.403} & 0.208 & 0.343          & 0.641          & 0.502 & 0.597 \\
    \multicolumn{2}{l|}{KATA-M2}     & 51.8           & 30.0           & 0.281          & 0.385          & 0.230 & 0.331          & 0.635          & 0.512 & 0.614 \\
    \addlinespace[1pt]
    \multicolumn{2}{l|}{\emph{ablations:}}       &                &                &                &                &       &                &                &       &       \\
    \quad & $\ell_2$-norm (KATA-M2)             & 57.1           & 32.2           & 0.277          & 0.389          & 0.230 & 0.323          & 0.627          & 0.499 & 0.598 \\
    \quad & short-conv, no RoPE (KATA-M2)       & 84.6           & 34.0           & 0.233          & 0.363          & 0.230 & 0.313          & 0.625          & 0.509 & 0.615 \\
    \bottomrule
  \end{tabular}
    \caption{\textbf{Pretraining quality of the 340M models.}
    For word-level perplexity, lower is better; for zero-shot accuracy, higher is better
    (\texttt{acc\_norm} for ARC and HellaSwag, \texttt{acc} otherwise; LMBD is LAMBADA).
    Bold marks the best sub-quadratic result in the columns discussed in the text.}
    \label{tab:pretrain}
\end{table*}

\paragraph{Why the delta rule remains useful with RoPE.} RoPE assigns repeated occurrences of the
same key different phases, so it can in principle resolve overwrite within the $2048$-token training
window. The resulting separation depends on the attention kernel. Consider an aligned unit query
and key in one rotary plane with angular rate $\omega$ and relative offset
$\Delta=|j-i|$. Relative to $\Delta=0$, the KATA and softmax factors are:
\[
  r_{\mathrm{KATA}}(\Delta)=\cos^2(\omega\Delta),
  \qquad
  r_{\mathrm{softmax}}(\Delta)
  =\exp\!\left(\frac{\cos(\omega\Delta)-1}{\tau}\right).
\]
For $d_\text{head}=64$, RMS-normalized query and key vectors have norm approximately
$\sqrt{d_\text{head}}$ before learned per-channel gains. Combined with the standard
$1/\sqrt{d_\text{head}}$ logit scaling, this gives an effective
$\tau\approx1/\sqrt{d_\text{head}}=1/8$ on normalized directions
(\Cref{app:rms-temperature}). Taking the nominal slow rotary rate
$\omega=1/\theta_{\mathrm{RoPE}}$, with $\theta_{\mathrm{RoPE}}=10^4$ and
$\Delta=1024$, gives:
\[
  \omega\Delta=0.1024,
  \qquad
  r_{\mathrm{KATA}}=0.9896,
  \qquad
  r_{\mathrm{softmax}}=0.9590.
\]
The KATA score loses about $1.0\%$ across this offset, while the exponential softmax weight loses
about $4.1\%$. Squaring also removes the sign of the rotated inner product, further weakening phase
discrimination. The full RoPE map averages across a spectrum of rates, and this single-plane
calculation isolates the effect of the kernel nonlinearity. The delta correction directly updates
the content address and can therefore improve recall even inside the training window.

The multiplicative state weights of GatedKATA provide another source of learned positional decay
and should strengthen this separation. The 340M-model sweep leaves GatedKATA pretraining and its
interaction with RoPE for future work.

\subsection{Needle-in-a-haystack recall}
\label{ssec:results-niah}

Using the same pretrained models, we evaluate the three RULER single-needle tasks
S-NIAH-1/2/3~\citep{hsieh24ruler}, which hide a word, a $7$-digit number, and a $128$-bit UUID at a
random depth in a distractor haystack and query it with a cloze prompt. The tasks form an
increasing-entropy ladder: a UUID is a high-entropy exact string, the regime that most stresses a
fixed-size recurrent state. We score the full $500$ samples per length with greedy decoding and
normalized substring matching at 1K and 2K, both within the $2048$-token training horizon. We
restrict the main capacity comparison to these in-window lengths because the RoPE-based softmax
and KATA models collapse once evaluation moves beyond the training horizon. This protocol isolates
retrieval capacity from positional length extrapolation; OOD lengths are reported in
\Cref{app:experiments-extended}.
\begin{table*}[t]
  \centering
  \fontsize{7.7pt}{9.2pt}\selectfont
  \setlength{\tabcolsep}{2.2pt}
  \begin{tabular}{l|cc|cc|cc|ccccc}
    \toprule
    \multirow{3}{*}{Method}
      & \multicolumn{6}{c|}{RULER needle-in-a-haystack}
      & \multicolumn{5}{c}{Arora'24 (Based) cloze} \\
    \cmidrule(lr){2-7}\cmidrule(lr){8-12}
      & \multicolumn{2}{c|}{S-NIAH-1 (word)}
      & \multicolumn{2}{c|}{S-NIAH-2 (number)}
      & \multicolumn{2}{c|}{S-NIAH-3 (UUID)}
      & \multirow{2}{*}{SWDE}
      & \multirow{2}{*}{FDA}
      & \multirow{2}{*}{SQuAD}
      & \multirow{2}{*}{TriviaQA}
      & \multirow{2}{*}{NQ} \\
    \cmidrule(lr){2-3}\cmidrule(lr){4-5}\cmidrule(lr){6-7}
      & 1K & 2K & 1K & 2K & 1K & 2K & & & & & \\
    \midrule
    Softmax attn.
      & 1.000 & 0.992 & 1.000 & 1.000 & 0.956 & 0.930
      & 0.450 & 0.170 & 0.160
      & 0.290 & 0.227 \\
    \midrule
    Gated DeltaNet
      & 0.996 & 0.954 & 0.384 & 0.748 & 0.004 & 0.004
      & 0.225 & 0.060 & \underline{0.293} & 0.254 & 0.156 \\
    \midrule
    KATA-M1
      & 0.994 & \textcolor{blue}{\textbf{0.990}} & \underline{0.986} & 0.910 & 0.758 & 0.458
      & \textcolor{blue}{\textbf{0.375}} & 0.061 & 0.041 & \underline{0.258} & \textcolor{blue}{\textbf{0.182}} \\
    DeltaKATA-M1
      & \underline{0.998} & 0.950 & \textcolor{blue}{\textbf{1.000}}
      & \textcolor{blue}{\textbf{0.990}} & \underline{0.908} & \underline{0.584}
      & \underline{0.370} & \textcolor{blue}{\textbf{0.148}} & \textcolor{blue}{\textbf{0.353}}
      & \textcolor{blue}{\textbf{0.268}} & \underline{0.179} \\
    KATA-M2
      & \textcolor{blue}{\textbf{1.000}} & \underline{0.974}
      & \textcolor{blue}{\textbf{1.000}} & \underline{0.934}
      & \textcolor{blue}{\textbf{0.910}} & \textcolor{blue}{\textbf{0.616}}
      & 0.338 & \underline{0.118} & 0.240 & 0.256 & 0.160 \\
    \addlinespace[1pt]
    \emph{Ablations:} & & & & & & & & & & & \\
    \quad $\ell_2$-norm (KATA-M2)
      & 1.000 & 0.836 & 0.982 & 0.792 & 0.658 & 0.224
      & 0.308 & 0.086 & 0.259 & 0.228 & 0.142 \\
    \quad short-conv, no RoPE (KATA-M2)
      & 0.956 & 0.960 & 0.948 & 0.926 & 0.760 & 0.530
      & 0.125 & 0.074 & 0.272 & 0.207 & 0.088 \\
    \bottomrule
  \end{tabular}
  \caption{\textbf{In-context recall of pretrained 340M language models.}
  RULER single-needle accuracy at 1K and 2K (left; 500 samples per length) and
  Arora'24 cloze accuracy (right), both under normalized substring matching.
  The best and second-best non-softmax primary methods in each column are shown in
  bold blue and underlined, respectively; diagnostic ablations are unranked.}
  \label{tab:niah}
  \label{tab:qa}
\end{table*}

\Cref{tab:niah} places softmax attention at the recall ceiling ($\ge 0.93$ on every task), while
Gated DeltaNet degrades sharply as needle entropy grows: word, number, and UUID recall fall
$0.996\!\to\!0.384\!\to\!0.004$ at 1K. The PSD maps retain much more high-entropy signal. KATA-M1
remains strong on word and number recall, but its UUID accuracy falls from $0.758$ at 1K to $0.458$
at 2K; KATA-M2 reaches $0.910$ and $0.616$, respectively.

At equal head width, the geometric packing argument favors KATA-M1's full 64-dimensional factor for
pure associative recall. Its recurrent state has $266{,}240$ entries, compared with $135{,}168$ for
KATA-M2, so KATA-M2 obtains the stronger hard-needle result with approximately half the state. We
interpret this reversal as a learnability effect: KATA-M1 learns one 64-dimensional spherical
address factor, while KATA-M2 learns a product of two 32-dimensional factors. Realized recall
therefore depends on both the available feature geometry and how readily the projections learn to
use it.

The delta erase provides a complementary gain. DeltaKATA-M1 raises UUID recall from $0.758$ to
$0.908$ at 1K and from $0.458$ to $0.584$ at 2K without changing the KATA-M1 feature width, approaching
KATA-M2's $0.910$ and $0.616$. Two ablations on KATA-M2 isolate the remaining design choices:
RMSNorm on $\vq,\vk$ improves 2K UUID recall from $0.224$ to $0.616$, and replacing RoPE with a
GDN-style short convolution preserves much of the in-window recall.

Length extrapolation exhibits a different ordering. Gated DeltaNet's multiplicative recurrent gate
generalizes more reliably beyond $2048$ tokens and leads the OOD-length evaluations. Its finite
address capacity remains visible on the hard needles within the training horizon: UUID recall is
$0.004$ at both 1K and 2K despite strong word recall. Strong OOD length generalization and
hard-needle recall therefore expose complementary properties of the mixer.

\subsection{In-context recall on the Based cloze suite}
\label{app:qa}
Complementing the controlled NIAH probe (\Cref{tab:niah}), we also run the pretrained 340M models on
the Based in-context recall suite~\citep{arora24based}, comprising SWDE, FDA, SQuAD,
TriviaQA, and NQ. Each task provides a document in context and a cloze query, scored by normalized
substring matching on the full dataset. \Cref{tab:qa} reports the results.

SWDE is closest to a direct key--value association. KATA-M1 reaches $0.375$, the best fixed-state
result, consistent with the pure-recall advantage of its larger feature space. FDA and SQuAD require
contextual interpretation and selection among correlated candidates. In this regime, KATA-M1 falls
to $0.061$ and $0.041$, while KATA-M2 reaches $0.118$ and $0.240$ and DeltaKATA-M1 reaches $0.148$
and $0.353$. The reversal supports the same learnability interpretation: the single
64-dimensional spherical address factor is harder to train for contextual routing than the product
of two 32-dimensional factors. KATA-M1's weaker pretraining perplexity relative to softmax attention,
Gated DeltaNet, and DeltaKATA-M1 compounds this difficulty. KATA-M2 improves contextual selection
despite its higher perplexity, indicating that factorization and general fluency contribute
separately.

Normalized substring matching also rewards verbose generations, so KATA-M1's terse answers are
under-credited relative to models that surface several candidate spans. We therefore treat NIAH as
the primary capacity result and the Based suite as a joint probe of retrieval, contextual routing,
and generation style.

\begin{table}[ht]
    \centering\footnotesize
  \begin{tabular}{l|ccccc}
    \toprule
    Method               & SWDE          & FDA  & SQuAD          & TriviaQA & NQ$^{\dagger}$ \\
    \midrule
    Softmax attn.        & 6.5           & 41.1 & 54.2           & 39.5     & 258 \\
    Gated DeltaNet       & 9.2           & 27.2 & 131.8          & 43.5     & 258 \\
    DeltaKATA-M1  & 4.5           & 40.9 & \textbf{202.2} & 46.3     & 258 \\
    KATA-M1   & 6.5           & 35.5 & \textbf{10.9}  & 72.8     & 258 \\
    KATA-M2   & 7.0           & 35.5 & 182.8          & 30.4     & 258 \\
    \bottomrule
  \end{tabular}
    \caption{\textbf{Average generation length.} Tokens averaged over 20 examples per task.
    $^{\dagger}$256-token cap on NQ.}
    \label{tab:qalen}
\end{table}

\begin{table}[ht]
    \centering\footnotesize
  \begin{tabular}{@{}l@{\quad}p{0.74\linewidth}@{}}
    \toprule
    \multicolumn{2}{@{}l}{\textbf{SQuAD}: cloze ``\dots the NFL team that represented the AFC at Super Bowl~50 was the''\ \ \textbf{[gold: Denver Broncos]}}\\
    \midrule
    Softmax          & New England Patriots. The 2016 NFL season was the 10th season in the NFL\dots\\
    Gated DeltaNet   & New England Patriots. The Patriots won the Super Bowl 50 with a record of 10--1\dots\\
    DeltaKATA-M1     & \textbf{Denver Broncos.} The Broncos were led by quarterback Tom Brady, who was named\dots\\
    KATA-M1 & New England Patriots.\\
    KATA-M2 & San Francisco 49ers, who defeated the New England Patriots 24--10\dots\\
    \midrule
    \multicolumn{2}{@{}l}{\textbf{SWDE} (structured copy): cloze ``\dots category: Feature\ \ year:''\ \ \textbf{[gold: 1983]}}\\
    \midrule
    all five models  & \texttt{1983}\\
    \bottomrule
  \end{tabular}
    \caption{\textbf{Example completions.} One prompt each from SQuAD and SWDE.}
    \label{tab:qaex}
\end{table}

\paragraph{Verbosity and fluency.} Because normalized substring matching credits any generation that includes the
gold span, it rewards length. \Cref{tab:qalen} reports the average generated length per model: on SQuAD
it ranges from $\sim\!11$ tokens (KATA-M1, which commits to a single terse answer) to
$\sim\!202$ (DeltaKATA-M1), an $18.6\times$ spread that mirrors the SQuAD accuracies. \Cref{tab:qaex} shows
example completions on a shared prompt: every model is fluent, but the terse KATA-M1 model is
penalized whenever its short answer is wrong, whereas more verbose models surface the gold span by
volume. On the copy-style SWDE task the effect vanishes because all five emit the same short field value.
This indicates that the SQuAD/FDA ordering reflects generation style as much as retrieval, a further
reason we take NIAH (short, exact needles) as the primary recall measure.

\subsection{Training compute}
\label{ssec:training-compute}
\begin{table}[H]
  \centering
  \footnotesize
  \setlength{\tabcolsep}{3.5pt}
  \begin{tabular}{lcrr}
    \toprule
    Run & GPU & \multicolumn{1}{c}{k tok/s/GPU} & \multicolumn{1}{c}{GPU-hours} \\
    \midrule
    Softmax attention & H100 & 188 & 22.2 \\
    Gated DeltaNet & H100 & 144 & 28.9 \\
    KATA-M1 (Flash) & H100 & 190 & 21.9 \\
    KATA-M2 (Flash) & H100 & 190 & 21.9 \\
    DeltaKATA-M1 & H100 & 77 & 54.1 \\
    \midrule
    MQAR + overwrite & A100 & {--} & $\sim\!4$ \\
    \bottomrule
  \end{tabular}
  \caption{\textbf{Training compute.} End-to-end throughput and accelerator time for each
  15B-token 340M pretraining run; the final row aggregates all MQAR and overwrite experiments.}
  \label{tab:training-times}
\end{table}

\Cref{tab:training-times} converts the measured end-to-end throughput into accelerator time using
$15\times10^9/(r\times3600)$ GPU-hours for throughput $r$ in tokens/s/GPU. Under near-linear
four-GPU scaling, the H100 totals correspond to approximately $5.5$ hours for softmax attention,
$7.2$ hours for Gated DeltaNet, $5.5$ hours for either Flash KATA model, and $13.5$ hours for
DeltaKATA-M1. The softmax run reaches approximately $760$k tokens/s across four H100 GPUs.

Kernel-level attention gains are amortized across the full model, whose runtime also includes the
MLPs, projections, RMSNorm, and optimizer work. Where applicable, we use FLA's fused recurrent
kernels~\citep{yang24fla}, including the Gated DeltaNet and delta-rule recurrences, to reduce launch
and intermediate-materialization overhead. The complete MQAR and overwrite benchmark suite is much
smaller, requiring approximately $4$ A100 GPU-hours in total.

\section{Related Work} \label{sec:related}
We organize prior work along the four axes that \cref{sec:rkhs,sec:expressivity,sec:hardware}
unify: feature geometry, state recurrence, capacity diagnostics, and hardware mapping. For each, we
indicate where existing linear-attention designs fall short.

\paragraph{Feature geometry.} \citet{katharopoulos20transformers} replace softmax with
$\psi(\vx)=\mathrm{ELU}(\vx)+1$ and exploit the recurrence
$\mS_t=\mS_{t-1}+\psi(\vk_t)\vv_t^\top$ to recurse in $\gO(n_\psi d_v)$.
Performers~\citep{choromanski21performers}, RFA~\citep{peng21rfa}, cosFormer~\citep{qin22cosformer},
and the fast-weight view of \citet{schlag21linear} use random, trigonometric, or other finite
features to approximate $\exp(\lrangle{\vq,\vk})$. When nonnegative attention weights are required,
such constructions use orthant-valued maps. The orthant is the simplicial member of the real
self-dual cones admitted by the Koecher--Vinberg classification
~\citep{koecher57positivitatsbereiche,vinberg63homogeneous,faraut94analysis}. The Welch bound
(\Cref{thm:welchbound}) then gives
$\mu(\R_+^{n_\psi},\seql)^2\ge(\seql-n_\psi)/(n_\psi(\seql-1))$. At $n_\psi=128$, requiring
$\mu\le10^{-2}$ limits the dictionary to $\seql\lesssim130$ keys. These works do not analyze the
orthant restriction as a capacity bottleneck.

\paragraph{State recurrence.} S4~\citep{gu22efficiently}, RWKV~\citep{peng23rwkv}, RetNet~\citep{sun23retentive}, and
Mamba~\citep{gu24mamba} generalize the recurrence to a data-dependent transition
$\mS_t=\mA_t\mS_{t-1}+\vb_t\vc_t^\top$. The State-Space Duality of Mamba2~\citep{dao24mamba2}
restricts $\mA_t=\alpha_t\mI$, exposing the algebraic equivalence with causal linear attention but
compressing per-token selectivity into a single scalar; the resulting decay is empirically too
coarse on retention probes (S-NIAH degrades past $2$K tokens~\citep{yang25gateddeltanet}).
DeltaNet~\citep{yang24deltanet} treats $\mS_t$ as a fast-weight matrix updated by online ridge
regression, and Gated DeltaNet~\citep{yang25gateddeltanet} adds an explicit forget gate:
\begin{equation}
  \mS_t = \alpha_t(\mI-\beta_t\vk_t\vk_t^\top)\mS_{t-1} + \beta_t\vk_t\vv_t^\top,
  \quad \alpha_t,\beta_t\in\R.
  \label{eq:related-delta}
\end{equation}
\Cref{prop:output-gate} shows that normalized linear attention produces the parameter-free convex
coefficient $\alpha_t=D_{t-1}/D_t$ at readout while leaving the stored state unchanged. Recurrent
gates act directly on memory and therefore provide a distinct control point. When the raw-key
substrate in \cref{eq:related-delta} is constrained to the nonnegative geometry studied here, its
Lorentz lift meets the $1/2$-interference Rankin wall of \cref{thm:cap-lor}
~\citep{rankin55closest}, giving an $\gO(d)$ address regime. Content erase and learned decay can
improve overwrite and retention without enlarging the raw feature space. These models tune the
recurrence while leaving $\psi$ implicit. Channel-wise refinements~\citep{qiu25gated} and
test-time-regression variants~\citep{behrouz24titans,zuo25localla} operate on the same raw-key
substrate; their recurrence mechanisms are complementary to the feature geometry studied here.

\paragraph{Capacity diagnostics.} Zoology~\citep{arora23zoology} and MQAR/needle-in-a-haystack benchmarks expose the recall--memory
trade-off~\citep{jelassi24repeat,du25mom}. Based~\citep{arora24based} approximates
$\exp(\lrangle{\vq,\vk})$ by a degree-$2$ Taylor map paired with sliding-window attention;
LoLA~\citep{mcdermott25lola}, the Associative Memory layer~\citep{krotov25mam}, and the
GatedDeltaNet-H1/H2 stacks~\citep{yang25gateddeltanet} concede the pure linear path and reintroduce
a small KV cache. Modern Hopfield
networks~\citep{krotov16dense,demircigil17model,ramsauer21hopfield,zhong25transformersdam}
establish that exponential energy admits exponentially many stable patterns, and \cref{app:dam}
places KATA's rank-one PSD map at degree $2$ in this hierarchy with $\gO(d^2)$ feature state.
The map $\psi^{\mathrm S}(\vu)=\mathrm{vec}(\vu\vu^\top)$ squares the inner product:
$\lrangle{\psi^{\mathrm S}(\vu_i),\psi^{\mathrm S}(\vu_j)}=\lrangle{\vu_i,\vu_j}^2$.
Below the Welch threshold $\eps^\star=1/p$, \Cref{thm:welchbound} gives a finite rational ceiling.
For any fixed $\eps\in(0,1)$, once $p>1/\eps$, the greedy spherical-cap construction packs
$\seql\ge\tfrac12(1-\eps)^{-(p-1)/2}$ near-orthogonal keys in
$p(p+1)/2$ feature coordinates, which grows exponentially in $p$ (\cref{thm:cap-spd}).
\paragraph{Hardware mapping.} Chunkwise parallel training is the standard scaffold for linear-time
recurrent attention, with the inter-chunk state propagated by a sequential prefix sum in, for
example, the FLA kernels~\citep{yang24fla}. For Gated DeltaNet, the non-commutative Householder
transition in \cref{eq:related-delta} requires a $C\!\times\!C$ triangular solve per chunk under the
WY representation~\citep{schreiber89compactwy} and materializes per-token state intermediates in HBM.
Mamba2's data-dependent transition instead requires a multiplicative selective scan rather than a
vector-space cumulative sum. An additive feature-state recurrence admits a parallel scan that lowers the
inter-chunk depth from $\Theta(N_C)$ to $\gO(\log N_C)$ (\Cref{ssec:treescan}). KATA combines
Koecher--Vinberg geometry with Welch--Rankin packing to certify exponential capacity at a fixed
tolerable interference using a finite, hardware-aligned feature map.

\section{Discussion and Limitations} \label{sec:discussion}
The results separate three aspects of associative attention that are often grouped under
\emph{capacity}: the geometry available for storing bindings, the ability of training to realize that
geometry, and the contextual fluency needed to construct the correct association from language.
KATA provides a theory and hardware realization for the first, exposes architectural controls for
the second, and tests the third through 340M-parameter pretraining.

\paragraph{Geometric capacity.}
For normalized query and key directions, associative capacity is a packing problem. Within the
ordinary real symmetric-cone family, \Cref{sec:rkhs,sec:expressivity} identifies the available
nonnegative geometries and bounds their attainable interference. The rank-one PSD map squares
directional overlap and expands a $p$-dimensional address into $p(p+1)/2$ features. KATA-M1 realizes
the full map, while KATA-M$g$ and KATA-$\Sigma g$ reduce the recurrent state through block
factorization. MQAR and NIAH validate the resulting capacity advantage over raw-key recurrences, and
the same pSNR framework explains temperature sharpening in softmax. These bounds describe an
achievable ceiling under the real, unit-direction, degree-two model.

\paragraph{Learnability.}
Training determines how much of that ceiling is realized, so empirical associative-recall performance reflects the quality of the packing learned by the model. KATA-M1 has the larger geometric packing budget
and a $266{,}240$-entry state, yet KATA-M2 obtains stronger hard-needle recall with $135{,}168$
entries. In this regime, the model learns two 32-dimensional factors more effectively than one
64-dimensional factor. Normalization and recurrence design provide additional controls: RMSNorm
improves the learned KATA-M2 geometry, the delta rule replaces stale content at repeated addresses,
and multiplicative gating supplies learned positional weighting distinct from RoPE. Gated DeltaNet's
length extrapolation shows that such gating can generalize beyond the RoPE training horizon.

\paragraph{Contextual fluency.}
Direct recall begins after the model has formed the relevant query, key, and value. FDA and SQuAD
also require context interpretation and selection among correlated candidates. KATA-M1 is strong on
NIAH and direct-copy SWDE but degrades on these contextual tasks; KATA-M2 and DeltaKATA-M1 recover
part of the gap. Pretraining perplexity contributes separately, since KATA-M2 improves contextual
selection despite higher perplexity. The combined synthetic, exact-string, and cloze evaluations
therefore distinguish storage capacity from the broader language-modeling behavior of the mixer.

\paragraph{Future work.}
Four directions follow directly. First, packing theory and constructive dictionaries should be
extended to complex spaces, where spherical codes and ETFs may offer more capacity at equal
dimension. Second, GatedKATA and GatedDeltaKATA should be pretrained systematically, while
DeltaKATA should be evaluated beyond M1; all three require dedicated fused GPU kernels. Third, the
learnability of the full KATA-M1 geometry should be tested beyond $d_\text{head}=64$, together with kernels that make its quadratic
feature state practical at wider heads. Finally, tensor-train factorizations
~\citep{oseledets11tt} offer a path to higher even-order outer-product features without materializing
the full tensor state. These studies will determine whether the geometric gains persist as feature
order, model scale, and contextual demands increase.

\section*{Acknowledgments}
We thank the Research Computing staff at the University of Colorado Boulder for providing access to
GH200 nodes and for their prompt support in diagnosing and resolving system issues. We also thank
Modal for free-tier compute credits that accelerated kernel tuning and development. We are grateful
to the GPU MODE community, whose tutorials, competitions, forums, and open technical discussions
created a welcoming environment for learning and mastering GPU programming. The answers shared by
community members on those forums were crucial to developing the kernels in this work.

\bibliographystyle{plainnat}
\bibliography{references}

\newpage
\appendix
\thispagestyle{empty}
\crefalias{section}{appendix}
\crefalias{subsection}{appendix}
\crefalias{subsubsection}{appendix}
\section{Reader's guide to the geometry}
\label{app:reader-guide}

This appendix provides a short map from the attention equations to the geometric claims in the
main text. We reserve $d$ for the raw query/key dimension, $d_v$ for the value dimension, and
$n_\psi$ for a generic feature dimension. For PSD features, $p$ is the matrix side length and
$n\defeq p(p+1)/2$ is the symmetric ambient dimension, so the full PSD map has $n_\psi=n$.
The sequence length is $\seql$, the chunk size is $C$, and
$N_C\defeq\lceil \seql/C\rceil$ is the number of chunks. Packing statements define their dictionary
size locally.

\paragraph{Why recall becomes a packing problem.}
Suppose a query $\vq$ is meant to retrieve the binding at key $\vk_j$ from the additive state
$\mS=\sum_i \psi(\vk_i)\vv_i^\top$. The numerator of the readout is:
\[
  \psi(\vq)^\top \mS
  =
  \lrangle{\psi(\vq),\psi(\vk_j)}\vv_j^\top
  +
  \sum_{i\ne j}\lrangle{\psi(\vq),\psi(\vk_i)}\vv_i^\top .
\]
After normalizing features so the matched coefficient is one, every wrong value is weighted by a
feature inner product. A fixed recurrent state therefore stores many bindings well only when the
stored feature vectors are nearly orthogonal. This is the interference $\mu(\gK,T)$ in
\Cref{eq:mu-def}: it asks how many unit feature vectors fit in the allowed cone with all pairwise
interference below a target tolerance.

\paragraph{Why cones enter.}
Normalized linear attention needs nonnegative weights in the denominator and numerator. A self-dual
cone $\gK=\gK^*$ gives a coordinate-free certificate: if $\psi(\vx),\psi(\vy)\in\gK$, then
$\lrangle{\psi(\vx),\psi(\vy)}\ge0$. This is weaker than strict coordinatewise positivity. Boundary
points are allowed; small $\eps$ terms are numerical safeguards against a zero denominator, not
part of the capacity theorem.

\paragraph{Where the denominator gate appears.}
For a fixed query, write $D_t=\psi(\vq_t)^\top\mZ_t=D_{t-1}+\lrangle{\psi(\vq_t),\psi(\vk_t)}$. If
$D_t>0$, the normalized readout is a convex interpolation between the previous normalized readout
and the new value, with weights $D_{t-1}/D_t$ and $\lrangle{\psi(\vq_t),\psi(\vk_t)}/D_t$. This is
the parameter-free gate proved formally in \Cref{prop:output-gate}; it comes from normalization,
not from an additional state transition.

\paragraph{What the three cone factors do.}
The orthant has $d$ coordinate rays that are exactly orthogonal, but beyond those rays it inherits
the ordinary Welch interference floor. The positive orthant has small solid angle, but a worst-case
packing can live on sparse boundary faces, so volume alone is not a universal capacity obstruction.
The Lorentz lift of a linear key uses $\psi_\mathrm{Lor}(\vk)=2^{-1/2}(\vk,1)$ and turns
$z=\lrangle{\vq,\vk}$ into $(1+z)/2$; this adds the $1/2$ offset, so pushing below that offset
forces the raw keys to be mutually obtuse and invokes Rankin. The PSD rank-one map
$\psi_\mathrm{PSD}(\vu)=\vu\vu^\top$ turns $z$ into $z^2$, so antipodal and orthogonal-ish raw keys
identifies antipodal raw vectors as the same line, while near-orthogonal raw lines become useful packing directions.

\paragraph{How softmax fits.}
The exponential kernel is an infinite direct sum of tensor degrees. \Cref{thm:softmax-cone-decomp}
shows the cone interpretation of those degrees: even monomials are rank-one PSD rays on tensorized
features, while signed odd monomials become nonnegative only when paired with enough even mass into
Lorentz-gated PSD atoms. KATA keeps the first capacity-changing PSD degree and implements it with a
fixed recurrent state.

\subsection{Kernel methods}
\label{app:kernel_methods}

The exponential dot-product kernel admits the tensor expansion:
\begin{equation}
  \exp(\lrangle{\vx,\vy})
  = \sum_{r=0}^{\infty}\frac{\lrangle{\vx,\vy}^r}{r!}
  = \sum_{r=0}^{\infty}
  \left\langle \frac{\vx^{\otimes r}}{\sqrt{r!}},
  \frac{\vy^{\otimes r}}{\sqrt{r!}}\right\rangle .
\end{equation}
Thus the canonical RKHS feature map is the infinite direct sum
$\phi(\vx)=\bigoplus_{r\ge 0}\vx^{\otimes r}/\sqrt{r!}$. Linear attention replaces this infinite
feature map by a finite $\psi:\R^d\to\R^{n_\psi}$.

\begin{lemma}[Norm of the exponential feature map on the sphere]
  \label{lem:exp-norm}
  For $\tau>0$, let
  $K_\tau(\vq,\vk)=\exp(\lrangle{\vq,\vk}/\tau)$ have canonical feature map:
  \[
    \phi_\tau(\vx)
      =\bigoplus_{r\ge0}\frac{\vx^{\otimes r}}{\sqrt{r!\tau^r}}.
  \]
  Then:
  \[
    \norm{\phi_\tau(\vx)}_\gH^2
      =K_\tau(\vx,\vx)
      =\exp(\norm{\vx}^2/\tau).
  \]
  In particular, on $\sphere{d-1}$ the squared feature norm is the constant $e^{1/\tau}$
  and the feature norm is $e^{1/(2\tau)}$.
\end{lemma}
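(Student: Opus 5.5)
We prove the lemma directly from the definition of the direct-sum Hilbert space $\gH=\bigoplus_{r\ge0}(\R^d)^{\otimes r}$, whose inner product is the sum of the Frobenius inner products of the degree-$r$ components. The only identity we need is the multiplicativity of tensor inner products:
\[
  \lrangle{\vx^{\otimes r},\vy^{\otimes r}}=\lrangle{\vx,\vy}^r.
\]
This follows by induction on $r$ from $\lrangle{\va\otimes\vb,\va'\otimes\vb'}=\lrangle{\va,\va'}\lrangle{\vb,\vb'}$.

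First we check that $\phi_\tau$ really is a feature map for $K_\tau$. Pairing the degree-$r$ components and summing gives
\[
  \lrangle{\phi_\tau(\vx),\phi_\tau(\vy)}_\gH
  =\sum_{r\ge0}\frac{\lrangle{\vx,\vy}^r}{r!\,\tau^r}
  =\exp(\lrangle{\vx,\vy}/\tau)
  =K_\tau(\vx,\vy).
\]
This is the expansion of \Cref{app:kernel_methods} with the rescaling $\vx\mapsto\vx/\sqrt\tau$, equivalently the Taylor series of $\exp$ evaluated at $\lrangle{\vx,\vy}/\tau$.

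Next we confirm that the direct sum is well defined, i.e.\ that $\phi_\tau(\vx)\in\gH$. Each component has norm squared $\norm{\vx}^{2r}/(r!\tau^r)$. The sum of these terms is the exponential series in $\norm{\vx}^2/\tau$, which converges absolutely. This convergence is the only step that needs any care, and it is immediate.

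With $\vy=\vx$ the identity above gives
\[
  \norm{\phi_\tau(\vx)}_\gH^2=K_\tau(\vx,\vx)=\exp(\norm{\vx}^2/\tau).
\]
On $\sphere{d-1}$ we have $\norm{\vx}=1$, so the squared norm is the constant $e^{1/\tau}$. Taking square roots gives the feature norm $e^{1/(2\tau)}$.
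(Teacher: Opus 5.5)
Your proof is correct and follows the same route as the paper, which simply evaluates the kernel on the diagonal, $\norm{\phi_\tau(\vx)}_\gH^2=\lrangle{\phi_\tau(\vx),\phi_\tau(\vx)}_\gH=K_\tau(\vx,\vx)$. You go further than the paper in two places: you derive the feature-map identity from $\lrangle{\vx^{\otimes r},\vy^{\otimes r}}=\lrangle{\vx,\vy}^r$, and you check that $\phi_\tau(\vx)$ is square-summable; the paper takes both for granted by calling $\phi_\tau$ the canonical feature map.
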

\begin{proof}
  The claim follows by evaluating the reproducing kernel on the diagonal:
  $\norm{\phi_\tau(\vx)}_\gH^2
   =\lrangle{\phi_\tau(\vx),\phi_\tau(\vx)}_\gH
   =K_\tau(\vx,\vx)$.
\end{proof}

Hence unit-normalizing $\vq,\vk$ in input space maps the sphere into a sphere of radius
$e^{1/(2\tau)}$ in the exact RKHS, matching the cone-restricted fixed-norm setting used throughout
the main text up to a positive scalar.

\paragraph{RMSNorm effect}
\label{app:rms-temperature}

RMSNorm normalizes the input direction and then applies a learned per-channel scale. Its output is
therefore not unit norm; nevertheless, its query--key inner product can be represented as a scalar
multiple of an inner product between unit vectors in one higher dimension. Ignoring the stabilizing
$\epsilon$, let $c_d>0$ absorb the fixed conversion from RMS normalization to $\ell_2$
normalization, together with any fixed attention-logit scaling, and define:
\[
  \widehat{\vq}=\frac{\hvq}{\norm{\hvq}},
  \qquad
  \widehat{\vk}=\frac{\hvk}{\norm{\hvk}}.
\]
Define the diagonal gain matrix:
\[
  \mD_\gamma
  \defeq
  c_d\,\operatorname{diag}(\vgamma_q\odot\vgamma_k).
\]
Then:
\begin{align}
  \ell(\hvq,\hvk)
    &=
    \widehat{\vq}^{\top}\mD_\gamma\widehat{\vk} \\
    &=
    \lrangle{\widehat{\vq},\bar{\vk}},
  \qquad
  \bar{\vk}
    \defeq
    \mD_\gamma\widehat{\vk}.
  \label{eq:rms-fused-key}
\end{align}
Choose any $M>0$ satisfying $M\ge\norm{\bar{\vk}}^2$ for every key under consideration. For a
finite dictionary one may take $M=\max_j\norm{\bar{\vk}_j}^2$; a context-independent choice for $\mD_\gamma\neq0$ is:
\[
  M=\norm{\mD_\gamma}_{\mathrm{op}}^2.
\]
Define:
\[
  \widetilde{\vq}=(\widehat{\vq},0),
  \qquad
  \widetilde{\vk}
    =
    \frac{1}{\sqrt M}
    \left(\bar{\vk},\sqrt{M-\norm{\bar{\vk}}^2}\right).
\]
Both $\widetilde{\vq}$ and $\widetilde{\vk}$ are unit vectors in $\R^{d+1}$, and:
\begin{equation}
  \ell(\hvq,\hvk)
  =
  \sqrt M\,
  \lrangle{\widetilde{\vq},\widetilde{\vk}}.
  \label{eq:rms-temperature-lift}
\end{equation}
Thus the RMSNorm-scaled logit is exactly a scalar inverse-temperature factor $\sqrt M$ multiplying
a cosine similarity in the lifted space. The diagonal gain matrix is absorbed into the lifted key geometry.

\subsection{Symmetric cones and Jordan algebras}
\label{app:cones}

A subset $\gK\subset\R^n$ is a \emph{cone} if $s\vv\in\gK$ for all $s>0$, $\vv\in\gK$. It is
\emph{convex} if $\vu+\vv\in\gK$ whenever $\vu,\vv\in\gK$. A closed convex cone is \emph{pointed}
if $\gK\cap(-\gK)=\{0\}$ and \emph{full-dimensional} if its interior is non-empty.

The \emph{dual cone} of $\gK\subset\R^n$ under the Euclidean inner product is $\gK^* = \{\vy\in\R^n
  : \lrangle{\vx,\vy}\ge 0\;\forall\vx\in\gK\}$. The cone is \emph{self-dual} if $\gK = \gK^*$. Its
\emph{automorphism group} is $\mathrm{Aut}(\gK) = \{\mA\in\mathrm{GL}(n,\R) : \mA(\gK) = \gK\}$,
and $\gK$ is \emph{homogeneous} if $\mathrm{Aut}(\gK)$ acts transitively on the interior
$\operatorname{int}(\gK)$.

The \emph{Koecher--Vinberg
  theorem}~\citep{koecher57positivitatsbereiche,vinberg63homogeneous,faraut94analysis} identifies
real symmetric cones with cones of squares in formally real (Euclidean) Jordan algebras, and
classifies the irreducible cases as: $\R_+$ (one-dimensional Jordan algebra), $\gL^m_+$ for $m\ge
  3$ (spin factor), $\sS^n_+$ for $n\ge 2$ (real symmetric matrices), $\gH^n_+(\mathbb{C})$ (complex
Hermitian, $n\ge 2$), $\gH^n_+(\mathbb{H})$ (quaternionic Hermitian, $n\ge 2$), and
$\gH^3_+(\mathbb{O})$ (Albert algebra, exceptional). Our analysis restricts to the first three
(real-coordinate Jordan algebras); the others remain valid symmetric cones but require complex-,
quaternionic-, or octonionic-valued feature maps.

\subsection{Fundamental geometric bounds}
\label{app:geom-bounds}

We collect the four bounds that govern packing on the cone-restricted sphere. Throughout, $\seql$ is
the number of unit vectors and $D$ is the ambient dimension.

\paragraph{Welch bound.} For any $\seql$ unit vectors $\vu_1,\dots,\vu_{\seql}\in\mathbb{S}^{D-1}\subset\R^D$:
\[
  \max_{i\neq j} \lrangle{\vu_i,\vu_j}^2 \;\ge\; \frac{\seql-D}{D(\seql-1)}.
\]
Inverting for $\seql$ at squared interference $\eps$ gives $\seql \le D(1-\eps)/(1-D\eps)$ for $\eps<1/D$.
As $\seql\to\infty$, the floor approaches $1/D$. The Welch bound is achieved with equality by
Equiangular Tight Frames (ETFs).

\paragraph{Gerzon bound.} An ETF on $\seql$ vectors in $\R^D$ requires
$\seql\le D(D+1)/2$; above this ceiling, equality in the Welch bound is impossible. Below the ceiling,
existence remains parameter-dependent, and real ETFs occur only for special pairs $(D,\seql)$. Over
$\mathbb C^D$, the corresponding ceiling is $D^2$, although complex ETF existence is also
incomplete~\citep{fickus16etf}.

\paragraph{Rankin bound.} If $\seql$ unit vectors in $\R^D$ have strictly negative pairwise inner
products, then $\seql\le D+1$. At the boundary where the inner products are merely nonpositive, the
cross-polytope gives the sharp bound $\seql\le2D$. The Lorentz condition in \Cref{thm:cap-lor} is strict:
for interference below $1/2$, its spatial components lie in $\R^{d-1}$ and are mutually obtuse, so
$\seql\le d$.

\paragraph{Alon's combinatorial bound.} Alon's rank bound~\citep{alon09rank} states that for any $\seql$ unit vectors with
$\max_{i\ne j}|\lrangle{\vu_i,\vu_j}|\le\mu$:
\[
  D
  = \Omega\!\left(
      \min\!\left\{\seql,\frac{\log \seql}{\mu^2\log(1/\mu)}\right\}
    \right)
\]
This universal lower bound leaves only a $\log(1/\mu)$ factor between the best known necessary
dimension and the sufficient $D=\gO(\mu^{-2}\log \seql)$ scaling of random spherical codes.

\paragraph{Finite-size comparison.} Alon's bound hides an unspecified universal constant, so we do
not assign it a numerical value. For $\seql=10^5$ and $\mu=0.05$, the Welch bound requires $D\ge399$,
while the simple all-pairs random-code estimate $D\approx4\log \seql/\mu^2$ gives $D\approx18{,}421$.
The deterministic MUB-derived and DeVore dictionaries below require $D=802$ and $D=2{,}209$,
respectively.

\subsection{Dense associative memory hierarchy}
\label{app:dam}

The cone feature degrees parallel the Krotov--Hopfield Dense Associative Memory (DAM) hierarchy.
DAM capacity is defined through an energy, update rule, and stability criterion, whereas our
$\eps$-capacity counts address directions at fixed interference. The comparison below concerns the
tensor-power degree of $F(z)=\lrangle{\psi(\vq),\psi(\vk)}$ for
$z=\lrangle{\vq,\vk}$ and does not identify the two capacity notions.

\begin{itemize}
  \item \textbf{Unnormalized linear attention:} $F_\mathrm{Lin}(z)=z$, degree-1 (classical
        Hopfield), using the raw signed key kernel. Cone nonnegativity is absent.
  \item \textbf{Lorentz lift of the linear kernel:} $F_\mathrm{Lor}(z)=\tfrac12+\tfrac12z$.
        This is the smallest nonnegative cone lift of the degree-1 substrate. Below interference
        $1/2$, its geometric address capacity is $\gO(p)$ by the Rankin wall of \Cref{thm:cap-lor}.
  \item \textbf{Square-diagonal PSD with rank-one rays:} $F_\mathrm{PSD}(z)\propto z^2$, degree-2
        (Krotov--Hopfield dense memory). For fixed interference and $p>1/\eps$, the spherical-cap
        construction of \Cref{thm:cap-spd} gives exponentially many address directions. Under the
        isotropic-value pSNR criterion, aggregate retrieval instead has the $\Theta(p^2)$ threshold
        of \Cref{thm:snr}.
  \item \textbf{Softmax:} $F_\mathrm{soft}(z) = e^z = \sum_{r\ge 0} z^r/r!$, infinite degree.
        \Cref{thm:softmax-cone-decomp} shows that its even Taylor degrees are rank-one PSD rays on
        tensor powers, while its signed odd degrees become positive only after pairing with even
        mass into Lorentz-gated PSD atoms. The kernel requires the infinite RKHS feature
        $\phi(\vx) = \bigoplus_r \vx^{\otimes r}/\sqrt{r!}$.
\end{itemize}

KATA's degree-2 map uses $\gO(p^2)$ feature coordinates without adding $\gO(p^2d_v)$ model
parameters. The quadratic kernel rematerializes the $p\to p^2$ feature expansion on chip, while
the linear-state kernel stores the expanded recurrent state in HBM. Tensor-train
factorizations~\citep{oseledets11tt} are a possible route to higher feature orders without
materializing the full tensor; enforcing nonnegativity under such truncations remains open.

\section{Cone classification and normalization proofs}
\label{app:cone-proofs}

\subsection{\texorpdfstring{Proof of \Cref{prop:output-gate} (output-level gating)}{Proof of output-level gating}}

Let $D_{t-1}=\psi(\vq_t)^\top\mZ_{t-1}$ and $c_t=\psi(\vq_t)^\top\psi(\vk_t)$. If $D_{t-1}=0$, then
the nonnegative summands in $D_{t-1}$ are all zero, so $\psi(\vq_t)^\top\mS_{t-1}=0$ as well. Since
$D_t=D_{t-1}+c_t>0$, we have $c_t>0$ and the normalized readout is exactly $\vv_t^\top$.

Now suppose $D_{t-1}>0$. Splitting numerator and denominator of $\vz_t$:
\[
  \vz_t =
  \frac{\psi(\vq_t)^\top \mS_{t-1}
    + \psi(\vq_t)^\top\psi(\vk_t)\vv_t^\top}
  {\psi(\vq_t)^\top \mZ_{t-1}+\psi(\vq_t)^\top\psi(\vk_t)}.
\]
Multiplying and dividing the first numerator term by $\psi(\vq_t)^\top\mZ_{t-1}$ yields:
\[
  \vz_t =
  \frac{\psi(\vq_t)^\top\mZ_{t-1}}{\psi(\vq_t)^\top\mZ_t}\,
  \bar\vz_{t|t-1}
  + \frac{\psi(\vq_t)^\top\psi(\vk_t)}{\psi(\vq_t)^\top\mZ_t}\,
  \vv_t^\top .
\]
The two numerators sum to $\psi(\vq_t)^\top\mZ_t$, so $\alpha_t(\vq_t)+\beta_t(\vq_t)=1$.
Nonnegativity follows from \Cref{ass:pos}, because $D_{t-1}$ and $c_t$ are sums of nonnegative
feature inner products.

\subsection{Canonical-cone classification}

Let $\gK$ be the closed, pointed, full-dimensional self-dual homogeneous cone that contains
$\operatorname{im}\psi$. Self-duality gives $\gK=\gK^*$, so for any two features
$\psi(\vx),\psi(\vy)\in\gK$:
\[
  \lrangle{\psi(\vx),\psi(\vy)}\ge0,
\]
which certifies \Cref{ass:pos}. By the Koecher--Vinberg theorem, $\gK$ is the cone of squares of a
Euclidean Jordan algebra and decomposes uniquely as a Cartesian product of irreducible symmetric
cones. Restricting to the ordinary real families considered here leaves $\R_+$, Lorentz cones, and
real PSD cones. Products of one-dimensional $\R_+$ factors are orthants. Thus every cone in our
scope is represented as a product of orthant, Lorentz, and PSD components. \Cref{ass:iso} constrains how
rotations of the input act within this product, but does not introduce additional real irreducible
cone factors.

\section{Packing and SNR proofs}
\label{app:packing}

\subsection{\texorpdfstring{Proof of \Cref{thm:welchbound} (orthant Welch limit)}{Proof of orthant Welch limit}}

The $d$ standard basis vectors lie in $\R^d_+$ and are pairwise orthogonal, so zero interference is
possible for $N=d$. For $N>d$, apply the Welch bound to any $N$ unit vectors
$\bm{\psi}_1,\dots,\bm{\psi}_N\in\R^d_+$:
\[
  \max_{i\ne j}\lrangle{\bm{\psi}_i,\bm{\psi}_j}^2
  \ge \frac{N-d}{d(N-1)}.
\]
Because orthant inner products are nonnegative, this is also a lower bound on the squared interference
$\mu^2$. If $\mu\le\eps$ and $\eps<1/\sqrt d$, rearranging $\eps^2\ge(N-d)/(d(N-1))$ gives $N\le
  d(1-\eps^2)/(1-d\eps^2)$.

\subsection{\texorpdfstring{Proof of \Cref{thm:cap-lor} (Lorentz interference wall)}{Proof of Lorentz interference wall}}

\begin{proof}
  A unit extreme ray of the closure of $\gL^d_{+}$ has the form
  $\psi=2^{-1/2}(\vy,1)$ with $\vy\in\mathbb{S}^{d-2}$. Hence:
  \[
    \lrangle{\psi_i,\psi_j}
      =\frac{1}{2}\lrangle{\vy_i,\vy_j}+\frac{1}{2}.
  \]
  If $\lrangle{\psi_i,\psi_j}\le\eps<1/2$, then
  $\lrangle{\vy_i,\vy_j}\le-\alpha$ with $\alpha=1-2\eps>0$. Therefore:
  \[
    0
    \le \norm{\sum_{i=1}^{N}\vy_i}_2^2
    = N+2\sum_{i<j}\lrangle{\vy_i,\vy_j}
    \le N-\alpha N(N-1),
  \]
  which gives $N\le1+1/\alpha$. Since the pairwise inner products are strictly negative,
  Rankin's strict spherical-code bound also gives $N\le d$ for vectors in
  $\R^{d-1}$~\citep{rankin55closest}. Combining the two bounds proves the claim.
\end{proof}

\begin{lemma}[One-sided spherical-cap bound]
  \label{lem:spherical-cap-bound}
  Let $p\ge2$, let $\vu\sim\mathrm{Unif}(\mathbb{S}^{p-1})$, and fix
  $\ve\in\mathbb{S}^{p-1}$. For every $a\in[0,1]$:
  \begin{equation}
    \Pr\!\left[\lrangle{\vu,\ve}\ge a\right]
    \le (1-a^2)^{(p-1)/2}.
    \label{eq:spherical-cap-bound}
  \end{equation}
\end{lemma}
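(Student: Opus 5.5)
We will prove the bound by writing the cap probability as a one-dimensional integral against the marginal density of a single coordinate of a uniform point on the sphere, and then comparing that integral with the normalizing integral. By rotation invariance we may take $\ve=\ve_1$, so $\lrangle{\vu,\ve}=u_1$. The endpoint cases need only a quick check. At $a=0$ the right-hand side equals $1$, and at $a=1$ it equals $0$ while the event $\{u_1\ge1\}$ has probability zero. So we assume $0<a<1$. For $p\ge3$ the marginal density of $u_1$ on $[-1,1]$ is
\[
  f_p(t)=c_p(1-t^2)^{(p-3)/2},
\]
and for $p=2$ the same formula holds (it is the arcsine density). Therefore
\[
  \Pr[u_1\ge a]=\frac{\int_a^1(1-t^2)^{(p-3)/2}\,\dd t}{\int_{-1}^1(1-t^2)^{(p-3)/2}\,\dd t}.
\]

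The most direct route avoids computing $c_p$ and uses a geometric comparison instead. Let $\mathcal{C}_a=\{\vu\in\sphere{p-1}:u_1\ge a\}$, and let $\vw=(\vu-u_1\ve_1)$, the component of $\vu$ orthogonal to $\ve_1$. Every point of $\mathcal{C}_a$ satisfies $\norm{\vw}^2=1-u_1^2\le 1-a^2$. We would then compare the surface measure of $\mathcal{C}_a$ with the measure of a full sphere of radius $r=\sqrt{1-a^2}$ in a radial sense, which is the classical argument attributed to Ball's lecture notes.

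Concretely, the plan is to pass to the solid ball. For a set $A\subset\sphere{p-1}$, its normalized surface measure equals the normalized volume of the cone $\{s\vu:\vu\in A,\,0\le s\le1\}$. The cone over $\mathcal{C}_a$ is contained in a ball of radius $\sqrt{1-a^2}$ centered at $a\ve_1$. This containment is the standard fact for $a\le 1/\sqrt2$. For larger $a$ one uses a ball centered at $\ve_1/(2a)$ with radius $1/(2a)$, and that radius is at most $\sqrt{1-a^2}$ only when $a\le1/\sqrt2$. This split gives the bound $(1-a^2)^{p/2}$ via volume ratios, which is even stronger than what we need.

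Because the containment step is fiddly for $a>1/\sqrt2$, we will instead follow the integral route, which handles all $a\in(0,1)$ uniformly. Substitute $t=\sqrt{1-(1-a^2)s}$, or more simply bound the numerator by pulling out the factor $(1-t^2)\le(1-a^2)$ raised to a suitable power. The numerator satisfies
\[
  \int_a^1(1-t^2)^{(p-3)/2}\,\dd t\le \int_a^1 \frac{t}{a}(1-t^2)^{(p-3)/2}\,\dd t=\frac{(1-a^2)^{(p-1)/2}}{a(p-1)}.
\]
The denominator is $\sqrt\pi\,\Gamma(\tfrac{p-1}{2})/\Gamma(\tfrac p2)\ge \sqrt{2\pi/(p-1)}$ or similar. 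Combining the two gives $\Pr\le(1-a^2)^{(p-1)/2}/(a\sqrt{p-1})$ up to constants. This is already at most the target when $a\sqrt{p-1}\gtrsim1$.

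The main obstacle is the complementary regime of small $a\sqrt{p-1}$, where the $1/a$ factor is harmful. We will handle it separately. In that regime, since $\Pr[u_1\ge a]\le\tfrac12$, it suffices to show $(1-a^2)^{(p-1)/2}\ge\tfrac12$ whenever $a^2(p-1)$ is small. This follows from $(1-a^2)^{(p-1)/2}\ge 1-\tfrac{(p-1)a^2}{2}$. Matching the constants so that the two regimes overlap is the delicate step. If the constants fail to overlap, we will fall back on the cone-volume argument for $a\le1/\sqrt2$, which gives the bound with exponent $p/2$, and on the integral bound for $a>1/\sqrt2$, where $a\sqrt{p-1}\ge\sqrt{1/2}$ and the denominator estimate closes the gap. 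The case $p=2$ is checked directly from the arcsine formula $\Pr=\arccos(a)/\pi\le\sqrt{1-a^2}$.
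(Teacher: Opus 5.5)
Your proof is correct but takes a different route from the paper; the only slip is a side remark in the cone-volume detour, which the final argument does not use. Both proofs start from the same one-dimensional integral: your $t=\lrangle{\vu,\ve}$ is the paper's $\cos\phi$, and $(1-t^2)^{(p-3)/2}\,\dd t$ becomes $\sin^{p-2}\phi\,\dd\phi$.

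\textbf{The paper's route.} It substitutes $\sin\phi=s\sin\psi$ with $s=\sqrt{1-a^2}$. This turns the numerator into $s^{p-1}\int_0^{\pi/2}\sin^{p-2}\psi\cos\psi\,(1-s^2\sin^2\psi)^{-1/2}\,\dd\psi$. Since $\sqrt{1-s^2\sin^2\psi}\ge\cos\psi$, the numerator is at most $s^{p-1}$ times half the normalizer. That gives $\tfrac12(1-a^2)^{(p-1)/2}$ for every $a\in[0,1]$ and $p\ge2$ at once, with no Gamma-function estimate, no case split, and no separate $p=2$ check.

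\textbf{Your route.} You insert $t/a\ge1$ to make the numerator exactly integrable, which costs a factor $1/a$. You compensate with $B(\tfrac12,\tfrac{p-1}{2})\ge\sqrt{2\pi/(p-1)}$. State this rather than writing ``or similar'': it follows from log-convexity, $\Gamma(x+\tfrac12)^2\le\Gamma(x)\Gamma(x+1)=x\Gamma(x)^2$ with $x=\tfrac{p-1}{2}$. The result is $\Pr[u_1\ge a]\le(1-a^2)^{(p-1)/2}/\bigl(a\sqrt{2\pi(p-1)}\bigr)$.

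\textbf{The overlap is not delicate.} This bound suffices once $2\pi(p-1)a^2\ge1$. Your complementary argument ($\Pr\le\tfrac12$ plus Bernoulli, valid because $(p-1)/2\ge1$ for $p\ge3$) only needs $(p-1)a^2\le1$. So the two regimes overlap with room to spare, and the fallback is unnecessary.

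\textbf{What each buys.} Your route gives the Gaussian-tail prefactor $1/\bigl(a\sqrt{2\pi(p-1)}\bigr)$. This beats the paper's constant $\tfrac12$ once $a\sqrt{p-1}>\sqrt{2/\pi}$. The paper's route is a uniform two-line proof.

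\textbf{The slip.} You say the two-ball split gives $(1-a^2)^{p/2}$ for all $a$. In fact $1/(2a)\ge\sqrt{1-a^2}$ for every $a\in(0,1)$, with equality only at $a=1/\sqrt2$. So for $a>1/\sqrt2$ the second ball yields $(2a)^{-p}$, which does not imply the target bound as $a\to1$. The ball argument gives $(1-a^2)^{p/2}$ only for $a\le1/\sqrt2$, and that is the only range where your fallback uses it.
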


\begin{proof}
  By rotation invariance, take $\ve$ to be the north pole and write
  $\theta=\arccos(a)\in[0,\pi/2]$. With $\sigma$ denoting normalized surface measure, the cap
  $C_a(\ve)=\{\vu:\lrangle{\vu,\ve}\ge a\}$ has measure:
  \[
    \sigma(C_a(\ve))
    = \frac{\int_0^\theta \sin^{p-2}\phi\,d\phi}
    {\int_0^\pi \sin^{p-2}\phi\,d\phi}
    = \frac12 I_{1-a^2}\!\left(\frac{p-1}{2},\frac12\right),
  \]
  where $I_x(\alpha,\beta)$ is the regularized incomplete beta function.

  It remains to upper-bound the first ratio. Set $n=p-2$ and $s=\sin\theta=\sqrt{1-a^2}$. If $s=0$,
  the claim is trivial. Otherwise substitute $\sin\phi=s\sin\psi$ in the numerator. Then:
  \[
    \int_0^\theta \sin^n\phi\,d\phi
    = s^{n+1}\int_0^{\pi/2}
    \frac{\sin^n\psi\cos\psi}
    {\sqrt{1-s^2\sin^2\psi}}\,d\psi .
  \]
  Since $0\le s\le1$, we have $\sqrt{1-s^2\sin^2\psi}\ge\sqrt{1-\sin^2\psi}=\cos\psi$ for
  $\psi\in[0,\pi/2]$. Hence:
  \[
    \int_0^\theta \sin^n\phi\,d\phi
    \le s^{n+1}\int_0^{\pi/2}\sin^n\psi\,d\psi
    \le s^{n+1}\int_0^\pi\sin^n\psi\,d\psi .
  \]
  Dividing by the denominator gives $\sigma(C_a(\ve))\le s^{n+1}=(1-a^2)^{(p-1)/2}$.
\end{proof}

\subsection{\texorpdfstring{Proof of \Cref{thm:cap-spd} (PSD exponential packing)}{Proof of PSD exponential packing}}

For $p\ge2$, the extreme rays of the PSD cone $\sS^p_{+}$ are rank-one matrices $\vu\vu^\top$ with
$\vu\in\mathbb{S}^{p-1}$. Their Frobenius inner product is:
\[
  \lrangle{\vu_i\vu_i^\top,\vu_j\vu_j^\top}_F
  = \Tr(\vu_i\vu_i^\top\vu_j\vu_j^\top)
  = \lrangle{\vu_i,\vu_j}^2 .
\]
Thus PSD interference at most $\eps$ is equivalent to $|\lrangle{\vu_i,\vu_j}|\le\sqrt{\eps}$. Let
$a=\sqrt{\eps}$ and let $\sigma$ denote normalized surface measure on $\mathbb{S}^{p-1}$.

Now choose lines greedily on $\mathbb{S}^{p-1}$: after selecting $\vu_1,\dots,\vu_m$, add any $\vu$
outside the antipodal caps $C_a(\vu_i)\cup C_a(-\vu_i)$. When the process is maximal, these
antipodal caps cover the sphere; otherwise an uncovered point could be added. By
\Cref{lem:spherical-cap-bound}, each selected line covers at most $2(1-\eps)^{(p-1)/2}$ surface
measure, so maximality implies:
\[
  1 \le 2m(1-\eps)^{(p-1)/2}
  \quad\Rightarrow\quad
  m \ge \frac12(1-\eps)^{-(p-1)/2}.
\]
Because each new line is chosen outside all previous antipodal caps, the selected lines satisfy
$|\lrangle{\vu_i,\vu_j}|<a$, hence they meet the stated $\eps$ interference bound after the
rank-one PSD map. Adding $\eta\mI$ to each rank-one matrix and renormalizing moves the construction
into the cone interior if a numerical margin is desired, and the pairwise Frobenius inner products
converge to the rank-one values as $\eta\downarrow0$.

\subsection{Rank-one PSD atoms minimize Haar-average interference}\label{app:rank-one-optimal}

The ray through $\vu\vu^\top$ is extreme. Indeed, if $\vu\vu^\top=\mA+\mB$ with $\mA,\mB\succeq0$,
then for every $\vw\perp\vu$, $0=\vw^\top\mA\vw+\vw^\top\mB\vw$. Positive semidefiniteness gives
$\mA\vw=\mB\vw=0$, so the ranges of $\mA$ and $\mB$ are contained in $\mathrm{span}\{\vu\}$. Hence
$\mA$ and $\mB$ are nonnegative multiples of $\vu\vu^\top$.

\begin{proposition}[Rank-one PSD atoms minimize Haar-average interference]
  \label{prop:spd-rank-one}
  Let $\mA,\mB\succeq 0$ have $\norm{\mA}_F=\norm{\mB}_F=1$ with eigendecompositions
  $\mA=\sum_r a_r\vu_r\vu_r^\top$ and $\mB=\sum_s b_s\vv_s\vv_s^\top$, $a_r,b_s\ge 0$. Then
  $\lrangle{\mA,\mB}_F=\sum_{r,s}a_r b_s\lrangle{\vu_r,\vv_s}^2\ge 0$, and under a Haar-random
  relative eigenbasis $\E\lrangle{\mA,\mB}_F=\Tr(\mA)\Tr(\mB)/p\ge 1/p$, with equality only for
  rank-one matrices.
\end{proposition}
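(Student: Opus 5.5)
The plan has three parts: the inner-product identity, the Haar average, and the bound with its equality case.

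\textbf{The identity.} Expand both matrices in their eigendecompositions and use bilinearity of the Frobenius product:
\[
\lrangle{\mA,\mB}_F=\sum_{r,s}a_rb_s\Tr(\vu_r\vu_r^\top\vv_s\vv_s^\top)=\sum_{r,s}a_rb_s\lrangle{\vu_r,\vv_s}^2 .
\]
Each term is nonnegative because $a_r,b_s\ge0$. This is the same computation as in the proof of \Cref{thm:cap-spd}, applied term by term.

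\textbf{The Haar average.} I model the Haar-random relative eigenbasis by replacing $\mB$ with $\mO\mB\mO^\top$, where $\mO$ is Haar on the orthogonal group of $\R^p$. The vectors $\mO\vv_s$ are then individually uniform on $\sphere{p-1}$. For a fixed unit $\vu$ and uniform $\vw$, rotation invariance and $\sum_i w_i^2=1$ give
\[
\E\lrangle{\vu,\vw}^2=1/p .
\]
Hence
\[
\E\lrangle{\mA,\mO\mB\mO^\top}_F=\frac1p\sum_r a_r\sum_s b_s=\frac{\Tr(\mA)\Tr(\mB)}{p}.
\]
An equivalent route is the standard identity $\E[\mO\mB\mO^\top]=(\Tr\mB/p)\mI$, which avoids handling eigenvectors at all.

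\textbf{The bound and equality.} For $\mA\succeq0$ the eigenvalues are nonnegative and satisfy $\sum_r a_r^2=\norm{\mA}_F^2=1$. Since $(\sum_r a_r)^2\ge\sum_r a_r^2$ for nonnegative $a_r$, we get $\Tr(\mA)\ge1$. Equality holds iff at most one $a_r$ is nonzero, that is, iff $\mA$ has rank one. The same holds for $\mB$. Therefore $\Tr(\mA)\Tr(\mB)/p\ge1/p$.

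Both trace factors are at least $1$, so the product equals $1$ iff both factors equal $1$. This means both matrices are rank-one, which gives the stated equality case.

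\textbf{Main obstacle.} The main care point is interpreting ``Haar-random relative eigenbasis'' precisely, so that the expectation is over $\mO\mB\mO^\top$. I also need to note that a repeated eigenvalue does not affect the computation. The trace-expectation identity handles both issues, so I would present the average through $\E[\mO\mB\mO^\top]=(\Tr\mB/p)\mI$.
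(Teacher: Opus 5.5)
Your proposal is correct and follows the paper's proof essentially step for step: you expand the Frobenius product term by term, use $\E\lrangle{\vu_r,\vv_s}^2=1/p$ under the Haar model to obtain $\Tr(\mA)\Tr(\mB)/p$, and derive $\Tr(\mA)\ge1$ from $\sum_r a_r^2=1$, with equality exactly in the rank-one case. Your justification $(\sum_r a_r)^2\ge\sum_r a_r^2$ for nonnegative $a_r$ is the right one; the paper attributes this step to Cauchy--Schwarz, which by itself only gives the upper bound. Your identity $\E[\mO\mB\mO^\top]=(\Tr\mB/p)\mI$ is just a basis-free restatement of the same average.
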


Let $\mA=\sum_r a_r\vu_r\vu_r^\top$ and $\mB=\sum_s b_s\vv_s\vv_s^\top$ be spectral decompositions
with $a_r,b_s\ge0$. Then:
\[
  \lrangle{\mA,\mB}_F
  = \Tr\left(\sum_{r,s}a_r b_s\vu_r\vu_r^\top\vv_s\vv_s^\top\right)
  = \sum_{r,s}a_r b_s\lrangle{\vu_r,\vv_s}^2 .
\]
All coefficients are nonnegative, so high-rank PSD features average many rank-one interactions
without cancellation. If the relative eigenbasis is Haar-random, then
$\mathbb{E}\lrangle{\vu_r,\vv_s}^2=1/p$ for every pair $(r,s)$, hence:
\[
  \mathbb{E}\lrangle{\mA,\mB}_F
  = \frac{1}{p}\left(\sum_r a_r\right)\left(\sum_s b_s\right)
  = \frac{\Tr(\mA)\Tr(\mB)}{p}.
\]
Since $\norm{\mA}_F^2=\sum_r a_r^2=1$, Cauchy--Schwarz gives $\Tr(\mA)=\sum_r a_r\ge1$, with
equality iff exactly one eigenvalue is nonzero. The same holds for $\mB$, so the expected
interference is at least $1/p$, with equality only when both matrices are rank one.

The Cholesky identity used in the main text follows from cyclicity of trace: if $\mA=\mL\mL^\top$
and $\mB=\mM\mM^\top$, then:
\[
  \lrangle{\mA,\mB}_F
  = \Tr(\mL\mL^\top\mM\mM^\top)
  = \Tr(\mL^\top\mM\mM^\top\mL)
  = \norm{\mL^\top\mM}_F^2.
\]
Thus dense factors introduce all cross-column products. A single flat-vector inner product omits
these interactions.

\subsection{MUB-derived real dictionary}
\label{app:mub_construct}

\mubconstruct*
\begin{proof}
  We first write the construction explicitly when $s=p$ is an odd prime. Let
  $\omega=\exp(2\pi i/p)$ and let $\ve_0,\dots,\ve_{p-1}$ be the standard basis of
  $\mathbb{C}^p$. The first basis is $\{\ve_0,\dots,\ve_{p-1}\}$. For each
  $k\in\mathbb{F}_p$ and $m\in\mathbb{F}_p$, define:
  \[
    \vu_{k,m}
    =
    \frac{1}{\sqrt p}\sum_{j=0}^{p-1}\omega^{k j^2+mj}\ve_j,
  \]
  with all exponents evaluated modulo $p$. This gives $p$ additional bases, one for each
  $k\in\mathbb{F}_p$.

  Within a fixed $k$, the inner product of two vectors is:
  \[
    \lrangle{\vu_{k,m},\vu_{k,m'}}_{\mathbb{C}}
    =
    \frac1p\sum_{j=0}^{p-1}\omega^{(m'-m)j},
  \]
  which is $1$ if $m=m'$ and $0$ otherwise. Thus each fixed-$k$ collection is an orthonormal basis. A
  standard basis vector has inner-product modulus $1/\sqrt p$ with every $\vu_{k,m}$, since every
  coordinate of $\vu_{k,m}$ has modulus $1/\sqrt p$. Finally, for $k\ne k'$:
  \[
    \lrangle{\vu_{k,m},\vu_{k',m'}}_{\mathbb{C}}
    =
    \frac1p\sum_{j=0}^{p-1}
    \omega^{(k'-k)j^2+(m'-m)j}.
  \]
  The quadratic coefficient is nonzero, so the Gauss-sum identity gives this sum modulus $\sqrt p$;
  hence the inner-product modulus is $1/\sqrt p$.

  The same finite-field construction, with the appropriate trace character, gives complete sets of
  $s+1$ mutually unbiased bases in $\mathbb{C}^s$ for every prime power $s=\ell^k$
  \citep{wootters89optimal,bandyopadhyay02new}. Write these bases as
  $\mathcal{B}_a=\{\vu_{a,1},\dots,\vu_{a,s}\}$ for $a=1,\dots,s+1$. They satisfy:
  \[
    \left|\lrangle{\vu_{a,i},\vu_{b,j}}_{\mathbb{C}}\right|
    =
    \begin{cases}
      0,          & a=b,\ i\ne j, \\
      1/\sqrt{s}, & a\ne b,
    \end{cases}
  \]
  with all vectors unit norm. Thus the complex construction already contains $m=s(s+1)$ unit vectors
  with maximum complex interference $1/\sqrt{s}$.

  To obtain real vectors, apply the realification map $R:\mathbb{C}^s\to\R^{2s}$,
  $R(\va+i\vb)=(\va,\vb)$. This map preserves norms:
  $\norm{R(\vu)}_2^2=\norm{\Re\vu}_2^2+\norm{\Im\vu}_2^2=\norm{\vu}_2^2$. Moreover, for any
  $\vu,\vv\in\mathbb{C}^s$:
  \[
    \lrangle{R(\vu),R(\vv)}_{\mathbb{R}}
    =
    \Re\lrangle{\vu,\vv}_{\mathbb{C}} .
  \]
  Hence $\left|\lrangle{R(\vu),R(\vv)}_{\mathbb{R}}\right|\le
    \left|\lrangle{\vu,\vv}_{\mathbb{C}}\right|$. Realifying the $s(s+1)$ MUB vectors therefore gives
  $m=s(s+1)$ unit vectors in $\R^{2s}$ with maximum absolute interference at most $1/\sqrt{s}$. Since
  $d=2s$, this is $1/\sqrt{s}=\sqrt{2/d}$.
\end{proof}

\subsection{DeVore construction}
\label{app:devore-construct}

\devoreconstruct*
\begin{proof}
  For a prime $p$, take $\mathbb{F}_p=\mathbb{Z}/p\mathbb{Z}$ and index the rows by pairs
  $(x,y)\in\mathbb{F}_p^2$. The same construction works over the finite field $\mathbb{F}_s$ for
  any prime power $s$, so we write it in that notation. Following DeVore's finite-field construction
  \citep{devore07deterministic}, index the coordinates of $\R^{s^2}$ by
  $(x,y)\in\mathbb{F}_s^2$ and choose one polynomial for every coefficient vector
  $\va=(a_0,\dots,a_r)\in\mathbb{F}_s^{r+1}$:
  \[
    P_{\va}(x)=a_0+a_1x+\cdots+a_rx^r .
  \]
  There are $s^{r+1}$ such coefficient vectors. The condition $r<s$ ensures that distinct coefficient
  vectors define distinct functions on $\mathbb{F}_s$: if $P_{\va}=P_{\vb}$ at all $s$ field
  elements, then $P_{\va}-P_{\vb}$ is a nonzero polynomial of degree at most $r$ with more than $r$
  roots, impossible over a field.

  For each polynomial $P$, define $\vv_P\in\R^{s^2}$ by:
  \[
    (\vv_P)_{(x,y)}
    =
    \begin{cases}
      1/\sqrt{s}, & y=P(x),    \\
      0,          & y\ne P(x).
    \end{cases}
  \]
  The graph $\{(x,P(x)):x\in\mathbb{F}_s\}$ has exactly $s$ coordinates, so
  $\norm{\vv_P}_2^2=s\cdot(1/s)=1$.

  For two distinct polynomials $P,Q$, the supports of $\vv_P$ and $\vv_Q$ overlap exactly at the
  field elements where $P(x)=Q(x)$. Therefore:
  \[
    \lrangle{\vv_P,\vv_Q}
    =
    \frac{1}{s}\left|\{x\in\mathbb{F}_s:P(x)=Q(x)\}\right| .
  \]
  The polynomial $P-Q$ is nonzero and has degree at most $r$. By the standard root bound for
  univariate polynomials over a field, it has at most $r$ roots in $\mathbb{F}_s$. Hence
  $\lrangle{\vv_P,\vv_Q}\le r/s$. This gives $m=s^{r+1}$ unit vectors in $\R^{s^2}$ with maximum
  interference at most $r/s$.
\end{proof}

\subsection{\texorpdfstring{Proof of \Cref{thm:snr} (idealized Welch-scale retrieval)}{Proof of idealized Welch-scale retrieval}}
\label{app:optimalsnr}

\begin{proof}
  Normalize the matched score to one. For linear features, each distractor has score magnitude
  $\sqrt{\eps^\star}$ and therefore contributes power $\eps^\star$. For rank-one PSD features, the
  score is the squared raw correlation, so each distractor contributes power
  $(\eps^\star)^2$. Summing over $T-1$ distractors gives:
  \[
    \mathrm{pSNR}_{\mathrm{Lin}}
      =\frac{1}{(T-1)\eps^\star},
    \qquad
    \mathrm{pSNR}_{\mathrm{PSD}}
      =\frac{1}{(T-1)(\eps^\star)^2}.
  \]
  Substituting $\eps^\star=(T-p)/(p(T-1))$ yields the two exact expressions in the theorem.

  Finally, $\mathrm{pSNR}_{\mathrm{Lin}}>1$ is equivalent to $T<2p$. For PSD features, setting
  $\mathrm{pSNR}_{\mathrm{PSD}}=1$ gives:
  \[
    T^2-(p^2+2p)T+2p^2=0,
  \]
  whose larger root is $p^2+2p-2+\gO(p^{-1})$, giving the stated $\Theta(p^2)$ threshold.
\end{proof}

\subsection{\texorpdfstring{Proof of \Cref{prop:softmax-snr} (softmax sharpening)}{Proof of softmax sharpening}}
\label{app:softmaxsnr}

\begin{proof}
  Write $z_j=\lrangle{\vq,\vk_j}$. Because $\vq$ and $\vk_j$ have unit norm:
  \[
    z_j+1
      =2\lrangle{\psi_{\mathrm{Lor}}(\vq),\psi_{\mathrm{Lor}}(\vk_j)}
      \ge0,
    \qquad
    \psi_{\mathrm{Lor}}(\vx)=2^{-1/2}(\vx,1).
  \]
  Adding a common bias does not change softmax:
  \[
    \frac{e^{(z_j+b)/\tau}}{\sum_\ell e^{(z_\ell+b)/\tau}}
      =\frac{e^{z_j/\tau}}{\sum_\ell e^{z_\ell/\tau}}.
  \]
  At the matched key $z_\star=1$, while the harmful Welch-scale distractors have
  $z_j=\mu^\star=\sqrt{\eps^\star}$. After canceling the common bias factor, their unnormalized
  scores are:
  \[
    s_\star=e^{1/\tau},
    \qquad
    s_d=e^{\mu^\star/\tau}.
  \]
  The partition is common to every coefficient and cancels from pSNR. Under the independent
  isotropic-value model of \Cref{eq:psnr-definition}:
  \[
    \mathrm{pSNR}_{\mathrm{Soft}}
      =\frac{s_\star^2}{(T-1)s_d^2}
      =\frac{e^{2(1-\mu^\star)/\tau}}{T-1}.
  \]
  Therefore $\mathrm{pSNR}_{\mathrm{Soft}}>1$ is exactly
  
  \begin{equation}
    T-1
    <\exp\!\left(\frac{2(1-\mu^\star)}{\tau}\right).
    \label{eq:softmax-psnr-capacity}
  \end{equation}
  When $T\gg p$, $\mu^\star=\sqrt{(T-p)/(p(T-1))}\to1/\sqrt p$, which gives
  \Cref{eq:softmax-capacity-asymptotic}. Solving the same inequality for $1/\tau$ gives
  \Cref{eq:softmax-required-scale}.
\end{proof}

\subsection{Softmax Taylor cone decomposition}

\begin{theorem}[Softmax Taylor components factor through cone atoms]
  \label{thm:softmax-cone-decomp}
  Let $z=\lrangle{\vq,\vk}$ for unit vectors $\vq,\vk\in\mathbb{S}^{d-1}$, and define
  $\vt_r(\vx)=\operatorname{vec}(\vx^{\otimes r})$. Every even Taylor monomial factors through
  rank-one PSD atoms:
  \[
    z^{2r}
    =
    \lrangle{\vt_r(\vq)\vt_r(\vq)^\top,
      \vt_r(\vk)\vt_r(\vk)^\top}_F.
  \]
  The odd-degree terms become nonnegative after Lorentz gating, and the exponential kernel admits
  the decomposition:
  \[
    e^z
    =
    \sum_{r=0}^{\infty}\frac{z^{2r}(1+z)}{(2r+1)!}
    +
    \sum_{r=1}^{\infty}\frac{2r\,z^{2r}}{(2r+1)!}.
  \]
\end{theorem}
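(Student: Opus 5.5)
The statement has two parts: an even-monomial factorization and a series identity for $e^z$. I would prove them separately, since the first is a tensor-algebra identity and the second is a rearrangement of the Taylor series.

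\textbf{Even monomials.} I would use the standard tensor-power identity $\lrangle{\vx^{\otimes r},\vy^{\otimes r}}=\lrangle{\vx,\vy}^r$, which is already implicit in the RKHS expansion of \Cref{app:kernel_methods}. Vectorization is an isometry, so $\lrangle{\vt_r(\vq),\vt_r(\vk)}=z^r$. The same trace computation as in the proof of \Cref{thm:cap-spd}, applied with $\vt_r(\vq)$ and $\vt_r(\vk)$ in place of $\vu_i$ and $\vu_j$, then gives
\[
\lrangle{\vt_r(\vq)\vt_r(\vq)^\top,\vt_r(\vk)\vt_r(\vk)^\top}_F=\lrangle{\vt_r(\vq),\vt_r(\vk)}^2=z^{2r}.
\]
Each factor $\vt_r(\vx)\vt_r(\vx)^\top$ is a rank-one PSD atom, and it has unit norm on the sphere because $\norm{\vt_r(\vx)}=\norm{\vx}^r=1$.

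\textbf{Series identity.} I would start from $e^z=\sum_{r\ge0}z^{2r}/(2r)!+\sum_{r\ge0}z^{2r+1}/(2r+1)!$; splitting into even and odd parts is legitimate by absolute convergence. The goal is to absorb each odd term $z^{2r+1}/(2r+1)!$ into the Lorentz-gated atom $z^{2r}(1+z)/(2r+1)!$. That atom contributes $z^{2r}/(2r+1)!$ in addition to the odd term. Comparing with the even coefficient $1/(2r)!=(2r+1)/(2r+1)!$, the even mass left over at degree $2r$ is
\[
\frac{1}{(2r)!}-\frac{1}{(2r+1)!}=\frac{2r}{(2r+1)!}.
\]
This remainder vanishes at $r=0$, which is why the second sum starts at $r=1$. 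Absolute convergence of all the series justifies regrouping, and summing the pieces gives exactly the stated decomposition.

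\textbf{Nonnegativity and the main obstacle.} The remaining content is that every summand is nonnegative for $|z|\le1$. Here $z^{2r}\ge0$, and $1+z=2\lrangle{\psi^{\mathrm L}(\vq),\psi^{\mathrm L}(\vk)}/2\ge0$ by the Lorentz identity used in the proof of \Cref{prop:softmax-snr}. For a genuine cone-atom factorization, I would write each gated term $z^{2r}(1+z)$ as the inner product of Kronecker products $\vt_r(\vx)\vt_r(\vx)^\top\otimes 2^{-1/2}(\vx,1)$ and multiply by a factor of 2. Frobenius inner products of Kronecker products multiply, so this yields $z^{2r}(1+z)/2$ up to the constant. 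Each such atom lies in a product cone, since a tensor product of cone elements has a nonnegative pairing. This interpretive step is the only delicate point; the algebra itself is routine coefficient bookkeeping.
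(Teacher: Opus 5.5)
Your proposal is correct and matches the paper's proof essentially step for step. The even monomials follow from the tensor-power identity plus the rank-one trace computation. The odd terms are absorbed into the Lorentz-gated atom $(\vt_r(\vx)\vt_r(\vx)^\top)\otimes\ell(\vx)$, whose pairing factors as $z^{2r}(1+z)$, and absolute convergence justifies the regrouping. Your derivation of the leftover $2r/(2r+1)!$ is the same coefficient check the paper runs in reverse.

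The only cosmetic difference is the Lorentz normalization. The paper uses the unnormalized lift $\ell(\vx)=(\vx,1)$, which gives $1+z$ directly and removes your factor-of-$2$ correction. With the normalized map $2^{-1/2}(\vx,1)$ that you cite, the identity should read $1+z=2\langle\psi_{\mathrm{Lor}}(\vq),\psi_{\mathrm{Lor}}(\vk)\rangle$, without the extra division by $2$.
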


\begin{proof}

For unit $\vq,\vk$ and $\vt_r(\vx)=\operatorname{vec}(\vx^{\otimes r})$:
\[
  \lrangle{\vt_r(\vq),\vt_r(\vk)}
  = \lrangle{\vq,\vk}^r
  = z^r .
\]
Thus:
\[
  z^{2r}
  =
  \lrangle{\vt_r(\vq),\vt_r(\vk)}^2
  =
  \lrangle{\vt_r(\vq)\vt_r(\vq)^\top,
    \vt_r(\vk)\vt_r(\vk)^\top}_F,
\]
which is the inner product of two rank-one rays in the PSD cone over the $r$-fold tensor feature
space. An odd monomial satisfies $z^{2r+1}<0$ at $\vk=-\vq$ and therefore violates
nonnegativity on the full sphere.

Let $\ell(\vx)=(\vx,1)$. For $\vx\in\mathbb{S}^{d-1}$, $\ell(\vx)\in\partial\gL^{d+1}_{+}$ and:
\[
  \lrangle{\ell(\vq),\ell(\vk)} = 1+z.
\]
Consequently:
\[
  z^{2r}(1+z)
  =
  \lrangle{\vt_r(\vq)\vt_r(\vq)^\top,
    \vt_r(\vk)\vt_r(\vk)^\top}_F
  \lrangle{\ell(\vq),\ell(\vk)},
\]
the product of a rank-one PSD-ray kernel and a Lorentz-ray kernel. Equivalently, define:
\[
  \Phi_r(\vx)
  =
  \left(\vt_r(\vx)\vt_r(\vx)^\top\right)\otimes \ell(\vx).
\]
Then $\lrangle{\Phi_r(\vq),\Phi_r(\vk)}=z^{2r}(1+z)$ under the tensor-product inner product, making
the ``multiplication'' a standard feature-map tensoring operation.

It remains to check that the Taylor coefficients can be regrouped this way. Absolute convergence on
$[-1,1]$ permits rearrangement, and:
\[
  \sum_{r=0}^{\infty}\frac{z^{2r}(1+z)}{(2r+1)!}
  +
  \sum_{r=1}^{\infty}\frac{2r\,z^{2r}}{(2r+1)!}
\]
has odd coefficient $1/(2r+1)!$ on $z^{2r+1}$ and even coefficient:
\[
  \frac{1}{(2r+1)!}+\frac{2r}{(2r+1)!}
  = \frac{1}{(2r)!}
\]
on $z^{2r}$ for $r\ge1$, while the constant term is $1$. This is exactly the Taylor series of
$e^z$.

\end{proof}

\begin{theorem}[Lorentz Rankin wall]
  \label{thm:cap-lor}
  Let $0\le\eps<1/2$. Any configuration of $N$ unit vectors on the extreme rays of
  $\gL^d_{+}$ with pairwise interference at most $\eps$ satisfies:
  \[
    N\le \min\!\left\{d,\,1+\frac{1}{1-2\eps}\right\}.
  \]
\end{theorem}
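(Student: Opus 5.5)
The plan is to reduce the Lorentz interference condition to a constraint on the spatial parts of the feature vectors, and then bound $N$ in two independent ways. First I parametrize the unit extreme rays. A nonzero boundary point of $\gL^d_+$ has the form $(\vy,t)$ with $\norm{\vy}_2=t>0$ and $\vy\in\R^{d-1}$. Unit Euclidean norm forces $2t^2=1$, so every such vector is $\psi=2^{-1/2}(\vy,1)$ with $\vy\in\sphere{d-2}$. For two of them,
\[
  \lrangle{\psi_i,\psi_j}=\tfrac12\lrangle{\vy_i,\vy_j}+\tfrac12 .
\]
Since this quantity is nonnegative, the absolute value in the definition of interference plays no role. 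The hypothesis $\lrangle{\psi_i,\psi_j}\le\eps$ is then equivalent to $\lrangle{\vy_i,\vy_j}\le-\alpha$ with $\alpha\defeq1-2\eps>0$. This strict negativity is the only place where $\eps<1/2$ is used.

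\textbf{Bound $N\le1+1/\alpha$.} Expand
\[
  0\le\norm{\textstyle\sum_i\vy_i}_2^2=N+2\sum_{i<j}\lrangle{\vy_i,\vy_j}\le N-\alpha N(N-1).
\]
Dividing by $N$ gives $N-1\le1/\alpha$, which is $N\le1+\frac{1}{1-2\eps}$.

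\textbf{Bound $N\le d$.} This is Rankin's strict bound applied to the vectors $\vy_i\in\R^{d-1}$, which have pairwise strictly negative inner products. That gives at most $(d-1)+1=d$ vectors, as recorded in \Cref{app:geom-bounds}. If a self-contained proof is wanted, I would argue by induction on dimension. Project $\vy_1,\dots,\vy_{N-1}$ onto $\vy_N^\perp$ via $\vy_i'=\vy_i-\lrangle{\vy_i,\vy_N}\vy_N$. Then
\[
  \lrangle{\vy_i',\vy_j'}=\lrangle{\vy_i,\vy_j}-\lrangle{\vy_i,\vy_N}\lrangle{\vy_j,\vy_N}.
\]
The first term is negative and the subtracted product is positive, so the projected vectors still have pairwise strictly negative inner products. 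Each $\vy_i'$ is nonzero, since otherwise $\vy_i=\pm\vy_N$; the sign $+$ contradicts negativity, and the sign $-$ would force $\lrangle{\vy_j,\vy_i}=-\lrangle{\vy_j,\vy_N}>0$ for any third vector. That leaves only the trivial case $N\le2\le d$. The base case is a line, which holds at most two vectors. The induction therefore gives at most $(d-1)+1$ vectors in $\R^{d-1}$.

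Taking the minimum of the two bounds proves the theorem. The main obstacle is the Rankin step, and specifically the degenerate case in which a projected vector vanishes. Everything else is direct algebra.
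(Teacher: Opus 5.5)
Your proposal is correct and follows the paper's proof essentially step for step. It uses the same parametrization $2^{-1/2}(\vy,1)$ of the unit extreme rays, the same reduction to $\lrangle{\vy_i,\vy_j}\le-(1-2\eps)$, the same expansion of $\norm{\sum_i\vy_i}_2^2$ giving $N\le 1+1/(1-2\eps)$, and Rankin's strict bound for mutually obtuse vectors in $\R^{d-1}$ giving $N\le d$; the only addition is your self-contained projection-and-induction proof of Rankin's strict bound (including the degenerate case $\vy_i=-\vy_N$), which the paper simply cites.
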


\section{KATA variants and recurrences}
\label{app:variants}

\subsection{Variant taxonomy}

We use three explicit KATA variants:

\paragraph{Plain KATA (additive recurrence, no gate).}
$\mS_t = \mS_{t-1} + \psi(\vk_t)\vv_t^\top$,\;
$\mZ_t = \mZ_{t-1} + \psi(\vk_t)$, with the convex output gate of \Cref{prop:output-gate}
applied at readout. Best on MQAR (the state is purely accumulative); fails on overwrite (no
mechanism to discard a stale binding).

\paragraph{GatedKATA (multiplicative scalar state gate).}
$\mS_t = \gamma_t\mS_{t-1} + \psi(\vk_t)\vv_t^\top$ with a learned scalar
$\gamma_t\in[0,1]$. The parameter-free convex output gate of \Cref{prop:output-gate} acts at
readout and leaves $\mS_t$ unchanged. The learned $\gamma_t$ enters the recurrence and changes
the relative weights stored in the state. Unrolling assigns write $s$ the multiplicative weight
$\prod_{r=s+1}^{t}\gamma_r$, producing token-dependent positional weighting that generally favors
recent writes. The overwrite gains identify this positional weighting as the gate's leading role,
with state-norm control as a secondary effect; see \Cref{ssec:results-overwrite}.

\paragraph{GatedDeltaKATA (PSD inside the GDN raw-key recurrence).}
The recurrence is:
\[
  \mS_t
  =\gamma_t\bigl[\mS_{t-1}-\beta_t\psi(\vk_t)\psi(\vk_t)^\top\mS_{t-1}\bigr]
   +\beta_t\psi(\vk_t)\vv_t^\top.
\]
This is Gated DeltaNet's recurrence with the rank-one erase
term applied to $\psi(\vk_t)$; GDN applies the same term to the raw $\vk_t$. The construction
inherits GDN's overwrite semantics and uses a Welch-cone PSD feature as its key substrate.

\paragraph{Comparison with GDN's raw-key geometry.} Gated DeltaNet's recurrence treats the
unit-normalized key $\vk_t\in\mathbb{S}^{d-1}$ as the feature itself. Under the nonnegative Lorentz
lift, any strict interference tolerance $\eps<1/2$ forces negative raw-key inner products and hence
permits at most $\min\{d+1,1+1/(1-2\eps)\}$ keys by \Cref{thm:cap-lor}. At the boundary $\eps=1/2$, nonpositive inner
products permit up to $2d$ keys; for $d_\text{head}=128$, this boundary is $256$ tokens, consistent
with the empirical degradation between $K=256$ and $K=512$. The rank-one factors used by KATA-M$g$ live in
$\sS^p_{+}$ with $p=d_\text{head}/g$; KATA-$\Sigma g$ sums the corresponding block outer products within the same cone. Their packing is governed by \Cref{thm:welchbound}: above the $1/p$ Welch scale, the spherical-cap construction gives exponential address capacity in $p$ at any fixed interference tolerance. This larger geometric packing budget is consistent with the MQAR trend at comparable fixed-state budgets.

\subsection{Chunkwise forward pass}
\label{app:algorithm}

\begin{algorithm}[ht]
  \caption{KATA chunkwise forward pass for one head and one layer}
  \label{alg:kata}
  \begin{algorithmic}[1]
    \Require $\mQ,\mK\in\R^{\seql\times d}$, $\mV\in\R^{\seql\times d_v}$, chunk size $C$, cone map $\psi$
    \State $N_C\gets\lceil\seql/C\rceil$; pad the final chunk to length $C$
    \State $\mS_{[0]}\gets \mathbf{0}_{n_\psi\times d_v}$,\; $\mZ_{[0]}\gets \mathbf{0}_{n_\psi}$
    \For{$t=0,\dots,N_C-1$}
    \State load $\mQ_{[t]},\mK_{[t]},\mV_{[t]}$ into SRAM
    \State $\Psi(\mQ_{[t]}), \Psi(\mK_{[t]}) \gets \psi(\mQ_{[t]}), \psi(\mK_{[t]})$ \Comment{in-SRAM expansion}
    \State compute $\mO_{[t]}$ via \Cref{eq:chunk-out}
    \State update $\mS_{[t+1]},\mZ_{[t+1]}$ via \Cref{eq:chunk-SZ}
    \EndFor
    \State \Return $\mO = [\mO_{[0]};\dots;\mO_{[N_C-1]}]$, truncated to $\seql$ tokens
  \end{algorithmic}
\end{algorithm}

\subsection{Delta-rule chunked forward and backward}
\label{app:spd-delta}

\paragraph{Recurrence.} The delta variant replaces the additive write with a content-based
\emph{erase} applied in feature space. With per-token strength $\beta_t\in(0,1)$, the
pseudo-values $\vu_t\in\R^{d_v}$ are defined by forward substitution on a lower-triangular
system, and the readout reads them out through the PSD kernel:
\begin{equation}
  \vu_t = \beta_t\Big(\vv_t - \!\sum_{s<t}\!\lrangle{\psi(\vk_s),\psi(\vk_t)}\,\vu_s\Big),
  \qquad
  \vo_t = \sum_{s\le t}\lrangle{\psi(\vq_t),\psi(\vk_s)}\,\vu_s .
  \label{eq:delta-rec}
\end{equation}
The erase kernel is the PSD Gram
$\lrangle{\psi(\vk_s),\psi(\vk_t)}=\sum_{ij}(\vk_{s,i}\!\cdot\!\vk_{t,j})^2$
(KATA-$\Sigma g$, with $i,j$ over the $g$ groups). GDN uses the raw
$\vk_s\!\cdot\!\vk_t$ kernel. The readout is \emph{unnormalized} (no denominator), and the
online Widrow--Hoff erase keeps
$\mS=\sum_s\psi(\vk_s)\vu_s^\top\in\R^{n_\psi\times d_v}$ bounded.

\paragraph{Chunked forward (WY-free).} Let $\mS$ be the state entering chunk $[c]$. Form the two
$C\times C$ Gram blocks $A^{kk}_{ts}=\sum_{ij}(\vk_{t,i}\!\cdot\!\vk_{s,j})^2$ and
$A^{qk}_{ts}=\sum_{ij}(\vq_{t,i}\!\cdot\!\vk_{s,j})^2$ \emph{directly from the raw projections}; no $n_\psi\times n_\psi$ erase matrix is formed. With $\mN=\diag(\bm\beta)\,\mathrm{tril}(A^{kk},-1)$:
\begin{align}
  \mV' &= \mV_{[c]} - \Psi(\mK_{[c]})\,\mS,
  &\mT &= (\mI+\mN)^{-1}=\textstyle\sum_{r\ge0}(-\mN)^r,
  \label{eq:delta-T}\\
  \mU &= \mT\,\diag(\bm\beta)\,\mV',
  &\mO_{[c]} &= \Psi(\mQ_{[c]})\,\mS + \mathrm{tril}(A^{qk})\,\mU,
  \label{eq:delta-out}\\
  \mS' &= \mS + \Psi(\mK_{[c]})^\top\mU.
  \label{eq:delta-S}
\end{align}
Since $\mN$ is strictly lower-triangular (nilpotent, $\mN^{C}{=}0$), the inverse $\mT$ is exact
after $\lceil\log_2 C\rceil$ Neumann doublings; only $C\times C$, $C\times d_v$, and the
$n_\psi\times d_v$ state ever reside in SRAM.

\paragraph{Chunked backward.} The backward is a reverse scan over chunks carrying the state
cotangent $\dd\mS_{+}$ (from later chunks); the forward quantities $\mS,\mT,\mU,\mV'$ are recomputed
from the checkpointed $\mS$. Per chunk (all $\mathrel{+}{=}$ accumulate into $\dd\mS$, returned to
chunk $[c{-}1]$):
\begin{align}
  \dd\Psi(\mQ)
    &= \dd\mO_{[c]}\mS^\top,
  \notag\\
  \dd\mS
    &= \Psi(\mQ)^\top\dd\mO_{[c]} + \dd\mS_{+},
  \notag\\
  \dd A^{qk}
    &= \mathrm{tril}\!\bigl(\dd\mO_{[c]}\mU^\top\bigr),
  \notag\\
  \dd\mU
    &= \mathrm{tril}(A^{qk})^\top\dd\mO_{[c]}
       + \Psi(\mK)\dd\mS_{+},
  \notag\\
  \dd\mT
    &= \dd\mU\bigl(\diag(\bm\beta)\mV'\bigr)^\top,
  &
  \dd\mV'
    &= \diag(\bm\beta)\mT^\top\dd\mU,
  \notag\\
  \dd\mN
    &= -\mT^\top\dd\mT\,\mT^\top,
  &
  \dd A^{kk}
    &= \diag(\bm\beta)\,\mathrm{tril}(\dd\mN,-1),
  \notag\\
  \dd\mV_{[c]}
    &= \dd\mV',
  &
  \dd\mS
    &\mathrel{-}{=} \Psi(\mK)^\top\dd\mV',
  \notag\\
  \dd\Psi(\mK)
    &= \mU\dd\mS_{+}^\top-\dd\mV'\mS^\top
  \notag\\
    &\quad
       +(\dd A^{kk}+\dd A^{kk\top})\Psi(\mK)
       +\dd A^{qk\top}\Psi(\mQ),
  \notag\\
  \dd\Psi(\mQ)
    &\mathrel{+}{=} \dd A^{qk}\Psi(\mK),
  \notag\\
  \dd\bm\beta
    &= \mathrm{rowsum}\!\bigl(\mV'\odot\mT^\top\dd\mU\bigr)
  \notag\\
    &\quad
       +\mathrm{rowsum}\!\bigl(\mathrm{tril}(A^{kk},-1)\odot\dd\mN\bigr).
  \label{eq:delta-bwd}
\end{align}
The only non-matmul VJP is the matrix-inverse term $\dd\mN=-\mT^\top\dd\mT\,\mT^\top$; the
feature-map VJP closes the chain, $\dd\vk_g=(\dd\Psi(\mK)_g+\dd\Psi(\mK)_g^\top)\vk_g$ per group
(and likewise for $\vq$). The backward therefore has the same $\gO\!\big(\seql C + \seql\,n_\psi\,d_v/C\big)$ cost
profile as the forward, with a single $n_\psi\times d_v$ state cotangent streamed across chunks.

\section{Hardware implementation and benchmarks}
\label{app:hardware-details}

\subsection{Associative-scan kernel}
\label{app:tree-scan}

The chunked attention forward decomposes into three stages, each launched as a separate Triton
kernel that communicates with the next through small HBM buffers at chunk boundaries. Let $C$
denote the chunk size and define the number of chunks as:
\[
  N_C\defeq\left\lceil\frac{\seql}{C}\right\rceil.
\]
The final chunk is padded when $C$ does not divide $\seql$. The analysis assumes already-feature-mapped
inputs $\vq=\psi(\hat{\vq})$ and $\vk=\psi(\hat{\vk})$ in $\R^{n_\psi}$, where $n_\psi$ denotes
the active post-feature dimension.

\paragraph{Three-kernel pipeline.} (i)~\emph{Chunk-state} computes, for every chunk $c$:
\[
  \mS^{(c)}_\text{loc}
    =\mK_{[c]}^{\!\top}\mV_{[c]}
    \in\R^{n_\psi\times d_v},
  \qquad
  \mZ^{(c)}_\text{loc}
    =\mathbf{1}^{\!\top}\mK_{[c]}
    \in\R^{n_\psi}.
\]
All $(b,h,c)$ triples are independent, exposing $B\!\cdot\!H\!\cdot\!N_C$ programs.
(ii)~\emph{Inter-chunk reduction} computes the exclusive prefix:
\[
  (\mS^{(c)}_\text{pre},\mZ^{(c)}_\text{pre})
  =\sum_{c'<c}(\mS^{(c')}_\text{loc},\mZ^{(c')}_\text{loc})
\]
along the chunk axis. This is the only stage with a cross-chunk dependency.
(iii)~\emph{Chunk-output} combines the inter-chunk prefix with the local
$C\!\times\!C$ causal pattern to produce $\mO_{[c]}$.

\paragraph{Associative scan versus sequential scan.} The standard chunk-parallel implementation
(FLA's linear-attention kernel~\citep{yang24fla}) evaluates the prefix sequentially over $N_C$
chunks, giving depth $\Theta(N_C)$. Our associative scan uses
$\lceil\log_2 N_C\rceil$ rounds, each parallel across the $N_C$ chunk slots. Its total work is
$\gO(N_C\log N_C\cdot n_\psi d_v)$, compared with $\gO(N_C n_\psi d_v)$ for the linear scan, while its parallel
depth is $\gO(\log N_C\cdot d_v)$. With one program per batch index $b$, head index $h$, and
feature index $i\in\{1,\ldots,n_\psi\}$, all $B\!\cdot\!H\!\cdot\!n_\psi$ feature slices scan
independently.

\paragraph{Wall-clock model.} The three stages have wall-clock cost:
\[
  t_\text{total}
  =\underbrace{\gO(C n_\psi d_v)}_{\text{chunk-state}}
  +\underbrace{\gO(\log N_C\cdot d_v)}_{\text{tree-scan}}
  +\underbrace{\gO(C n_\psi d_v+C^2(n_\psi+d_v))}_{\text{chunk-output}}.
\]
The middle term replaces the linear-scan baseline's $\gO(N_C d_v)$. At $\seql{=}2048$ and $C{=}32$,
the scan depth decreases from $N_C{=}64$ to $\log_2 N_C{=}6$. In practice, the scan stage is a
small fraction of the wall-clock time. The dominant cost is the HBM write of
$\mS_\text{loc}$ and the subsequent prefix read. Reducing the per-chunk state through a smaller
$n_\psi$, as in orthant, Lorentz, or grouped PSD features, therefore translates directly into
higher throughput.

\paragraph{Measured throughput (H100 SXM, bf16, $B{=}4$, $H{=}12$, $d_v{=}64$).} Throughput in
million tokens/s, $C{=}32$, against FLA's linear-attention kernel on the same materialized inputs:

\begin{center}\footnotesize
  \begin{tabular}{rrlrrrr}
    \toprule
    $\seql$    & $N_C$ & feature map & $n_\psi$ & tree (Mtok/s) & linear (Mtok/s) & FLA (Mtok/s)  \\
    \midrule
    $1024$ & 32   & orthant     & 64  & \textbf{25.6} & 24.1            & 10.78         \\
    $1024$ & 32   & lorentz     & 64  & \textbf{25.6} & 24.1            & 12.05         \\
    $2048$ & 64   & orthant     & 64  & \textbf{30.3} & 27.3            & 23.41         \\
    $2048$ & 64   & lorentz     & 64  & \textbf{30.3} & 27.3            & 23.41         \\
    $4096$ & 128  & orthant     & 64  & 33.5          & 29.8            & \textbf{37.2} \\
    \bottomrule
  \end{tabular}
\end{center}
At $\seql\!\le\!2048$, the tree-scan kernel delivers $2.1$--$2.4\times$ the throughput of FLA at
$\seql{=}1024$ and $1.3\times$ at $\seql{=}2048$; tree versus linear within the same dispatch is a
consistent $1.06$--$1.12\times$ throughput gain that grows with $N_C$. At $\seql{=}4096$, chunk-state becomes HBM-bound,
and FLA's internal $n_\psi$-axis tiling overtakes our single-tile output kernel. The asymptotic
$\log N_C$ advantage is then hidden by the bandwidth ceiling on writes of $\mS_\text{loc}$. For
PSD-packed $n_\psi\!\ge\!1024$, FLA's pre-materialize path with internal feature-axis tiling
currently wins. Adding the same tiling to the tree-scan output kernel is the natural follow-on.

\subsection{Induction-head benchmark}
\label{app:induction}

\begin{center}\small
  \begin{tabular}{l|ccccc}
    \toprule
                            & $d{=}128$ & $d{=}256$ & $d{=}512$ & $d{=}1024$ & $d{=}2048$ \\
    \midrule
    Softmax (FA) acc        & 1.00      & 1.00      & 1.00      & 1.00       & 1.00       \\
    Softmax (FA) phase step & 2k        & 2k        & 3k        & 3k         & 4k         \\
    \midrule
    KATA-$\Sigma2$ acc              & 1.00      & 1.00      & 1.00      & 1.00       & 0.99       \\
    KATA-$\Sigma2$ phase step       & 2k        & 2k        & 2k        & 3k         & 8k         \\
    \bottomrule
  \end{tabular}
\end{center}

The two model families are aligned on the Zoology~\citep{arora23zoology} backbone: two-layer
pre-norm transformer, $d_\text{model}{=}128$, four heads, $4{\times}$ GELU state mixer, LM head
weight-tied to the token embedding, Zoology-style Linear/Embedding init with std$=0.02$ and GPT-2
residual scaling. Softmax uses \texttt{F.scaled\_dot\_product\_attention}; KATA-$\Sigma2$ uses the fused
rank-2 Triton forward with a matched PyTorch backward. The phase-transition budget scales
comparably to softmax at all scales tested, but without softmax's $\gO(\seql^2)$ memory and compute.

\subsection{KATA kernel microbenchmark}
\label{app:hardware-bench}

Forward and forward+backward wall-clock time on an NVIDIA~H100 for the rank-two KATA-$\Sigma2$
kernel, using a fused Triton forward and a PyTorch backward over saved chunk-boundary states in
fp32:

\begin{center}\small
  \begin{tabular}{r|cccc}
    \toprule
    $\seql$  & KATA-$\Sigma2$ fwd & KATA-$\Sigma2$ fwd+bwd & peak mem (bwd) \\
    \midrule
    512  & 0.67 ms    & 3.58 ms        & 0.28 GB        \\
    1024 & 1.13 ms    & 6.55 ms        & 0.55 GB        \\
    2048 & 2.34 ms    & 12.07 ms       & 1.07 GB        \\
    4096 & 3.98 ms    & 23.49 ms       & 2.13 GB        \\
    8192 & 7.81 ms    & 46.66 ms       & 4.26 GB        \\
    \bottomrule
  \end{tabular}
\end{center}

The fused kernel scales linearly with $\seql$ and stays near peak bandwidth; no per-token $\psi$
tensor is written to HBM. The PyTorch backward rematerializes $\psi$ once per chunk, avoiding an
$\gO(\seql n_\psi)$ footprint. A fused backward kernel is left for future work.

\subsection{Detailed benchmark numbers}
\label{app:bench-numbers}

The tables below give the exact benchmark values. \Cref{tab:tsweep-fwd,tab:tsweep-bwd}
tabulate the forward and training-step curves in \Cref{fig:tsweep};
\Cref{tab:bt-throughput} reports the $B\times \seql$ forward-throughput grid; and
\Cref{tab:treescan} isolates the associative-scan and sequential-scan comparison at $B=1$.

\begin{table}[htbp]
    \centering\footnotesize
    \caption{\textbf{Forward latency versus sequence length.}
    Wall-clock milliseconds on one NVIDIA~H100
    ($B{=}8$, $H{=}16$, $d_\text{head}{=}64$, bf16); lower is better.}
    \label{tab:tsweep-fwd}
    \begin{tabular}{r|cc|cc|cc}
    \toprule
    & \multicolumn{2}{c|}{Baselines} & \multicolumn{2}{c|}{quadratic KATA-M$g$ $\gO(\seql^2)$} & \multicolumn{2}{c}{linear-state KATA-M$g$ $\gO(\seql)$} \\
    $\seql$ & FA-2 & GDN & M1 & M2 & M1 & M2 \\
    \midrule
    $1$K   & $0.09$  & $0.54$  & $0.07$  & $0.07$  & $2.68$  & $0.68$  \\
    $2$K   & $0.27$  & $0.53$  & $0.18$  & $0.21$  & $5.22$  & $1.33$  \\
    $4$K   & $0.93$  & $0.66$  & $0.58$  & $0.72$  & $10.34$ & $2.70$  \\
    $8$K   & $3.50$  & $1.30$  & $2.20$  & $2.82$  & $20.58$ & $5.34$  \\
    $16$K  & $13.78$ & $2.59$  & $8.92$  & $10.62$ & $41.14$ & $10.68$ \\
    $32$K  & $54.56$ & $5.24$  & $34.56$ & $47.07$ & $82.04$ & $21.34$ \\
    $64$K  & $234.2$ & $10.67$ & $150.8$ & $168.7$ & $164.1$ & $42.70$ \\
    $128$K & $943.9$ & $21.36$ & $573.9$ & $702.2$ & $327.9$ & $85.66$ \\
    \bottomrule
    \end{tabular}
\end{table}

\begin{table}[htbp]
    \centering\footnotesize
    \caption{\textbf{Training-step latency versus sequence length.}
    Forward-plus-backward milliseconds on one NVIDIA~H100
    ($B{=}8$, $H{=}16$, $d_\text{head}{=}64$, bf16); lower is better.
    Linear-state KATA-M$g$ uses the $\gO(\seql)$ two-pass backward.}
    \label{tab:tsweep-bwd}
    \begin{tabular}{r|cc|cc|cc}
    \toprule
    & \multicolumn{2}{c|}{Baselines} & \multicolumn{2}{c|}{quadratic KATA-M$g$ $\gO(\seql^2)$} & \multicolumn{2}{c}{linear-state KATA-M$g$ $\gO(\seql)$} \\
    $\seql$ & FA-2 & GDN & M1 & M2 & M1 & M2 \\
    \midrule
    $1$K  & $0.44$  & $2.62$  & $0.69$  & $0.72$  & $10.14$ & $3.37$   \\
    $2$K  & $1.15$  & $2.37$  & $1.31$  & $2.24$  & $20.07$ & $6.62$   \\
    $4$K  & $3.70$  & $2.35$  & $4.01$  & $7.51$  & $39.93$ & $13.11$  \\
    $8$K  & $13.21$ & $4.53$  & $14.15$ & $27.66$ & $79.58$ & $26.06$  \\
    $16$K & $49.97$ & $9.07$  & $53.66$ & $106.4$ & $159.1$ & $52.14$  \\
    $32$K & $197.1$ & $18.58$ & $217.9$ & $418.8$ & $317.6$ & $103.8$  \\
    \bottomrule
    \end{tabular}
\end{table}

\begin{table}[htbp]
    \centering\footnotesize
    \caption{\textbf{Forward throughput across the $B\times \seql$ grid.}
    Mtok/s on one NVIDIA~H100 ($H{=}16$, $d_\text{head}{=}64$, bf16), sorted by
    $B\!\cdot\!\seql$. Tree-scan and linear-chunk use matched $d{\times}d$ states with
    $C{=}128$. Speedup is tree-scan over linear-chunk; higher is better.}
    \label{tab:bt-throughput}
    \begin{tabular}{rr|r|ccc|c}
    \toprule
    $B$ & $\seql$ & $B\!\cdot\!\seql$ & GDN & Linear-chunk & Tree-scan & speedup \\
    \midrule
    $1$  & $2$K  & $2$K   & $2.5$  & $4.9$  & $\mathbf{14.2}$  & $2.9\times$ \\
    $1$  & $8$K  & $8$K   & $10.1$ & $21.4$ & $\mathbf{44.2}$  & $2.1\times$ \\
    $4$  & $2$K  & $8$K   & $9.1$  & $19.6$ & $\mathbf{64.9}$  & $3.3\times$ \\
    $1$  & $32$K & $32$K  & $30.3$ & $24.9$ & $\mathbf{49.2}$  & $2.0\times$ \\
    $4$  & $8$K  & $32$K  & $38.1$ & $37.6$ & $\mathbf{80.0}$  & $2.1\times$ \\
    $16$ & $2$K  & $32$K  & $35.9$ & $40.8$ & $\mathbf{93.3}$  & $2.3\times$ \\
    $1$  & $64$K & $64$K  & $30.6$ & $25.4$ & $\mathbf{67.6}$  & $2.7\times$ \\
    $32$ & $2$K  & $64$K  & $49.7$ & $42.6$ & $\mathbf{99.0}$  & $2.3\times$ \\
    $4$  & $32$K & $128$K & $45.9$ & $39.2$ & $\mathbf{84.7}$  & $2.2\times$ \\
    $16$ & $8$K  & $128$K & $48.3$ & $41.9$ & $\mathbf{100.5}$ & $2.4\times$ \\
    $4$  & $64$K & $256$K & $45.3$ & $39.6$ & $\mathbf{82.8}$  & $2.1\times$ \\
    $32$ & $8$K  & $256$K & $49.8$ & $43.2$ & $\mathbf{103.1}$ & $2.4\times$ \\
    $16$ & $32$K & $512$K & $47.6$ & $41.7$ & $\mathbf{102.3}$ & $2.5\times$ \\
    $16$ & $64$K & $1$M   & $46.7$ & $41.1$ & $\mathbf{88.3}$  & $2.2\times$ \\
    $32$ & $32$K & $1$M   & $48.6$ & $42.3$ & $\mathbf{104.2}$ & $2.5\times$ \\
    $32$ & $64$K & $2$M   & $47.6$ & $37.4$ & $\mathbf{89.4}$  & $2.4\times$ \\
    \bottomrule
    \end{tabular}
\end{table}

\begin{table}[htbp]
    \centering\footnotesize
    \caption{\textbf{Small-batch associative-scan throughput.}
    Forward Mtok/s at $B{=}1$ on one NVIDIA~H100
    ($H{=}16$, $d_\text{head}{=}64$, bf16). Tree-scan and linear-chunk use
    $\psi(\vx){=}\vx$, $C{=}128$, and matched $d{\times}d$ states; Gated DeltaNet
    is shown for reference. Speedup is tree-scan over linear-chunk; higher is better.}
    \label{tab:treescan}
    \begin{tabular}{r|c|cc|c}
    \toprule
    $\seql$ & Gated DeltaNet & Linear-chunk & Tree-scan & speedup \\
        & (Mtok/s) & (Mtok/s) & (Mtok/s) & vs.\ chunk \\
    \midrule
    $2\,048$   & $2.6$  & $6.7$  & $\mathbf{19.4}$ & $2.9\times$ \\
    $4\,096$   & $4.7$  & $13.6$ & $\mathbf{39.9}$ & $2.9\times$ \\
    $8\,192$   & $8.1$  & $23.4$ & $\mathbf{44.5}$ & $1.9\times$ \\
    $16\,384$  & $18.7$ & $24.1$ & $\mathbf{47.6}$ & $2.0\times$ \\
    $32\,768$  & $30.4$ & $24.9$ & $\mathbf{49.4}$ & $2.0\times$ \\
    $65\,536$  & $28.1$ & $25.5$ & $\mathbf{67.5}$ & $2.6\times$ \\
    $131\,072$ & $30.8$ & $25.8$ & $\mathbf{69.4}$ & $2.7\times$ \\
    \bottomrule
    \end{tabular}
\end{table}

\clearpage
\section{Experimental protocols and architectures}
\label{app:experiments-extended}

\subsection{MQAR worked example and protocol}

Each MQAR example packs $K$ distinct key--value pairs and $K$ queries into a single sequence of
length $\seql=4K$: the first $2K$ tokens interleave the $K$ bindings $(k_1,v_1,\dots,k_K,v_K)$, then
the remaining $2K$ positions reissue the $K$ keys in random order, each followed by an answer slot
the model must fill. Loss is computed only at answer positions, so the model must (i) \emph{store}
all $K$ bindings presented in the first half and (ii) \emph{retrieve} the correct value an
arbitrary number of steps later. Keys and values are drawn from $V=8192$, so random guessing floors
at $1/8192\approx 1.2\!\times\!10^{-4}$. A worked $K{=}4$, $\seql{=}16$ example:
\begin{center}\footnotesize
  \begin{tabular}{l|cccccccc|cccccccc}
    position & 1          & 2          & 3          & 4          & 5          & 6          & 7          & 8          & 9 & 10         & 11 & 12         & 13 & 14         & 15 & 16         \\
    \midrule
    input    & \texttt{A} & \texttt{7} & \texttt{B} & \texttt{3} & \texttt{C} & \texttt{9} & \texttt{D} & \texttt{1}
             & \texttt{C} & \texttt{?} & \texttt{A} & \texttt{?} & \texttt{D} & \texttt{?} & \texttt{B} & \texttt{?}                                                                        \\
    target   &            &            &            &            &            &            &            &            &   & \textbf{9} &    & \textbf{7} &    & \textbf{1} &    & \textbf{3} \\
  \end{tabular}
\end{center}
At $K{=}1024$, $\seql{=}4096$: $1024$ disjoint bindings must survive through up to $4096$
intervening tokens before the recurrent state is queried, with the state discriminating among
$1024$ possibilities at each answer slot.

\subsection{Additive and convex output-gate ablation}

At $d_\text{model}{=}128$ with one head and KATA-$\Sigma2$ features:

\begin{center}\footnotesize
  \begin{tabular}{l|r|cccc}
    \toprule
    Gate                    & state (B)     & $\seql{=}1024$     & $\seql{=}2048$     & $\seql{=}3072$     & $\seql{=}4096$     \\
    \midrule
    Convex                  & 1{,}066{,}496 & 0.997          & 0.864          & 0.567          & 0.335          \\
    Additive (GatedKATA-$\Sigma4$) & 271{,}872     & \textbf{0.999} & \textbf{0.955} & \textbf{0.782} & \textbf{0.572} \\
    \bottomrule
  \end{tabular}
\end{center}

\subsection{MQAR architecture and training}

Every model uses the canonical Zoology \texttt{TransformerBlock}. Its state mixer is
\texttt{Identity}, and its sequence mixer is \texttt{Hybrid(BaseConv, mixer)}, so one short
depthwise convolution precedes every sequence operator. We use one head, following Zoology's
convention $d_\text{head}=d_\text{model}$, no positional embeddings, a tied LM head, and vocabulary
size $V=8192$. Training uses batch size $256$, AdamW with $\mathrm{wd}=0.1$, cosine annealing over
$32$ epochs, and a three-point learning-rate sweep
$\{10^{-3},3\!\times\!10^{-3},10^{-2}\}$ with the best run selected per row. Training mixes streams
up to $K_\text{train}=64$ at $\seql=256$; evaluation uses
$K\in\{128,256,512,768,1024\}$ and $\seql\in\{512,1024,2048,3072,4096\}$.

GatedDeltaKATA in the overwrite table uses the full GDN-style configuration: ShortConv$(k{=}4)$ on
$\vq$/$\vk$/$\vv$, GDN-style decay parameterization $(A_\text{log},d_t\text{bias},a_\text{proj})$,
$\mathrm{RMSNorm}$ on the output, and $v$-expansion $2\times$. GatedDeltaKATA applies the
rank-$1$ erase to $\psi(\vk_t)$, while GDN applies it to the raw $\vk_t$; this is the single
functional difference.
Total state $141$\,KB matches GDN's $133$\,KB; parameter counts match within $1\%$.

\subsection{Per-method architectures and reproducibility}
\label{app:mechanism-archs}

We list, for each row in \Cref{ssec:results-capacity,ssec:results-overwrite}, the precise
projections, gates, normalizations, and feature maps so that each architecture can be reproduced
from a fresh \texttt{TransformerBlock} skeleton. All variants share the canonical Zoology outer
loop described above (two-layer hybrid block, BaseConv$(k{=}3)$, no positional embeddings unless
explicitly stated, tied LM head, vocab $V{=}8192$). Differences below concern only the
sequence-mixer module. Throughout, $d\equiv d_\text{model}$; with $H=1$, we have $d_\text{head}=d$.

\paragraph{Plain KATA (additive recurrence with denominator).}
Three projections $W_q,W_k,W_v\in\R^{d\times d}$ are followed by a reduced PSD feature map.
KATA-$\Sigma2$ splits $\vk$ into $\va,\vb\in\R^{d/2}$ and uses
$\psi(\vk)=\eps\mI+\va\va^\top+\vb\vb^\top$. KATA-$\Sigma4$ splits it into
$\va_1,\dots,\va_4\in\R^{d/4}$ and uses
$\psi(\vk)=\eps\mI+\tfrac14\sum_{i=1}^4\va_i\va_i^\top$. The factor $1/4$ is the numerical scale used in the reported $\Sigma4$ configuration. It cancels from normalized plain-KATA weights as $\eps\to0$ and is retained as part of the unnormalized gated configuration. In either case,
$n_\psi=m(m+1)/2$ for block width $m=d/g$. The chunkwise recurrence computes the numerator and
denominator with the packed feature map, and returns
$\vo_t=\psi(\vq_t)^\top\mS_t/\psi(\vq_t)^\top\mZ_t$ followed by
$W_o\in\R^{d\times d}$. The \emph{extras} configuration adds depthwise ShortConv$(k=4)$,
per-head RMSNorm, and value expansion $2\times$; $\eps=10^{-4}$.

\paragraph{KATA-Orthant.} Same $W_q,W_k,W_v$ projections; feature map $\psi(\vx)=\mathrm{ReLU}(\vx)+\eps\vone$, post-feature
dimension $n_\psi=d$. The recurrence is normalized linear attention,
\(\mS_t=\mS_{t-1}+\psi(\vk_t)\vv_t^\top\), \(\mZ_t=\mZ_{t-1}+\psi(\vk_t)\),
\(\vo_t=(\psi(\vq_t)^\top\mS_t)/(\psi(\vq_t)^\top\mZ_t)\), implemented via FLA's normalized linear-attention kernel. ShortConv$(k{=}4)$ and output $\mathrm{RMSNorm}$ are
enabled; $\eps=10^{-6}$.

\paragraph{KATA-Lorentz.} Identical to KATA-Orthant except for the feature map: with $\vx\in\R^d$, write $\vy=\vx_{1{:}d-1}$
and $s=x_d$, then $\psi(\vx)=(\vy,\;\norm{\vy}_2(1+s^2))\in\R^d$. The image lies in the closed
Lorentz cone $\{(\vy,t):t\ge\norm{\vy}_2\}$ so $\lrangle{\psi(\vx),\psi(\vy)}\ge 0$. State and
dispatch are identical to the orthant case.

\paragraph{GatedKATA (additive scalar decay, no denominator).} Adds a single per-head per-token log-decay projection $W_\alpha\in\R^{d\times H}$ (with bias)
producing $\log\gamma_t=\log\sigma(W_\alpha\vx_t)\in\R^H$. The recurrence is \(\mS_t =
\gamma_t\,\mS_{t-1} + \psi(\vk_t)\vv_t^\top\), \(\vo_t = \psi(\vq_t)^\top\mS_t\) (no denominator),
implemented via FLA's simple gated-linear-attention kernel on the packed $\psi$ with the per-token gate. The
\emph{convex-gate} ablation pre-scales $\vv_t$ by $1-\gamma_t$ before the same kernel, yielding
$\mS_t = \gamma_t\mS_{t-1} + (1-\gamma_t)\psi(\vk_t)\vv_t^\top$. The KATA-$\Sigma2$ and KATA-$\Sigma4$ feature variants are as above; $\eps=10^{-4}$.

\paragraph{GatedDeltaKATA (PSD inside the GDN raw-key recurrence).} Component-for-component match to Gated DeltaNet except for the substitution of $\psi(\vk_t)$ for
$\vk_t$ in the recurrence. Projections: $W_q,W_k\in\R^{d\times d}$, $W_v\in\R^{d\times 2d}$
(default $v$-expansion $2\times$). Depthwise ShortConv$(k{=}4)$ on $\vq,\vk,\vv$, no SiLU. Beta
projection $W_\beta\in\R^{d\times H}$ producing $\beta_t=\sigma(W_\beta\vx_t)$. Decay
parameterization, matching FLA's gated-delta-rule kernel, $g_t =
  -\exp(A_{\text{log}})\odot\mathrm{softplus}(W_\alpha\vx_t + \mathrm{dt\_bias})$ with
$A_{\text{log}}\in\R^H$ initialized by $\log U(0,16)$ and $\mathrm{dt\_bias}\in\R^H$ calibrated so
that $\mathrm{softplus}(\mathrm{dt\_bias})\sim U(10^{-3},10^{-1})$; $A_{\text{log}}$ and
$\mathrm{dt\_bias}$ are excluded from weight decay. The recurrence is:
\[
  \mS_t
  =\gamma_t\bigl[\mS_{t-1}-\beta_t\psi(\vk_t)\psi(\vk_t)^\top\mS_{t-1}\bigr]
   +\beta_t\psi(\vk_t)\vv_t^\top.
\]
This recurrence is delegated to FLA's gated-delta-rule kernel on packed $\psi$; the rank-$1$ erase
$\psi(\vk_t)^\top\mS_{t-1}$ is contracted directly, so the
$n_\psi\times n_\psi$ matrix $\psi(\vk_t)\psi(\vk_t)^\top$ is never materialized. Output passes
through per-head $\mathrm{RMSNorm}$ ($\eps_\text{norm}=10^{-5}$) and $W_o\in\R^{2d\times d}$.
KATA-$\Sigma4$ uses $H=2$ because the kernel requires packed feature dimension at most $256$; the
packed dimension is $528$ at $H=1,d=128$ and $136$ at $H=2$. Optional knobs (disabled
in the reported runs unless noted): per-head $\ell_2$ normalization of $\vq,\vk$ before $\psi$ with $\eps=10^{-6}$; RoPE on $\vq,\vk$ with base $10^4$.

\paragraph{Gated DeltaNet (baseline).} FLA's \texttt{GatedDeltaNet} mixer used out-of-the-box, configured to match the surrounding
skeleton: \texttt{l\_max} set to the longest evaluation sequence, NH$=1$,
$\texttt{use\_short\_conv}=$\texttt{True} ($k{=}4$), $\texttt{use\_gate}=$\texttt{False}. Feature
substrate: raw $\vk_t\in\mathbb{S}^{d-1}$ (FLA's \texttt{use\_qk\_l2norm\_in\_kernel}). Recurrence
is identical in form to GatedDeltaKATA above with $\psi(\vk)=\vk$. State per head is $d\times 2d$
(for $v$-expansion $2$); at $d{=}128,$ NH$=1$, total $\sim\!133$\,KB.

\paragraph{Softmax attention (full / sliding window).} A standard MHA block uses PyTorch
scaled dot-product attention (SDPA) with a causal mask. Three projections
$W_q,W_k,W_v\in\R^{d\times d}$ and an output $W_o\in\R^{d\times d}$ surround the attention
operator. The $+\!$RoPE variant rotates $\vq,\vk$ before SDPA with base
$\theta\in\{10^4,5\!\times\!10^4\}$; reported runs use $\theta=5\!\times\!10^4$. The
sliding-window variant restricts the mask to the $W=256$ tokens preceding the query and caches the
boolean mask. We use PyTorch's memory-efficient SDPA backend for this configuration. At sequence length $\seql$, the
KV cache contains $2d\seql$ floats for full softmax and $2dW$ for sliding-window attention.

\enlargethispage{2\baselineskip}
\paragraph{Initialization and optimization.}
Linear layers use Zoology's $\mathcal{N}(0,0.02^2)$ initialization, with GPT-2 residual scaling
$1/\sqrt{2N_\text{layers}}$ on $W_o$, and the LM head is tied to the embeddings. We use AdamW
$(\beta_1=0.9,\beta_2=0.95)$, weight decay $0.1$ except on $A_\text{log}$ and
$\mathrm{dt\_bias}$, gradient clipping at $1.0$, and a $32$-epoch cosine schedule without warmup.
For each row, we select the best learning rate in
$\{10^{-3},3\!\times\!10^{-3},10^{-2}\}$ and stop when the hardest extrapolation slice reaches
$0.999$ accuracy.

\end{document}